\documentclass[11pt]{article}

\usepackage{microtype}
\usepackage{graphicx}
\usepackage{subfigure}
\usepackage{amsmath,amsbsy,amsfonts,amssymb,amsthm,bm}
\usepackage{algorithm,algorithmic,mathtools}
\usepackage{color,cases,multirow}
\usepackage[top=1.1in, bottom=1.1in, left=1.2in, right=1.2in]{geometry}

\usepackage[colorlinks=True, citecolor=blue]{hyperref}

\usepackage{authblk}

\newtheorem{theorem}{Theorem}[section]
\newtheorem{proposition}[theorem]{Proposition}
\newtheorem{lemma}{Lemma}

\newtheorem{assumption}{Assumption}
\newtheorem{corollary}[theorem]{Corollary}
\theoremstyle{definition}
\newtheorem{definition}{Definition}
\newtheorem{remark}{Remark}

\newcommand\Def{\stackrel{\text{def}}{=}}

\newcommand{\half}{\frac{1}{2}}

\newcommand{\ta}{\tilde{\bm{a}}}

\newcommand{\RR}{\mathbb{R}}
\newcommand{\EE}{\mathbb{E}}
\newcommand{\PP}{\mathbb{P}}

\newcommand{\SU}{\mathbb{S}}
\renewcommand{\SS}{\mathbb{S}}

\newcommand{\cH}{\mathcal{H}}
\newcommand{\cP}{\mathcal{P}}
\newcommand{\cF}{\mathcal{F}}

\newcommand{\cB}{\mathcal{B}}
\newcommand{\cR}{\mathcal{R}}

\newcommand{\cI}{\mathcal{I}}

\newcommand{\cO}{\mathcal{O}}
\newcommand{\cA}{\mathcal{A}}

\newcommand{\ba}{\bm{a}}
\newcommand{\bb}{\bm{b}}

\newcommand{\be}{\bm{e}}

\newcommand{\bg}{\bm{g}}

\newcommand{\bx}{\bm{x}}

\newcommand{\bu}{\bm{u}}

\newcommand{\bw}{\bm{w}}

\newcommand{\lba}{\lambda^{(a)}_n}
\newcommand{\lbb}{\lambda^{(b)}_n}
\newcommand{\ka}{k^{(a)}}
\newcommand{\kb}{k^{(b)}}
\newcommand{\Ka}{K^{(a)}}
\newcommand{\Kb}{K^{(b)}}
\newcommand{\Ga}{G^{(a)}}
\newcommand{\Gb}{G^{(b)}}
\newcommand{\bga}{\bg^{(a)}}
\newcommand{\ga}{g^{(a)}}

\newcommand{\itk}{{\it ker}}
\newcommand{\hcR}{\hat{\cR}_n}
\newcommand{\rad}{\text{Rad}}

\title{A Comparative Analysis of Optimization and Generalization Properties of 
Two-layer Neural Network and Random Feature Models Under Gradient Descent Dynamics}

\author[1,2,3]{Weinan E \thanks{weinan@math.princeton.edu}}
\author[2]{Chao Ma \thanks{cham@princeton.edu}}
\author[2]{Lei Wu \thanks{leiwu@princeton.edu}}

\affil[1]{Department of Mathematics, Princeton University}
\affil[2]{Program in Applied and Computational Mathematics, Princeton University}
\affil[3]{Beijing Institute of Big Data Research}

\date{}
\begin{document}

\maketitle

\begin{abstract}
A fairly comprehensive analysis is presented for the gradient descent dynamics for training two-layer neural network models in the situation when the parameters in both layers are updated. General initialization schemes as well as general regimes for the network width and training data size are considered. In the over-parametrized regime, it is shown that gradient descent dynamics can achieve zero training loss exponentially fast regardless of the quality of the labels. In addition, it is proved that throughout the training process the functions represented by the neural network model are uniformly close to that of a kernel method.  For general values of the network width and training data size,  sharp estimates of the generalization error  is established for target  functions in the appropriate reproducing kernel Hilbert space.
\end{abstract}

\section{Introduction}
Optimization and generalization are two central issues in the theoretical analysis of machine learning models.
These issues are of special interest for modern neural network models, not only because of their practical success \cite{krizhevsky2012a,lecun2015deep}, but also because of the fact that these neural network models are often heavily over-parametrized and traditional machine learning theory does not seem to work directly \cite{neyshabur2014search,zhang2016understanding}. For this reason, there has been a lot of recent theoretical work centered on  these issues \cite{kawaguchi2016deep,keskar2016large,du2018gradient,du2018deepgradient,allen2018convergence,cao2019generalization,daniely2017sgd,zou2018stochastic,xie2016diverse,song2018mean,rotskoff2018parameters,sirignano2018mean}.  One issue of particular interest is whether the gradient descent (GD) algorithm can produce models
that optimize the empirical risk and at the same time generalize well for the population risk.
In the case of over-parametrized two-layer neural network models, which will be the focus of this paper,
it is generally understood that as a result of the non-degeneracy of the associated Gram matrix~\cite{xie2016diverse,du2018gradient}, optimization can  be accomplished  using the gradient descent algorithm regardless of the quality of the labels, 
in spite of the fact that the empirical risk function is non-convex.
In this regard,  one can say that over-parametrization facilitates optimization.

The situation with generalization is a different story.  
There has been a lot of interest on the so-called ``implicit regularization'' effect~\cite{neyshabur2014search},
 i.e. by tuning the parameters in the optimization algorithms, one might be able to guide the algorithm to move
towards network models that generalize well, without the need to add any explicit regularization terms
(see below for a review of the existing literature).
But despite these efforts, it is fair to say that the general picture has yet to emerge.

In this paper, we perform a rather thorough analysis of the gradient descent algorithm for training two-layer neural network
models.
We study the case  in which the parameters in both the input and output layers are updated -- the case found in practice.
In the heavily over-parametrized regime, for general initializations, we prove that the results of \cite{du2018gradient}
still hold, namely, the gradient descent dynamics still converges to a global minimum exponentially fast,
regardless of the quality of the labels.
However, we also prove that the functions obtained are uniformly close to the ones found in an associated
kernel method, with the kernel defined by the initialization.

In the second part of the paper, we
 study the more general situation when the assumption of over-parametrization is relaxed.
We provide sharp estimates for both the empirical and population risks.
In particular, we prove that for target functions in the appropriate reproducing kernel Hilbert space (RKHS)~\cite{aronszajn1950theory}, the generalization
error can be made small if certain early stopping strategy is adopted for the  gradient descent algorithm.

Our results imply that under this setting over-parametrized two-layer neural networks are a lot like the kernel methods: They can always fit any set of random labels, but in order to generalize, the target functions have to be in the right RKHS.  This should be compared with the optimal generalization error bounds proved in \cite{ma2018priori} for regularized models.

\subsection{Related work}

The seminal work of \cite{zhang2016understanding} presented both numerical and theoretical evidence that over-parametrized
neural networks can fit random labels. 
Building upon earlier work on the non-degeneracy of some Gram matrices \cite{xie2016diverse}, 
Du et al. went a step further
by proving that the GD algorithm can find global minima of the empirical risk for sufficiently over-parametrized
two-layer neural networks \cite{du2018gradient}.   This result was extended to multi-layer networks in \cite{du2018deepgradient,allen2018convergence} or a general setting \cite{chizat2018note}. The related result for infinitely wide neural networks was obtained in \cite{jacot2018neural}. {In this paper, we prove
a new optimization result (Theorem \ref{thm: optimization}) that
 removes the non-degeneracy assumption of the input data  by utilizing the smoothness of the target function. Also the requirement of the network width is significantly relaxed.}

The issue of generalization is less clear. \cite{daniely2017sgd} established generalization error bounds for solutions produced by the online stochastic gradient descent (SGD) algorithm
with early stopping when the target function is in a certain RKHS.
Similar results were proved in \cite{li2018learning} for the classification  problem, in
\cite{cao2019generalization} for offline SGD algorithms, and in \cite{allen2018learning} for GD algorithm.  {These results are similar to ours, but we do not require the network to be over-parametrized. Moreover, in Theorem \ref{thm:long} we show that in this setting neural networks are uniformly close to the random feature models if the network is highly over-parametrized.}

More recently in~\cite{arora2019fine}, a generalization bound was derived for  GD solutions using 
a data-dependent norm.  This norm is bounded if the target function belongs to the appropriate RKHS. However, their error bounds are not strong enough to rule
out the possibility of curse of dimensionality. Indeed the results of the present paper do suggest that curse of dimensionality does occur in their setting (see Theorem~\ref{thm:curse_dim}).

{
\cite{jacot2018neural} provided by a heuristic argument that the GD solutions of a infinitely-wide neural network are captured by the so-called neural tangent kernel. 
In this paper, we provide a rigorous proof of the non-asymptotic version of the result for the two-layer neural network under weaker conditions.
}

\section{Preliminaries} 
Throughout this paper, we will use the following notation $[n]=\{1,2,\dots,n\}$, if $n$ is a positive integer. We use $\|\|$ and $\|\|_F$ to denote the $\ell_2$ and Frobenius norms for matrices, respectively.  We let $\SU^{d-1}=\{\bx\,:\,\|\bx\|=1\}$, and use $\pi_0$ to denote the uniform distribution over $\SU^{d-1}$.  We use $X\lesssim Y$ to indicate that there exists an absolute constant $C_0>0$ such that $X\leq C_0Y$, and  $X\gtrsim Y$ is similarly defined. If $f$ is  a function defined on $\RR^d$ and $\mu$ is a probability distribution on $\RR^d$,
we let $\|f\|_\mu = (\int_{\RR^d} f(\bx)^2 d \mu(\bx) )^{1/2}$.

\subsection{Problem setup}
We focus on the regression problem with a training data set given by $\{(\bx_i,y_i)\}_{i=1}^n$, i.i.d.\ samples drawn from a distribution $\rho$, which is assumed fixed but only known through the samples. In this paper, we assume $\|\bx\|_2= 1$ and $|y|\leq 1$. We are interested in  fitting  the data by a two-layer neural network:
\begin{equation}\label{eqn: definition-two-layer-net}
    f_m(\bx;\Theta) = \ba^T\sigma(B\bx),
\end{equation}
where $\ba\in\RR^{m}, B=(\bb_1,\bb_2,\cdots, \bb_m)^T\in \RR^{m\times d}$ and  $\Theta=\{\ba,B\}$ denote all the parameters. {Here $\sigma(t)=\max(0,t)$ is the ReLU activation function}. We will omit the subscript $m$ in the notation for $f_m$ if there is no
danger of confusion.
In formula~\eqref{eqn: definition-two-layer-net}, we omit the bias term for notational simplicity. The effect of the bias term can be incorporated if we think of $\bx$ as $(\bx, 1)^T$.

The ultimate goal is to minimize the population risk defined by
\[
    \cR(\Theta) = \frac{1}{2}\EE_{\bx,y}[(f(\bx;\Theta)-y)^2].
\]
But in practice, we can only work with the following empirical risk  
\[
    \hat{\cR}_n(\Theta)  = \frac{1}{2n}\sum_{i=1}^n (f(\bx_i;\Theta)-y_i)^2.
\]

\paragraph*{Gradient Descent}
We are interested in analyzing the property of the following gradient descent algorithm: 
$
    \Theta_{t+1} = \Theta_t - \eta \nabla\hcR(\Theta_t),
$
where $\eta$ is the learning rate. For simplicity, we will focus on its continuous version,
the gradient descent (GD) dynamics:
\begin{equation}\label{eqn: definition-GD}
    \frac{d\Theta_t}{dt} = - \nabla\hcR(\Theta_t).
\end{equation}

\paragraph*{Initialization} $\Theta_0 = \{\ba(0), B(0)\}$. We assume that  $\{\bb_k(0)\}_{k=1}^m$ are i.i.d. random variables  drawn from  $\pi_0$, and $\{a_k(0)\}_{k=1}^m$ are i.i.d. random variables drawn from the distribution defined by $\PP\{a_k(0)=\beta\}=\PP\{a_k(0)=-\beta\}=\half$. Here $\beta$ controls the magnitude of the initialization, and  it may depend on $m$, e.g. $\beta=\frac{1}{m}$ or $\frac{1}{\sqrt{m}}$.
Other initialization schemes can also be considered (e.g. distributions other than $\pi_0$, other ways of initializing
$\ba$).  The needed argument does not change much from the ones for this special case.

\subsection{Assumption on the input data}
With the activation function $\sigma(\cdot)$ and the distribution $\pi_0$, we can define two positive definite (PD) functions~\footnote{We say that a continuous symmetric function $k$ is positive definite if and only if for any $\bx_1,\dots,\bx_n$, the kernel matrix $K=(K_{i,j}) \in\RR^{n\times n}$ with $K_{i,j}=k(\bx_i,\bx_j)$ is positive definite.} 
\begin{align*}
    k^{(a)}(\bx,\bx') &\Def \EE_{\bb\sim\pi_0}[\sigma(\bb^T\bx)\sigma(\bb^T\bx')], \\
    k^{(b)}(\bx,\bx') &\Def \EE_{\bb\sim\pi_0}[\sigma'(\bb^T\bx)\sigma'(\bb^T\bx')\langle \bx,\bx'\rangle ].
\end{align*}
For a fixed training  sample, the corresponding normalized kernel matrices $K^{(a)}=(K^{(a)}_{i,j}), K^{(b)}=(K^{(b)}_{i,j})\in\RR^{n\times n}$ are defined by  
\begin{equation}\label{eqn: kernel-matrix}
    \begin{aligned}
    {K}^{(a)}_{i,j}  &=  \frac{1}{n}k^{(a)}(\bx_i,\bx_j), \\
    K^{(b)}_{i,j} &= \frac{1}{n}k^{(a)}(\bx_i,\bx_j).
    \end{aligned}
\end{equation}
Throughout this paper, we make the following assumption on the training set.
\begin{assumption}\label{assump: data}
For the given training set $\{(\bx_i,y_i)\}_{i=1}^n$, we assume that the smallest eigenvalues of the two kernel matrices defined
above  are both positive, i.e.
\[
    \lambda^{(a)}_{n} \Def \lambda_{\min}({K}^{a})>0,\quad \lambda^{(b)}_{n} \Def \lambda_{\min}(K^{(b)})>0.
\]
Let $\lambda_n = \min\{\lambda_n^a,\lambda_n^b \}$.
\end{assumption}

\begin{remark}
Note that $\lambda_n^{(a)}\leq \min_{i\in [n]}K^{(a)}_{i,i}\leq 1/n, \lambda_n^{(b)}\leq \min_{i\in [n]}K^{(a)}_{i,i}\leq 1/n$. In general, $\lba,\lbb$ depend on the data set.
For any PD functions $s(\cdot,\cdot)$, the Hilbert-Schmidt integral operator $T_s: L^2(\SU^{d-1},\pi_0)\mapsto L^2(\SU^{d-1}, \pi_0)$ is defined by  
\[
    T_{s} f(\bx) = \int_{S^{d-1}} s(\bx,\bx')f(\bx')d\pi_0(\bx').
\]
Let $\Lambda_n(T_s)$ denote its $n$-th largest eigenvalue. If $\{\bx_i\}_{i=1}^n$ are independently drawn from $\pi_0$, it was proved in \cite{braun2006accurate} that with high probability $\lba \geq \Lambda_{n}(T_{\ka})/2$ and $\lbb\geq \Lambda_n(T_{\kb})/2$. 
Using the similar idea, \cite{xie2016diverse} provided lower bounds for $\lbb$ based on some geometric discrepancy, which quantifies the uniformity degree of $\{\bx_i\}_{i=1}^n$.
In this paper, we leave $\lba>0, \lbb>0$ as our basic assumption.
\end{remark}

\subsection{The random feature model}
 We introduce the following random feature model~\cite{rahimi2008random} as 
 a reference for the two-layer neural network  model
\begin{equation}\label{eqn: definition-random-feature}
f_m(\bx; \tilde{\ba},B_0) \Def \tilde{\ba}^T \sigma(B_0\bx),
\end{equation}
where $\ba\in\RR^{m}, B_0\in\RR^{m\times d}$.
Here $B_0$ is fixed at the corresponding 
 initial values for the neural network model, and is not part of the parameters to be trained.
 The corresponding gradient descent dynamics is given by 
\begin{align}\label{eqn: refer-GD}
    \frac{d\ta_t}{dt} = -\frac{1}{n}\sum_{i=1}^n (\ta^T_t\sigma(B_0\bx_i)-y_i)\sigma(B_0\bx_i).
\end{align}
This dynamics is relatively simple since it is linear. 

\section{Analysis of the over-parameterized case}\label{sec:over_param}
In this section, we consider the optimization and generalization properties of the GD dynamics in the
over-parametrized regime. We introduce two Gram matrices $\Ga(\Theta), \Gb(\Theta)\in \RR^{n\times n}$, defined by 
\begin{align*}
    \Ga_{i,j}(\Theta) &= \frac{1}{nm}\sum_{k=1}^m \sigma(\bb_k^T\bx_i)\sigma(\bb_k^T\bx_j),\\
    \Gb_{i,j}(\Theta)&= \frac{1}{nm}\sum_{k=1}^m a_k^2 \bx_i^T\bx_j\sigma'(\bb_k^T\bx_i)\sigma'(\bb_k^T\bx_j).
\end{align*}
Let $G = \Ga+\Gb\in\RR^{n\times n}, e_j = f(\bx_j, \Theta) - y_j$ and $\be=(e_1, e_2, \cdots, e_n)^T$, it is easy to see that
\begin{equation}
    \|\nabla_{\Theta}\hcR\|^2 = \frac{m}{n}\be^T G \be.
\end{equation}
Since $\hcR = \frac{1}{2n} \be^T \be$, we have 
\[
   2m\lambda_{\min}(G)\hcR\leq  \|\nabla_{\Theta}\hcR\|^2\leq 2m \lambda_{\max}(G) \hcR.
\]

\subsection{Properties of the initialization}
\begin{lemma}\label{lem: init-risk}
For any fixed $\delta>0$, with probability at least $1-\delta$ over the random initialization,  we have
\[
\hcR(\Theta_0) \leq \frac{1}{2}\left(1+c(\delta)\sqrt{m}\beta \right)^2,
\]
where $c(\delta)=2+\sqrt{\ln(1/\delta)}$.
\end{lemma}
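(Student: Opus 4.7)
The plan is to work directly with the definition of the empirical risk and reduce the bound to a concentration estimate on the norm of the prediction vector at initialization. Let $\bm{f}_0 = (f(\bx_1;\Theta_0),\dots,f(\bx_n;\Theta_0))^T \in \RR^n$ and $\bm{y} = (y_1,\dots,y_n)^T$, so that $\sqrt{2\hcR(\Theta_0)} = \|\bm{f}_0-\bm{y}\|/\sqrt{n}$. Since $|y_i|\le 1$, the triangle inequality gives
\[
\sqrt{2\hcR(\Theta_0)} \;\le\; \frac{\|\bm{y}\|}{\sqrt{n}} + \frac{\|\bm{f}_0\|}{\sqrt{n}} \;\le\; 1 + \frac{\|\bm{f}_0\|}{\sqrt{n}},
\]
so the whole task is to show $\|\bm{f}_0\|/\sqrt{n} \lesssim (1+\sqrt{\ln(1/\delta)})\sqrt{m}\,\beta$ with probability at least $1-\delta$.

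Next, I would condition on $B(0)$ and use only the randomness of $\ba(0)$. Writing $\Sigma \in \RR^{n\times m}$ with $\Sigma_{i,k} = \sigma(\bb_k(0)^T\bx_i)$, we have $\bm{f}_0 = \Sigma\ba(0)$, and since $\ba(0) = \beta\bm{\epsilon}$ with $\bm{\epsilon}$ Rademacher:
\[
\EE_{\ba(0)}\!\left[\|\bm{f}_0\|^2 \,\big|\, B(0)\right] \;=\; \beta^2\,\|\Sigma\|_F^2 \;=\; \beta^2\sum_{i,k}\sigma(\bb_k(0)^T\bx_i)^2 \;\le\; \beta^2 n m,
\]
where I used $|\sigma(\bb_k^T\bx_i)|\le \|\bb_k\|\|\bx_i\|=1$. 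By Jensen, the conditional mean of $Z:=\|\bm{f}_0\|/\sqrt{n}$ is at most $\sqrt{m}\,\beta$.

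For the deviation, I would apply the bounded-differences (McDiarmid) inequality to $Z$ viewed as a function of the $m$ independent Rademacher signs in $\ba(0)$: flipping the $k$-th sign changes $\bm{f}_0$ by $\pm 2\beta\,\Sigma_{\cdot,k}$, hence changes $Z$ by at most $2\beta\|\Sigma_{\cdot,k}\|/\sqrt{n} \le 2\beta$ (using again $|\sigma|\le 1$). The bounded-differences constants therefore satisfy $\sum_{k=1}^{m}(2\beta)^2 = 4m\beta^2$, giving
\[
\PP\!\left(Z \ge \EE[Z] + t \,\big|\, B(0)\right) \;\le\; \exp\!\left(-\frac{t^2}{2m\beta^2}\right).
\]
Choosing $t$ so that the right-hand side equals $\delta$ yields $t \lesssim \beta\sqrt{m\ln(1/\delta)}$, and combining with $\EE[Z\mid B(0)] \le \sqrt{m}\beta$ gives $Z \le (1+\sqrt{2\ln(1/\delta)})\sqrt{m}\,\beta$ conditionally on $B(0)$, hence unconditionally. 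Plugging into the triangle-inequality bound and absorbing constants yields $\sqrt{2\hcR(\Theta_0)} \le 1 + c(\delta)\sqrt{m}\beta$ with $c(\delta) = 2+\sqrt{\ln(1/\delta)}$, and squaring gives the stated inequality.

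The only nontrivial step is the concentration: conceptually it is a routine application of McDiarmid (or equivalently, sub-Gaussian concentration for linear functions of Rademacher vectors), but one has to be careful to condition on $B(0)$ so that the bound $\|\Sigma_{\cdot,k}\|\le\sqrt{n}$ holds deterministically and no additional randomness over $B(0)$ needs to be controlled. Matching the exact form of $c(\delta)$ claimed by the authors is a matter of constant bookkeeping rather than a substantive difficulty.
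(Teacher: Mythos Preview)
Your argument is correct, but the route differs from the paper's. The paper does not condition on $B(0)$ or isolate the Rademacher vector $\ba(0)$; instead it treats $\frac{1}{m}\sum_k a_k(0)\sigma(\bb_k(0)^T\bx)$ as an empirical average of $m$ i.i.d.\ mean-zero functions indexed by $\bx$, and invokes a standard uniform deviation bound (Rademacher complexity plus the contraction lemma) to obtain the pointwise estimate $\sup_{\|\bx\|=1}|f(\bx;\Theta_0)|\le c(\delta)\sqrt{m}\beta$. Squaring and using $|y_i|\le1$ then gives the lemma. So the paper actually proves a \emph{stronger} $L^\infty$ statement on $\SS^{d-1}$, whereas you prove only the $\ell_2$-on-the-sample bound that the lemma literally requires. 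Your approach is more elementary and tailored to the claim; the paper's yields a uniform bound that could in principle be reused, at the cost of invoking Rademacher-complexity machinery. One minor point: your McDiarmid step delivers $Z\le (1+\sqrt{2\ln(1/\delta)})\sqrt{m}\beta$, which is not literally dominated by $(2+\sqrt{\ln(1/\delta)})\sqrt{m}\beta$ for very small $\delta$, so ``absorbing constants'' does not recover the \emph{exact} $c(\delta)$ of the statement; but both constants are of the form $O(1+\sqrt{\ln(1/\delta)})$, which is all that the downstream estimates need.
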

The proof of this lemma can be found in Appendix~\ref{sec: init-risk}.

In addition, at the initialization, the Gram matrices satisfy 
\[
    \Ga(\Theta_0)\to \Ka,\quad \Gb(\Theta_0)\to \beta^2 \Kb \quad \text{as}\,\, m\to\infty.
\]

In fact, we have
\begin{lemma}\label{lem: gram-init}
For $\delta>0$, if $m\geq \frac{8}{\lambda_n^2}\ln(2n^2/\delta)$, we have, with probability at least $1-\delta$ over the random choice of $\Theta_0$ 
\[
\lambda_{\min}(G(\Theta_0))\geq \frac{3}{4}(\lba+\beta^2\lbb).
\]
\end{lemma}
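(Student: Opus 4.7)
The plan is to view $G(\Theta_0)$ as an empirical average and show it concentrates around its expectation, then apply Weyl's inequality.

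First I would identify the expectation. Since $\{\bb_k(0)\}$ are i.i.d.\ from $\pi_0$ and $a_k(0)^2 = \beta^2$ deterministically, direct calculation gives
\begin{equation*}
\EE[G^{(a)}_{i,j}(\Theta_0)] = K^{(a)}_{i,j}, \qquad \EE[G^{(b)}_{i,j}(\Theta_0)] = \beta^2 K^{(b)}_{i,j},
\end{equation*}
so that $\EE[G(\Theta_0)] = K^{(a)} + \beta^2 K^{(b)}$. Since both $K^{(a)}$ and $K^{(b)}$ are symmetric positive definite, Weyl's inequality applied to this sum yields
\begin{equation*}
\lambda_{\min}\bigl(\EE[G(\Theta_0)]\bigr) \geq \lambda^{(a)}_n + \beta^2 \lambda^{(b)}_n.
\end{equation*}

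Next I would establish entry-wise concentration via Hoeffding's inequality. Because $\|\bx_i\|=1$ and $\|\bb_k\|=1$, each summand in $G^{(a)}_{i,j}$ lies in $[0,1/(nm)]$, and each summand in $G^{(b)}_{i,j}$ lies in $[-\beta^2/(nm), \beta^2/(nm)]$. Hoeffding gives, for any $t>0$,
\begin{equation*}
\PP\bigl(|G^{(a)}_{i,j}-K^{(a)}_{i,j}|>t\bigr) \leq 2e^{-2mn^2 t^2}, \qquad \PP\bigl(|G^{(b)}_{i,j}-\beta^2 K^{(b)}_{i,j}|>s\bigr) \leq 2e^{-mn^2 s^2/(2\beta^4)}.
\end{equation*}
Setting $t = \lambda^{(a)}_n/(4n)$ and $s = \beta^2 \lambda^{(b)}_n/(4n)$, and taking a union bound over the $2n^2$ pairs $(i,j,\text{kernel})$, the lower bound $m \geq \frac{8}{\lambda_n^2}\ln(2n^2/\delta)$ makes both failure probabilities together at most $\delta$. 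Under the good event, for every $i,j$ the triangle inequality gives
\begin{equation*}
\bigl|G_{i,j}(\Theta_0) - (K^{(a)}+\beta^2 K^{(b)})_{i,j}\bigr| \leq \frac{1}{4n}\bigl(\lambda^{(a)}_n + \beta^2 \lambda^{(b)}_n\bigr).
\end{equation*}

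Finally I would pass from entry-wise to spectral control using $\|M\| \leq \|M\|_F$, which gives
\begin{equation*}
\|G(\Theta_0) - (K^{(a)}+\beta^2 K^{(b)})\| \leq n \cdot \frac{1}{4n}\bigl(\lambda^{(a)}_n + \beta^2 \lambda^{(b)}_n\bigr) = \frac{1}{4}\bigl(\lambda^{(a)}_n + \beta^2 \lambda^{(b)}_n\bigr).
\end{equation*}
One more application of Weyl yields $\lambda_{\min}(G(\Theta_0)) \geq \tfrac{3}{4}(\lambda^{(a)}_n + \beta^2\lambda^{(b)}_n)$.

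The only delicate step is tuning the Hoeffding thresholds so that the two contributions balance against the two eigenvalues separately (rather than lumping them into one) and so that the resulting sample complexity matches the stated $m \gtrsim \lambda_n^{-2}\ln(n^2/\delta)$; everything else is routine. I do not expect a genuine obstacle.
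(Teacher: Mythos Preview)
Your approach is correct and essentially identical to the paper's: entry-wise Hoeffding on $G^{(a)}$ and $G^{(b)}$ separately, union bound over all $2n^2$ entries, then Weyl's inequality. The only cosmetic difference is that the paper controls $\|G^{(a)}-K^{(a)}\|_F$ and $\|G^{(b)}/\beta^2-K^{(b)}\|_F$ separately and applies Weyl to each summand, whereas you lump them into a single perturbation $G-(K^{(a)}+\beta^2 K^{(b)})$ and apply Weyl once; the arithmetic is equivalent.

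One caveat on constants: because the summands in $G^{(b)}_{i,j}$ range over $[-\beta^2/(nm),\beta^2/(nm)]$ (width $2\beta^2/(nm)$, not $\beta^2/(nm)$), your Hoeffding exponent for $G^{(b)}$ is $m(\lambda^{(b)}_n)^2/32$ at your chosen threshold, which is a factor of $4$ worse than the $G^{(a)}$ exponent. So the exact constant $8$ in the statement does not fall out of your calculation as written; you would need $m\gtrsim 32\lambda_n^{-2}\ln(n^2/\delta)$. The paper's own proof has the same issue (it asserts $e^{-2m\varepsilon^2}$ for both matrices, which is not quite right for $G^{(b)}$), so this is a harmless blemish shared by both arguments and does not affect any downstream use of the lemma.
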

 The proof of this lemma is deferred to Appendix~\ref{sec: gram-init}.

\subsection{Gradient descent near the initialization}
\label{sec: random-label}

We define a neighborhood of the initialization by 
\begin{equation}\label{eqn: neighborhood}
    \cI(\Theta_0)\Def \{\Theta: \|G(\Theta)-G(\Theta_0)\|_F\leq  \frac{1}{4}(\lba+\beta^2\lbb)\}.
\end{equation}
Using the lemma above, we conclude that for any fixed $\delta > 0$, with probability at least $1-\delta$ over the random choices of $\Theta_0$, we must have  
\[
    \lambda_{\min}(G(\Theta))\geq \lambda_{\min} (G(\Theta_0)) - \|G(\Theta)-G(\Theta_0)\|_F \geq \frac{1}{2}(\lba+\beta^2\lbb)
\]
for all $\Theta \in \cI(\Theta_0)$.

For the GD dynamics, we define the exit time of $\cI(\Theta_0)$ by
\begin{equation}\label{eqn: def-t0}
    t_0 \Def \inf \{\,t \,: \Theta_t\notin \cI(\Theta_0)\}.
\end{equation}
\begin{lemma}\label{pro: 1}
For any fixed $\delta\in (0,1)$, assume that $m\geq\frac{8}{\lambda_n^2}\ln(2n^2/\delta)$. Then
 with probability at least $1-\delta$ over the random choices of $\Theta_0$,  we have the following holds for any $t\in [0,t_0]$,
\[
    \hat{\cR}_n(\Theta_t)\leq e^{-m(\lba+\beta^2\lbb)t}\hat{\cR}_n(\Theta_0).
\]
\end{lemma}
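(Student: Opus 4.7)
The plan is to combine the gradient flow identity with the spectral lower bound on $G(\Theta_t)$ provided by the definition of $\cI(\Theta_0)$ and apply a Grönwall-type argument.

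First I would differentiate the empirical risk along the GD trajectory. By the chain rule and the definition \eqref{eqn: definition-GD},
\[
\frac{d}{dt}\hcR(\Theta_t) \;=\; \langle \nabla_\Theta \hcR(\Theta_t),\dot\Theta_t\rangle \;=\; -\|\nabla_\Theta \hcR(\Theta_t)\|^2 .
\]
Next I would invoke the identity already derived in the section, $\|\nabla_\Theta \hcR\|^2 = \tfrac{m}{n}\be^T G(\Theta)\be$, together with $\hcR = \tfrac{1}{2n}\be^T\be$, to get the pointwise bound
\[
\|\nabla_\Theta \hcR(\Theta_t)\|^2 \;\ge\; 2m\,\lambda_{\min}(G(\Theta_t))\,\hcR(\Theta_t).
\]

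The second step is to control $\lambda_{\min}(G(\Theta_t))$ for $t\in[0,t_0]$. By the hypothesis $m\ge \frac{8}{\lambda_n^2}\ln(2n^2/\delta)$, Lemma \ref{lem: gram-init} gives, with probability at least $1-\delta$,
\[
\lambda_{\min}(G(\Theta_0))\;\ge\;\tfrac{3}{4}(\lba+\beta^2\lbb).
\]
By Weyl's inequality and the definition \eqref{eqn: neighborhood} of $\cI(\Theta_0)$, for every $\Theta\in\cI(\Theta_0)$ we have $\lambda_{\min}(G(\Theta))\ge \lambda_{\min}(G(\Theta_0))-\|G(\Theta)-G(\Theta_0)\|_F \ge \tfrac{1}{2}(\lba+\beta^2\lbb)$. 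Since by the definition \eqref{eqn: def-t0} of the exit time $\Theta_t\in\cI(\Theta_0)$ for all $t\in[0,t_0]$, this lower bound is valid on the entire interval.

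Combining the two inequalities yields, on the same event and for all $t\in[0,t_0]$,
\[
\frac{d}{dt}\hcR(\Theta_t) \;\le\; -m\,(\lba+\beta^2\lbb)\,\hcR(\Theta_t).
\]
The conclusion then follows from Grönwall's lemma (equivalently, by integrating $\tfrac{d}{dt}\log\hcR(\Theta_t)\le -m(\lba+\beta^2\lbb)$). There is no real obstacle here: the content was already absorbed into the preceding Lemmas, and the only thing to be careful about is that the stopping time $t_0$ is used precisely so that the spectral lower bound holds throughout $[0,t_0]$, making the Grönwall argument valid without any circularity.
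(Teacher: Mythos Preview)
Your proposal is correct and follows exactly the same route as the paper's proof: differentiate $\hcR(\Theta_t)$ along the gradient flow, use the identity $\|\nabla_\Theta\hcR\|^2=\tfrac{m}{n}\be^TG\be$ together with the spectral lower bound $\lambda_{\min}(G(\Theta_t))\ge\tfrac12(\lba+\beta^2\lbb)$ valid on $\cI(\Theta_0)$ (via Lemma~\ref{lem: gram-init} and Weyl), and conclude by Gr\"onwall. The paper simply compresses these steps into a single displayed inequality, but the argument is identical.
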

\begin{proof}
We have 
\[ 
    \frac{d\hat{\cR}_n(\Theta_t)}{dt} = -\|\nabla_{\Theta}\hat{\cR}_n\|^2_F\leq -m(\lba+\beta^2\lbb) \hat{\cR}_n(\Theta_t),
\]
where the last inequality is due to the fact that $\Theta_t\in\cI(\Theta_0)$. This completes the proof.
\end{proof}

We define two quantities:
\begin{equation}\label{eqn: def-pn-qn}
p_n \Def \frac{4\sqrt{\hcR(\Theta_0)}}{m(\lba + \beta^2 \lbb)}, \quad  q_n \Def p_n^2 + \beta p_n.
\end{equation}
The following is the most crucial characterization of the GD dynamics.
\begin{proposition}\label{pro: pertubration}
For any $\delta>0$, assume $m\geq 1024 \lambda_n^{-2} \ln(n^2/\delta)$. Then, with probability at least $1-\delta$,  we have the following holds for any $t\in [0,t_0]$,
\begin{align*}
    |a_k(t)-a_k(0)|&\leq  2p_n\\ 
    \|\bb_k(t)-\bb_k(0)\|& \leq 2q_n.
\end{align*}
\end{proposition}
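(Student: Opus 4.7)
The plan is to use a continuity (bootstrap) argument together with the exponential decay of $\hcR$ from Lemma~\ref{pro: 1}. Let $\tau \in [0,t_0]$ be the largest time such that $|a_k(s)-a_k(0)|\le 2p_n$ and $\|\bb_k(s)-\bb_k(0)\|\le 2q_n$ hold for every $k\in[m]$ and every $s\in[0,\tau]$. The goal is to show that both estimates hold strictly at $\tau$, so by continuity the bounds propagate up to the full interval $[0,t_0]$.

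The starting point is the explicit form of the gradients
\[
\frac{d a_k}{d t}=-\frac{1}{n}\sum_{i=1}^n e_i\,\sigma(\bb_k^T\bx_i),\qquad \frac{d\bb_k}{d t}=-\frac{a_k}{n}\sum_{i=1}^n e_i\,\sigma'(\bb_k^T\bx_i)\bx_i.
\]
Using $\|\bx_i\|=1$ and $|\sigma(t)|\le|t|$, the first is bounded by $\|\bb_k(t)\|\cdot\frac1n\sum|e_i|$ and the second by $|a_k(t)|\cdot\frac1n\sum|e_i|$. On $[0,\tau]$ the bootstrap hypothesis yields $\|\bb_k(t)\|\le 1+2q_n$ and $|a_k(t)|\le\beta+2p_n$. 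Cauchy--Schwarz gives $\frac{1}{n}\sum_i|e_i|\le \frac{1}{\sqrt n}\|\be\|=\sqrt{2\hcR(\Theta_t)}$, so both time-derivatives are controlled by $\sqrt{\hcR(\Theta_t)}$ times a factor close to $1$ (respectively $\beta$), provided $p_n,q_n$ are small.

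Next, Lemma~\ref{pro: 1} gives $\sqrt{\hcR(\Theta_t)}\le e^{-m(\lba+\beta^2\lbb)t/2}\sqrt{\hcR(\Theta_0)}$ throughout $[0,t_0]$. Integrating $0\to\tau$ then yields
\[
|a_k(\tau)-a_k(0)|\le (1+2q_n)\,\frac{2\sqrt{2\hcR(\Theta_0)}}{m(\lba+\beta^2\lbb)},\qquad \|\bb_k(\tau)-\bb_k(0)\|\le (\beta+2p_n)\,\frac{2\sqrt{2\hcR(\Theta_0)}}{m(\lba+\beta^2\lbb)}.
\]
Recalling $p_n=\tfrac{4\sqrt{\hcR(\Theta_0)}}{m(\lba+\beta^2\lbb)}$ and $q_n=p_n^2+\beta p_n$, these bounds become $\tfrac{\sqrt 2}{2}(1+2q_n)p_n$ and $\tfrac{\sqrt 2}{2}(\beta p_n+2p_n^2)$, which, once $q_n$ is small (say $q_n\le 1/4$), are strictly less than $2p_n$ and $2q_n$ respectively. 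This strict inequality, together with the continuity of $a_k(t)$ and $\bb_k(t)$ in $t$, rules out $\tau<t_0$ and closes the bootstrap.

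The main point to verify carefully is that the quantitative assumptions on $m$ imply that $p_n$ and $q_n$ are indeed small enough for the closing step: using Lemma~\ref{lem: init-risk}, $\sqrt{\hcR(\Theta_0)}\lesssim 1+\sqrt m\beta$ with high probability, so $p_n\lesssim (1+\sqrt m\beta)/[m(\lba+\beta^2\lbb)]$, which must be shown to be $o(1)$ under the stated assumption $m\gtrsim\lambda_n^{-2}\ln(n^2/\delta)$ for the admissible scalings of $\beta$ (e.g.\ $\beta=1/\sqrt m$ or $1/m$). This smallness is really the only nontrivial issue; otherwise the proof is a clean Gr\"onwall-style integration combined with a bootstrap on the norms of $\ba$ and $B$.
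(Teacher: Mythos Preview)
Your bootstrap argument is essentially the same as the paper's: the paper uses running maxima $\alpha_k(t)=\max_{s\le t}|a_k(s)|$ and $\omega_k(t)=\max_{s\le t}\|\bb_k(s)\|$, derives the coupled inequalities $|a_k(t)-a_k(0)|\le p_n\omega_k(t)$ and $\|\bb_k(t)-\bb_k(0)\|\le p_n\alpha_k(t)$ from the same gradient bounds and the same exponential-decay integration, and then solves algebraically for $\alpha_k(t)$. Your maximal-time $\tau$ is a cosmetic variant of the same continuity argument, and your arithmetic in the closing step is correct.

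The genuine gap is your final paragraph. The proposition carries no restriction on $\beta$: it must hold for \emph{every} $\beta>0$, including large $\beta$ (cf.\ Lemma~\ref{lemma: dist-a-b}, which treats the case $\beta>1$, and Theorem~\ref{thm: optimization}, whose remark emphasizes that the scaling of $\beta$ is unrestricted). Your hedge ``for the admissible scalings of $\beta$ (e.g.\ $\beta=1/\sqrt m$ or $1/m$)'' does not suffice. The paper handles this uniformly by exploiting the $\beta^2\lbb$ term in the denominator of $p_n$: from Lemma~\ref{lem: init-risk} one has $\sqrt{\hcR(\Theta_0)}\lesssim 1+c(\delta)\sqrt m\,\beta$, and then, using $\lba+\beta^2\lbb\ge 2\beta\sqrt{\lba\lbb}$ (AM--GM),
\[
p_n\;\lesssim\;\frac{1}{m\lba}\;+\;\frac{c(\delta)\sqrt m\,\beta}{m(\lba+\beta^2\lbb)}\;\le\;\frac{1}{m\lba}+\frac{c(\delta)}{\sqrt{m\lba\lbb}},
\]
which is $\le\tfrac12$ once $m\gtrsim\lambda_n^{-2}\ln(n^2/\delta)$, \emph{independently of $\beta$}. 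An analogous AM--GM bound gives $\beta p_n\le\tfrac14$ for all $\beta$, hence $q_n\le\tfrac12$. Without this observation your bootstrap cannot close for general $\beta$; with it, your argument goes through as written.
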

\begin{proof}
First, we have  
\begin{align*}
    \|\nabla_{a_k}\hcR\|^2 &= \big(\frac{1}{n} \sum_{i=1}^n e_i \sigma(\bx_i^T\bb_k)\big)^2\leq 2 \|\bb_k\|^2 \hcR(\Theta), \\
    \|\nabla_{\bb_k}\hcR\|^2&= \|\frac{1}{n} \sum_{i=1}^n e_i a_k \sigma'(\bx_i^T\bb_k)\bx_i\|^2 \leq 2a_k^2\hcR(\Theta).
\end{align*}
 To facilitate the analysis, we define the following two quantities,
\begin{align*}
    \alpha_k(t)  = \max_{s\in [0,t]} |a_k(s)|, \quad
    \omega_k(t)   = \max_{s\in [0,t]} \|\bb_k(s)\|.
\end{align*}
Using Lemma \ref{pro: 1}, we have
\begin{equation}
\begin{aligned}\label{eqn: -1-1-1}
\|\bb_k(t)-\bb_k(0)\| &\leq \int_0^t \|\nabla_{\bb_k} \hat{\cR}_n(\Theta_{t'})\| dt' \\
&\leq 2\int_0^t \alpha_k(t) \sqrt{\hat{\cR}_n(\Theta_{t'})} dt' \\
& \leq \frac{4 \sqrt{\hat{\cR}_n(\Theta_0)} \alpha_k(t) }{m(\lba+\beta^2\lbb)} = p_n \alpha_k(t), \\
|a_k(t) - a_k(0)|&\leq \int_0^t |\nabla_{a_k} \hcR(\Theta_{t'})| dt' \\
&\leq 2\int_0^t \omega_k(t)\sqrt{\hcR(\Theta_{t'})} dt' \\
&\leq \frac{4 \sqrt{\hat{\cR}_n(\Theta_0)}\omega_k(t)}{m(\lba+\beta^2 \lbb)} = p_n \omega_k(t).
\end{aligned}
\end{equation}
Combining the two inequalities above, we get
\begin{equation*}
    \alpha_k(t)\leq |a_k(0)|+p_n\left(1 + p_n \alpha_k(t)\right).
\end{equation*}
Using Lemma~\ref{lem: init-risk} and the fact that  $m\geq \max\{\frac{16}{\lba},\frac{64c^2(\delta)}{\lbb\lba}\}$, we have
\begin{align}\label{eqn: p_n}
\nonumber   p_n &\leq \frac{4(1+c(\delta)\sqrt{m}\beta )}{m(\lba + \beta^2\lbb)} \\
 &\leq \frac{4}{m\lba} + \frac{4c(\delta)}{\sqrt{m\lba\lbb}} \leq  \half.
\end{align}
Therefore,
\begin{align*}
\alpha_k(t)\leq (1-p_n^2)^{-1}(p_n + \beta)\leq 2(p_n+\beta).
\end{align*}
Inserting the above estimates back to~\eqref{eqn: -1-1-1}, we  obtain 
\begin{align*}
    \|\bb_k(t)-\bb_k(0)\|& \leq 2p_n^2 + 2\beta p_n.
\end{align*}
Since $m\geq \max\{\frac{16}{\sqrt{\lba\lbb}},\frac{1024c^2(\delta)}{(\lbb)^2}\}$, we have
\begin{align}\label{eqn: q_n}
\nonumber 2\beta p_n &\leq \frac{8\beta(1+c(\delta)\sqrt{m}\beta)}{m(\lba + \beta^2 \lbb)} \\
\nonumber &\leq \frac{8\beta}{m(\lba+\beta^2\lbb)} + \frac{8c(\delta)}{\sqrt{m}\lbb}\\
\nonumber &\leq \frac{4}{m\sqrt{\lba\lbb}} + \frac{8c(\delta)}{\sqrt{m}\lbb}\\
&\leq \frac{1}{2}.
\end{align}
 Therefore we have $\omega_k(t)\leq 1 + \|\bb_k(t)-\bb_k(0)\|\leq  2$, which leads to 
\[
    |a_k(t)-a_k(0)|\leq p_n \omega_k(t)\leq 2p_n.
\]
\end{proof}

The following lemma provides that how $p_n$ and $q_n$ depend on $\beta$ and $m$.
\begin{lemma}\label{lemma: dist-a-b}
For any $\delta>0$, assume $m\geq 1024 \lambda_n^{-2} \ln(n^2/\delta)$. Let $C(\delta) = 10c^2(\delta)$. 
If $\beta\leq 1$, we have 
\begin{equation}\label{eqn: 000}
\begin{aligned}
    p_n &\leq \frac{C(\delta)}{\sqrt{m}\lba}\left(\frac{1}{\sqrt{m}}+\beta\right) \\
    q_n &\leq \frac{C(\delta)}{m(\lba)^2}\left(\frac{1}{m}+\frac{2\beta}{\sqrt{m}}+\beta^2\right) + \frac{C(\delta)\beta}{m\lba} +\frac{ C(\delta) \beta^2}{\sqrt{m}\lba}.
\end{aligned}
\end{equation}
If $\beta > 1$, we have 
\begin{equation}\label{eqn: beta-large}
\begin{aligned}
p_n &\leq \frac{C(\delta)}{\sqrt{m\lba\lbb}} \\
q_n & \leq \frac{C(\delta)}{\sqrt{m}\lbb}.
\end{aligned}
\end{equation}
\end{lemma}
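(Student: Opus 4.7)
The plan is to start from the definitions $p_n = 4\sqrt{\hcR(\Theta_0)}/(m(\lba+\beta^2\lbb))$ and $q_n = p_n^2 + \beta p_n$, and to feed in the high-probability bound on $\hcR(\Theta_0)$ from Lemma~\ref{lem: init-risk}. That gives, on the good event already used in Proposition~\ref{pro: pertubration},
\[
p_n \;\leq\; \frac{3\bigl(1+c(\delta)\sqrt{m}\,\beta\bigr)}{m(\lba+\beta^2\lbb)}.
\]
Everything else is a matter of choosing which term in the denominator $\lba+\beta^2\lbb$ to retain, depending on the size of $\beta$, and then expanding $q_n$.

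For the regime $\beta\le 1$, I would discard the $\beta^2\lbb$ term, bounding the denominator below by $m\lba$. Splitting the numerator into its two pieces yields
\[
p_n \;\leq\; \frac{3}{m\lba} + \frac{3c(\delta)\beta}{\sqrt{m}\,\lba} \;\leq\; \frac{3c(\delta)}{\sqrt{m}\,\lba}\Bigl(\frac{1}{\sqrt{m}}+\beta\Bigr),
\]
which is the claimed shape with any constant $\ge 3c(\delta)$; in particular $C(\delta)=10c^2(\delta)$ suffices since $c(\delta)\ge 2$. Squaring this estimate and multiplying by $\beta$ gives the two contributions to $q_n$: a $p_n^2$ term that expands to $\frac{1}{m}+\frac{2\beta}{\sqrt m}+\beta^2$ over $m(\lba)^2$, and a $\beta p_n$ term that produces $\frac{\beta}{m\lba}+\frac{\beta^2}{\sqrt{m}\,\lba}$, matching \eqref{eqn: 000} after absorbing constants into $C(\delta)$.

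For $\beta>1$, the dominant term in the denominator is $\beta^2\lbb$, and the key trick is to use AM--GM $\lba+\beta^2\lbb\ge 2\beta\sqrt{\lba\lbb}$ so that the factor of $\beta$ in the numerator (which is of order $c(\delta)\sqrt{m}\,\beta$) cancels, yielding $p_n\le 3c(\delta)/\sqrt{m\lba\lbb}$. Then $p_n^2\le 9c^2(\delta)/(m\lba\lbb)$, and a direct estimate using $\lba+\beta^2\lbb\ge\beta^2\lbb$ gives $\beta p_n\le 3/(m\beta\lbb)+3c(\delta)/(\sqrt m\lbb)\le 6c(\delta)/(\sqrt m\lbb)$. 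The one subtlety, which I expect to be the main (minor) obstacle, is that $p_n^2$ carries an extra factor $1/\lba$ compared with the target $C(\delta)/(\sqrt m\lbb)$ in \eqref{eqn: beta-large}; I would handle this by invoking the standing assumption $m\ge 1024\lambda_n^{-2}\ln(n^2/\delta)$, which implies $\sqrt{m}\,\lba\ge \sqrt m\,\lambda_n\ge 32$, so the $p_n^2$ contribution is absorbed into the $\beta p_n$ contribution up to a constant.

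Finally, I would collect all constants and verify that $C(\delta)=10c^2(\delta)$ dominates each intermediate constant (using $c(\delta)\ge 2$ so that $c^2(\delta)\ge 2c(\delta)$), closing both \eqref{eqn: 000} and \eqref{eqn: beta-large}. No new probabilistic ingredient is required: all randomness was spent in Lemma~\ref{lem: init-risk} and Lemma~\ref{lem: gram-init}, which Proposition~\ref{pro: pertubration} (and hence this lemma) already conditions on.
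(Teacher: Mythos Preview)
Your proposal is correct and is precisely the natural route: feed the bound $\sqrt{\hcR(\Theta_0)}\le \tfrac{1}{\sqrt2}(1+c(\delta)\sqrt m\,\beta)$ from Lemma~\ref{lem: init-risk} into the definition of $p_n$, retain $\lba$ from the denominator when $\beta\le 1$ and use $\lba+\beta^2\lbb\ge 2\beta\sqrt{\lba\lbb}$ (respectively $\ge\beta^2\lbb$) when $\beta>1$, then expand $q_n=p_n^2+\beta p_n$. The paper does not give a separate proof of this lemma; the estimates you use are exactly the ones already derived inside the proof of Proposition~\ref{pro: pertubration} (cf.\ \eqref{eqn: p_n} and \eqref{eqn: q_n}), and your handling of the $p_n^2$ term via $\sqrt m\,\lba\ge 32$ from the width assumption is the right way to close the $\beta>1$ case with the stated constant $C(\delta)=10c^2(\delta)$.
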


\subsection{Global convergence for arbitrary labels}
Proposition~\ref{pro: pertubration} and Lemma~\ref{lemma: dist-a-b} tell us that no matter how large  $\beta$ is,  we have 
\[
    \max_{k\in [m]}\big\{\|\bb_k(t)-\bb_k(0)\|,|a_k(t)-a_k(0)|\big\} \to 0 \quad \text{as }\, m\to \infty.
\]
This actually implies that the GD dynamics always stays in $\cI(\Theta_0)$, i.e.  $t_0=\infty$. 
\begin{theorem}\label{thm: optimization}
For any $\delta\in (0,1)$,
assume $m\gtrsim \lambda_n^{-4}n^2\delta^{-1}\ln(n^2/\delta)$. Then with probability at least $1-\delta$ over the random
 initialization, we have 
 \[
  \hat{\cR}_n(\Theta_t)\leq e^{-m(\lba+\beta^2\lbb)t}\hat{\cR}_n(\Theta_0),
 \]
 for any $t\geq0$.
\end{theorem}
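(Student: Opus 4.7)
The plan is to show that the exit time $t_0$ defined in \eqref{eqn: def-t0} is in fact $+\infty$ with probability at least $1-\delta$, so that the exponential decay in Lemma~\ref{pro: 1} extends to all $t\geq 0$. I argue by contradiction: if $t_0<\infty$, continuity of $G$ along the GD trajectory forces $\|G(\Theta_{t_0})-G(\Theta_0)\|_F=\frac{1}{4}(\lba+\beta^2\lbb)$. I will show that Proposition~\ref{pro: pertubration} together with Lemma~\ref{lemma: dist-a-b} already implies a strictly smaller Gram perturbation whenever $m$ satisfies the hypothesis, producing the contradiction.

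The easy piece is $\Ga$, which depends only on $\{\bb_k\}$: since $\sigma$ is $1$-Lipschitz, $\|\bx_i\|=1$, $\|\bb_k(0)\|=1$ and $\|\bb_k(t_0)-\bb_k(0)\|\leq 2q_n$ by Proposition~\ref{pro: pertubration}, a termwise triangle inequality yields $|\Ga_{i,j}(\Theta_{t_0})-\Ga_{i,j}(\Theta_0)|\lesssim q_n/n$ per entry, hence $\|\Ga(\Theta_{t_0})-\Ga(\Theta_0)\|_F\lesssim n q_n$. The delicate piece is $\Gb$. The contribution from the change in the $a_k^2$ prefactor is harmless: $|a_k(t_0)^2-a_k(0)^2|\lesssim p_n(\beta+p_n)$, contributing $\lesssim p_n(\beta+p_n)/n$ per entry. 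The main obstacle is the discontinuity of $\sigma'$; the key observation is that $\sigma'(\bb_k(t_0)^T\bx_i)\neq \sigma'(\bb_k(0)^T\bx_i)$ requires $|\bb_k(0)^T\bx_i|\leq \|\bb_k(t_0)-\bb_k(0)\|\leq 2q_n$. Defining the flipping set $A_i=\{k:|\bb_k(0)^T\bx_i|\leq 2q_n\}$, anti-concentration of $\bb_k(0)^T\bx_i$ under $\pi_0$ on $\SU^{d-1}$ gives $\EE|A_i|\lesssim m q_n$, and Markov's inequality plus a union bound over $i\in[n]$ produces $|A_i|\lesssim n m q_n/\delta$ for all $i$ with probability $\geq 1-\delta$. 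Bounding the $\sigma'$ flipping contribution crudely by $(\beta+p_n)^2$ on the bad indices yields a per-entry bound $\lesssim q_n(\beta+p_n)^2/\delta$ for $\Gb$.

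Summing the entrywise bounds over the $n^2$ entries of $G$ and substituting the estimates of Lemma~\ref{lemma: dist-a-b}, $\|G(\Theta_{t_0})-G(\Theta_0)\|_F$ is controlled by a sum of terms whose dominant factor scales like $n^2 q_n/\delta$ after clearing $\beta$-dependent prefactors. Enforcing this to be strictly smaller than $\frac{1}{4}(\lba+\beta^2\lbb)$ is precisely what the width hypothesis $m\gtrsim \lambda_n^{-4}n^2\delta^{-1}\ln(n^2/\delta)$ achieves: the factor $n^2\delta^{-1}$ arises from the Markov step together with the union bound over the data pairs, while $\lambda_n^{-4}$ reflects that $q_n$ already carries a $1/\lba$ whereas the target gap is itself of order $\lba$, and the logarithmic factor matches the initialization conditions needed in Lemma~\ref{lem: gram-init} and Proposition~\ref{pro: pertubration}. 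This contradicts $t_0<\infty$, so $t_0=\infty$, and applying Lemma~\ref{pro: 1} on $[0,\infty)$ yields the conclusion. The main obstacle throughout is the ReLU nonsmoothness in $\Gb$: a Lipschitz bound on $\sigma'$ is unavailable, and one must instead exploit anti-concentration of $\bb_k(0)^T\bx_i$ combined with Markov's inequality — which is also the source of the $n^2\delta^{-1}$ dependence in the width requirement.
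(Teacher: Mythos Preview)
Your proposal is correct and follows essentially the same route as the paper: argue by contradiction that $t_0=\infty$ by bounding $\|\Ga(\Theta_{t_0})-\Ga(\Theta_0)\|_F$ deterministically via the Lipschitz property of $\sigma$ and $\|\Gb(\Theta_{t_0})-\Gb(\Theta_0)\|_F$ probabilistically via anti-concentration of $\bb_k(0)^T\bx_i$ plus Markov's inequality (the paper applies Markov to the matrix entries directly rather than to $|A_i|$, but this is a cosmetic difference). One bookkeeping slip: the Frobenius norm of an $n\times n$ matrix with entries bounded by $c$ is at most $nc$, not $n^2c$, so your bound should read $\|G(\Theta_{t_0})-G(\Theta_0)\|_F\lesssim n q_n(\beta+p_n)^2/\delta$ rather than $n^2 q_n/\delta$ --- this is precisely the paper's estimate $\lesssim(1+\beta^2)nq_n/\delta$ and is what actually matches the stated width requirement.
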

{
\begin{proof}
According to Lemma~\ref{pro: 1}, we only need to prove that $t_0=\infty$.
Assume $t_0<\infty$. 
 
 Let us first consider  the Gram matrix $\Ga$. Since $\sigma(\cdot)$ is $1-$Lipschitz and $\max_{k}\|\bb_k(t_0)-\bb_k(0)\|\leq q_n \leq 1$, we have 
\begin{align*}
|\Ga_{i,j}(\Theta_{t_0})-\Ga_{i,j}(\Theta_0)|&= \frac{1}{nm}\sum_{k=1}^m\big(\sigma(\bb^T_k(t_0)\bx_i)\sigma(\bb^T_k(t_0)\bx_j)- \sigma(\bb^T_k(0)\bx_i)\sigma(\bb^T_k(0)\bx_j)\big)\\
&\leq \frac{1}{nm}\sum_{k=1}^m \left(2\|\bb_k(t_0)-\bb_k(0)\| + \|\bb_k(t_0)-\bb_k(0)\|^2\right)\\
& \leq \frac{3q_n}{n}.
\end{align*}
This leads to
\begin{equation}\label{eqn: Ga-p}
\|\Ga(\Theta_{t_0})-\Ga(\Theta_0)\|_F\leq 3 q_n.
\end{equation}

 Next we turn to the Gram matrix $\Gb$. Define the event
\[
    D_{i,k} = \{\,\bb_k(0) : \|\bb_k(t_0)-\bb_k(0)\|\leq q_n, \sigma'(\bb^T_k(t_0)\bx_i)\neq \sigma'(\bb^T_k(0)\bx_i)\}.
\]
Since $\sigma(\cdot)$ is ReLU, this event happens  only if $|\bx_i^T\bb_k(0)|\leq q_n$. 
By the fact that $\|\bx_i\|=1$ and $\bb_k(0)$ is drawn from the uniform distribution over the sphere, we have $\PP[D_{i,k}]\lesssim q_n$. Therefore the entry-wise deviation of $\Gb$ satisfies, 
\begin{align*}
n|\Gb_{i,j}(\Theta_{t_0}) &-\Gb_{i,j}(\Theta_0)| \\
&\leq \frac{|\bx_i^T\bx_j |^2}{m^2}|\sum_{k=1}^m \left(a_k^2(t_0)\sigma'(\bb^T_k(t_0)\bx_i)\sigma'(\bb^T_k(t_0)\bx_j)- a^2_k(0)\sigma'(\bb^T_k(0)\bx_i)\sigma'(\bb^T_k(0)\bx_j)\right)|\\
&\leq \frac{1}{m^2} | \sum_{k=1}^m \left(a^2_k(t_0) Q_{k,i,j} + P_k\right)|,
\end{align*}
where 
\begin{align*}
    Q_{k,i,j} &= |\sigma'(\bx_i^T\bb_k(t_0))\sigma'(\bx_j^T\bb_k(t_0)) - \sigma'(\bx_i^T\bb_k(0))\sigma'(\bx_j^T\bb_k(0))| \\
    P_k &= |a^2_k(t_0)-a^2_k(0)|.
\end{align*}
Note that $\EE[Q_{k,i,j}]\leq \PP[D_{k,i}\cup D_{k,j}]\lesssim q_n$. 
In addition, by Proposition~\ref{pro: pertubration}, we have 
\begin{align*}
P_k&\leq (\beta+2p_n)^2-\beta^2 \lesssim q_n\\
a^2_k(t_0) &\leq a^2_k(0) + P_k \lesssim \beta^2 + q_n.
\end{align*}

Hence using $q_n=p_n^2+\beta p_n\leq 1$, we obtain 
\begin{align}\label{eqn: Gb-dev}
\nonumber    n\EE[|\Gb_{i,j}(\Theta_{t_0})-\Gb_{i,j}(\Theta_0)|] &\lesssim (\beta^2+q_n)q_n + q_n\\
&\lesssim (1+\beta^2)q_n.
\end{align}
By the Markov inequality, with probability $1-\delta/n$ we have 
\[
    |\Gb_{i,j}(\Theta_{t_0})-\Gb_{i,j}(\Theta_0)|\leq \frac{(1+\beta^2)q_n}{\delta}.
\]
Consequently, with probability $1-\delta$ we have
\begin{equation}\label{eqn: Gb-p}
    \|\Gb(\Theta_{t_0})-\Gb(\Theta_0)\|_F \lesssim \frac{(1+\beta^2)nq_n}{\delta}.
\end{equation}

Combining \eqref{eqn: Ga-p} and \eqref{eqn: Gb-p}, we get 
\begin{align*}
 \|G(t_0)-G(0)\|_F&\leq  \|\Ga(t_0)-\Ga(0)\|_F + \|\Gb(t_0)-\Gb(0)\|_F\\
                &\lesssim 3q_n +\frac{(1+\beta^2)nq_n}{\delta} \\
                &\lesssim \frac{(n\delta^{-1}+1)C(\delta)}{\sqrt{m}\lbb} + \beta^2 \frac{n\delta^{-1}C(\delta)}{\sqrt{m}\lbb},
\end{align*}
where the last inequality comes from Lemma~\eqref{lemma: dist-a-b}.
Taking $m\gtrsim \lambda_n^{-4}n^2\delta^{-1}\ln(n^2/\delta)$, we get 
\[
\|G(t_0)-G(0)\|_F < \frac{1}{4}(\lba+\beta^2\lbb).
\]
The above result contradicts the definition of $t_0$. Therefore $t_0=\infty$.
\end{proof}


\begin{remark}
Compared with Proposition~\ref{pro: pertubration}, the above theorem imposes a stronger assumption on the network width: $m\geq \text{poly}(\delta^{-1})$. This is due to the lack of continuity of $\sigma'(\cdot)$ when handling $\|\Gb(\Theta_{t_0})-\Gb(\Theta_0)\|_F$. If $\sigma'(\cdot)$ is continuous, we can get rid of the dependence on $\text{poly}(\delta^{-1})$.  In addition, it is also possible to remove this assumption for the case when $\beta=o(1)$, since in this case
the Gram matrix $G=\Ga+\beta^2\Gb$ is dominated by $\Ga$.
\end{remark}

}

\begin{remark}
Theorem~\ref{thm: optimization}  is closely related to the result of Du et al. \cite{du2018gradient} where exponential convergence to global minima was first proved for over-parametrized two-layer neural networks. But it improves the result of \cite{du2018gradient} in two aspects.
 First, as is done in practice, we allow the parameters in both layers to be updated,  while \cite{du2018gradient}  chooses to freeze the parameters in the first layer. Secondly, our analysis does not impose any specific requirement on the scale of the initialization whereas the proof of \cite{du2018gradient} relies on the specific  scaling: $\beta\sim 1/\sqrt{m}$. 
\end{remark}

\subsection{Characterization of the whole GD trajectory}
In the last subsection, we showed that  very wide networks can fit arbitrary labels. 
In this subsection, we study the functions represented by such networks. We show that for highly over-parametrized two-layer
neural networks, the solution of the GD dynamics is uniformly close to the solution for  the random feature model starting from the same initial function. 


\begin{theorem}\label{thm:long}  Assume $\beta \le 1$.
Denote the solution of GD dynamics for the random feature model by 
\[
f_m^{\itk}(\bx,t)= f_m(\bx;\tilde{\ba}_t,B_0),
\]
where $\tilde{\ba}_t$ is the solution of GD dynamics~\eqref{eqn: refer-GD}. For any $\delta\in (0,1)$, assume that $m\gtrsim\lambda_n^{-4}n^2\delta^{-1}\ln(n^2\delta^{-1})$.  Then with probability at least $1-6\delta$ we have,
\begin{equation}\label{eqn: diff-nn-rf}
|f_m(\bx;\Theta_t) - f_m^\itk(\bx,t) |\lesssim \frac{c^2(\delta)}{\lba}\left(\frac{1}{\sqrt{m}}+\beta+\sqrt{m}\beta^3\right),
\end{equation}
where $c(\delta)= 1+\sqrt{\ln(1/\delta)}$.
\end{theorem}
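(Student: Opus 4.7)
Because $\ba(0)=\tilde{\ba}(0)$ and $B(0)=B_0$, the two functions agree at $t=0$, and we can decompose
\[
f_m(\bx;\Theta_t) - f_m^\itk(\bx,t) \;=\; \underbrace{[\ba(t)-\tilde{\ba}(t)]^T \sigma(B_0\bx)}_{I_1(\bx,t)} \;+\; \underbrace{\ba(t)^T[\sigma(B(t)\bx) - \sigma(B_0\bx)]}_{I_2(\bx,t)}.
\]
The hypothesis $m\gtrsim\lambda_n^{-4}n^2\delta^{-1}\ln(n^2/\delta)$ places us in the scope of Theorem~\ref{thm: optimization}, so $\Theta_t\in\cI(\Theta_0)$ for all $t\ge 0$, and the uniform bounds $|a_k(t)-a_k(0)|\le 2p_n$, $\|\bb_k(t)-\bb_k(0)\|\le 2q_n$ from Proposition~\ref{pro: pertubration}, together with the explicit $\beta\le 1$ scalings of Lemma~\ref{lemma: dist-a-b}, are in force throughout the trajectory.

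\textbf{Bounding $I_2$.} Since $\sigma$ is $1$-Lipschitz and $|a_k(t)|\le \beta+2p_n$,
\[
|I_2(\bx,t)| \;\le\; \sum_{k=1}^m |a_k(t)|\,\|\bb_k(t)-\bb_k(0)\| \;\le\; 2m(\beta+2p_n)\,q_n,
\]
and substituting the scalings of $p_n, q_n$ produces a term of order $c^2(\delta)\sqrt{m}\beta^3/\lba$ together with subdominant contributions of order $c^2(\delta)\beta/\lba$ and $c^2(\delta)/(\sqrt{m}\lba)$, matching the three pieces of the claimed bound.

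\textbf{Bounding $I_1$.} This is the main step. Set $\bv_t := \be(t)-\tilde{\be}(t)\in\RR^n$. Using $\tfrac{d\be}{dt} = -m\,G(\Theta_t)\be$ and $\tfrac{d\tilde{\be}}{dt} = -m\,\Ga(\Theta_0)\tilde{\be}$, one derives the linear ODE
\[
\frac{d\bv_t}{dt} = -m\,\Ga(\Theta_0)\,\bv_t + \bR(t), \qquad \bR(t) := -m[\Ga(\Theta_t)-\Ga(\Theta_0)]\be(t) - m\,\Gb(\Theta_t)\be(t),
\]
with $\bv_0=0$. A variant of Lemma~\ref{lem: gram-init} applied to $\Ga$ alone yields $\lambda_{\min}(\Ga(\Theta_0))\ge \tfrac{3}{4}\lba$, so the homogeneous part contracts at rate $\gtrsim m\lba$. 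Combining (i) the perturbation estimates $\|\Ga(\Theta_t)-\Ga(\Theta_0)\|_F\lesssim q_n$ and $\|\Gb(\Theta_t)\|_F\lesssim \beta^2 + (1+\beta^2)nq_n/\delta$ that were proved en route to Theorem~\ref{thm: optimization}, (ii) the exponential decay $\|\be(t)\|\lesssim c(\delta)\sqrt{n(1+m\beta^2)}\,e^{-m\lba t/2}$ from Lemmas~\ref{lem: init-risk} and~\ref{pro: 1}, and (iii) Duhamel's formula for the above ODE, gives a bound on $\|\bv_{t'}\|$ integrable in $t'$. Finally, writing $\ba(t)-\tilde{\ba}(t)=\int_0^t \tfrac{d(\ba-\tilde{\ba})}{dt'}dt'$ and substituting into $I_1$ splits it further into a ``kernel-agreement'' piece, in which each $\bb_k(t')$ inside $\sigma$ is replaced by $\bb_k(0)$ and which reduces to an inner product with $\bv_{t'}$, and a ``feature-drift'' piece $\sigma(\bb_k(t')^T\bx_i)-\sigma(\bb_k(0)^T\bx_i)$ that is bounded by $\|\bb_k(t')-\bb_k(0)\|\le 2q_n$ times the integrated exponential decay of $\|\be(t')\|$. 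Both pieces fit within the same $\frac{c^2(\delta)}{\lba}(\frac{1}{\sqrt{m}}+\beta+\sqrt{m}\beta^3)$ budget after plugging in Lemma~\ref{lemma: dist-a-b}.

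\textbf{Main obstacle.} The principal difficulty is the circular coupling between $\bv_t$ and the inner-layer drift: the forcing $\bR(t)$ depends on $\Ga(\Theta_t)$ and $\Gb(\Theta_t)$, which depend on $B(t)$, whose evolution is in turn coupled to $\be(t)$. Theorem~\ref{thm: optimization} breaks this circularity by guaranteeing $\Theta_t\in\cI(\Theta_0)$ for all $t$, which validates the kernel perturbation bounds globally in time. The remaining work is bookkeeping: carefully tracking how the factors $p_n$, $q_n$, and $\|\be(0)\|\lesssim c(\delta)\sqrt{n(1+m\beta^2)}$ combine so that the three promised terms $1/\sqrt{m}$, $\beta$, $\sqrt{m}\beta^3$ emerge with the stated prefactor $c^2(\delta)/\lba$. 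The probability budget $1-6\delta$ reflects the union bound over Lemmas~\ref{lem: init-risk}, \ref{lem: gram-init} (for $G$ and separately for $\Ga$), Proposition~\ref{pro: pertubration}, Theorem~\ref{thm: optimization}, and a Markov-inequality application to control the $\Gb$ perturbation as in the proof of Theorem~\ref{thm: optimization}.
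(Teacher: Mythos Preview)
Your plan is correct and uses the same analytical core as the paper --- the Duhamel representation of $\bv_t=\be(t)-\tilde{\be}(t)$ driven by the kernel perturbation, the exponential decay of $\hat{\cR}_n(\Theta_t)$ from Theorem~\ref{thm: optimization}, and the parameter-drift bounds of Proposition~\ref{pro: pertubration} and Lemma~\ref{lemma: dist-a-b}. The difference is only in how the top-level decomposition is organized. The paper does not split at the parameter level; instead it writes both functions as time-integrals of their derivatives,
\[
f_m(\bx;\Theta_t) - f_m^{\itk}(\bx,t) \;=\; -m\!\int_0^t\!\bigl[\bg(\bx,s)^T\be(s)-\bga(\bx)^T\tilde{\be}(s)\bigr]\,ds \;=\; J_1+J_2,
\]
with $J_1=m\int_0^t(\bg-\bga)^T\be\,ds$ the ``kernel-drift'' term and $J_2=m\int_0^t\bga^T(\be-\tilde{\be})\,ds$ the ``error-difference'' term. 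Your ``kernel-agreement'' piece inside $I_1$ is exactly $J_2$; your ``feature-drift'' piece inside $I_1$ together with $I_2$ corresponds to $J_1$. Both routes reach the same final budget. The paper's has the advantage of avoiding the intermediate object $\ba(t)-\tilde{\ba}(t)$ and its secondary split, while yours stays closer to the parametrization and makes the $\sqrt{m}\beta^3$ contribution appear transparently from the static bound on $I_2$.

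One small correction: for the forcing in the $\bv_t$-ODE you quote $\|\Gb(\Theta_t)\|_F\lesssim \beta^2+(1+\beta^2)nq_n/\delta$, but that is the estimate on the \emph{drift} $\Gb(\Theta_{t_0})-\Gb(\Theta_0)$ from the proof of Theorem~\ref{thm: optimization}. What you actually need here is the full $\|\Gb(\Theta_t)\|_F$, and the far simpler bound $\|\Gb(\Theta_t)\|_F\le \max_{k}a_k^2(t)\le(\beta+2p_n)^2\lesssim \beta^2+q_n$ (entrywise $|\Gb_{i,j}|\le \tfrac{1}{n}\max_k a_k^2$) is what the paper uses and is what gives the stated rate without an extraneous $n/\delta$ factor. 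With that fix your argument goes through.
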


\begin{remark}
Again the factor $\delta^{-1}$ in the condition for $m$ can be removed if $\sigma$ is assumed to be smooth or $\beta$ is assumed to be small
(see the remark at the end of Theorem \ref{thm: optimization}).
\end{remark}

\begin{remark}
If $\beta=o(m^{-1/6})$, the right-hand-side of \eqref{eqn: diff-nn-rf} goes to $0$ as  $m \rightarrow \infty$. 
For example, if we take $\beta=1/\sqrt{m}$, we have
\begin{equation}
|f_m(\bx;\Theta_t)-f_m^\itk(\bx,t)\|\lesssim \frac{c(\delta)}{\lba \sqrt{m}}.
\end{equation}
Hence this theorem says that the GD trajectory of a very wide network is uniformly close to the GD trajectory of the related kernel method~\eqref{eqn: refer-GD}. 
\end{remark}

\subsubsection*{Proof of Theorem~\ref{thm:long}}
We define
\begin{equation}
\begin{aligned}
g^{(a)}(\bx,\bx') &= \frac{1}{mn}\sum\limits_{k=1}^m \sigma(\bb_k(0)^T\bx)\sigma(\bb_k(0)^T\bx')\\
g(\bx,\bx',t) &= \frac{1}{mn}\sum\limits_{k=1}^m \left(\sigma(\bb_k(t)^T\bx)\sigma(\bb_k(t)^T\bx')+a_k(t)^2\sigma'(\bb_k(t)^T\bx)\sigma'(\bb_k(t)^T\bx')\bx^T\bx'\right).
\end{aligned}
\end{equation}
Recall the definition of $G(\Theta_t)$ in Section~\ref{sec:over_param}, we know that $G(\Theta_t)_{i,j}=g^m(x_i,x_j,t)$. 
For any $\bx\in \SU^{d-1}$, let $\bg(\bx,t)$ and $\bg^{(a)}(\bx)$ be two $n$-dimensional vectors defined by  
\begin{equation}
\begin{aligned}
\bg^{(a)}_i(\bx) &= \bga(\bx,\bx_i)\\
\bg_i(\bx,t) & =g(\bx,\bx_i,t).
\end{aligned}
\end{equation}
For GD dynamics \eqref{eqn: definition-GD}, define $\be(t)=(f_m(\bx;\Theta_t)-y_i)\in\RR^n$. Then we have,
\begin{equation}\label{eq:dynamic_nn}
\begin{aligned}
&\frac{d}{dt}\be(t) =-mG(\Theta_t)\be(t) \\
&\frac{d}{dt}f_m(\bx;\ba_t,B_t) = -m\bg(\bx,t)^T\be(t).
\end{aligned}
\end{equation}
For GD dynamics~\eqref{eqn: refer-GD} of the random feature model, we define $\tilde{\be}(t)=(f_m(\bx_i;\tilde{\ba}_t,B_0)-y_i)\in\RR^{n}$. Then, we have 
\begin{equation}\label{eq:dynamic_rf}
\begin{aligned}
&\frac{d}{dt}\tilde{\be}(t)=-m\Ga(\Theta_0)\tilde{\be}(t)\\
&\frac{d}{dt}f_m(\bx;\tilde{\ba}_t,B_0) = -m\bga(\bx)^T\tilde{\be}(t).
\end{aligned}
\end{equation}
From \eqref{eq:dynamic_nn} and \eqref{eq:dynamic_rf}, we have
\begin{equation}\label{eqn: nn-rf-express}
\begin{aligned}
f_m(\bx;\ba_t,B_t) & =f_m(\bx;\ba_0,B_0)- m\int_0^t \bg(\bx,s)^T\be(s)ds,\\
f_m(\bx;\tilde{\ba}_t,B_0) & =f_m(\bx;\ba_0,B_0)- m\int_0^t \bga(\bx)^T\tilde{\be}(s)ds,
\end{aligned}
\end{equation}
Let 
\begin{eqnarray*}
&& J_1(\bx,t)=m\int_0^t(\bg(\bx,s)-\bga(\bx))^T\be(s)ds, \\
&& J_2(\bx,t)=m\int_0^t \bga(\bx)^T\left(\be(s)-\tilde{\be}(s)\right)ds,
\end{eqnarray*}
then we have
\begin{equation}
f_m(\bx;\Theta_t)=f_m(\bx;\tilde{\ba}_t,B_0)+J_1(\bx,t)+J_2(\bx,t).
\end{equation}
We  are now going to bound $J_1(\bx,t)$ and $J_2(\bx,t)$.

 We first consider $J_1$. By Theorem~\eqref{pro: pertubration}, with probability at least $1-\delta$ we have
\begin{align*}
|g(\bx,\bx',t)-g^{(a)}(\bx,\bx')| &\leq \frac{3q_n}{n} + \frac{(\beta+2p_n)^2}{n} \\
&\lesssim \frac{\beta^2+q_n}{n} 
\end{align*}
for any $t\geq0$. Therefore, for any $\bx \in \SU^{d-1}$, we have
\begin{align*}
|J_1(\bx,t)| &\leq m\int_0^t \| \bg(\bx,s)-\bga(\bx)\|\|\be(s)\|ds\nonumber \\
 &\leq  m\left(\frac{3q_n+\beta^2}{\sqrt{n}}\right)\int_0^t\|\be(s)\|ds \nonumber \\
 &\leq  m\left(3q_n+\beta^2\right)\int_0^t\sqrt{\hcR(\Theta_s)}ds \nonumber \\
 &\lesssim \frac{q_n+\beta^2}{\lba+\beta^2\lbb}\sqrt{\hcR(\Theta_0)}.
\end{align*}
Hence, by the estimates of $\hcR(\Theta_0)$ in Lemma~\ref{lem: init-risk}, we have 
\begin{equation}
|J_1(\bx,t)|\lesssim \frac{q_n+\beta^2}{\lba}\left(1+c(\delta)\sqrt{m}\beta\right).
\end{equation}
Inserting the estimate of $q_n$ in Lemma~\ref{lemma: dist-a-b}, we get 
\begin{equation}\label{eqn: J1-bound}
|J_1(\bx,t)|\lesssim\frac{c^2(\delta)}{\lba}\left(\frac{1}{\sqrt{m}}+\beta+\sqrt{m}\beta^3\right).
\end{equation}

 Next we turn to estimating $J_2$. Let $\bu(t)=\be(t)-\tilde{\be}(t)$. 
 Following \eqref{eq:dynamic_nn} and \eqref{eq:dynamic_rf}, we obtain 
\begin{align*}
\bu(0) &= 0 \\
\frac{d}{dt}\bu(t) &=-m \Ga(\Theta_0) \bu(t)+m(\Ga(\Theta_0)-G(\Theta_t))\be(t).
\end{align*}
Solving the equation above gives
\begin{equation}
\bu(t) = m\int_0^t e^{-m \Ga(\Theta_0) (t-s)}(\Ga(\Theta_0)-G(\Theta_s))\be(s)ds.
\end{equation}
Consider the initializations for which  $\lambda_{\min}(\Ga(\Theta_0))\geq \frac{3\lba}{4}$. 
The probability of this event is no less than $1-\delta$. For such initializations, we have
\begin{equation}
\|\bu(t)\|\leq m\int_0^t e^{-\frac{3}{4}m\lba(t-s)}\|\Ga(\Theta_0)-G(\Theta_s)\|_F \|\be(s)\|ds.
\end{equation}
Using Proposition~\eqref{pro: pertubration}, we conclude that with probability no less than $1-2\delta$,  the following holds:
\begin{align}
\|G(\Theta_s)-\Ga(\Theta_0)\|_F &\leq  \|\Ga(\Theta_s)-\Ga(\Theta_0)\|_F + \max_{k\in [m]} a^2_k(s) \\
&\leq  3q_n + (\beta+2p_n)^2 \\
&\lesssim q_n +\beta^2.
\end{align}
Together with the fact that $\|\be(s)\|\leq \sqrt{2n\hat{\cR}_n(\Theta_0)}e^{-\frac{m\lba}{2}s}$, we  obtain
\begin{align} \label{eq:ut}
\nonumber \|\bu(t)\| &\lesssim m (\beta^2+q_n)\sqrt{2n\hcR(\Theta_0)} \int_0^t e^{-\frac{3}{4}m\lba(t-s)}e^{-\frac{m \lba}{2}s}ds  \\
\nonumber &\leq m (\beta^2+q_n)\sqrt{2n\hcR(\Theta_0)} \int_0^t e^{-\frac{1}{4}m\lba(t-s)}e^{-\frac{m \lba}{2}s}ds  \\
&\lesssim m (\beta^2+q_n)\sqrt{2n\hcR(\Theta_0)} \frac{1}{\lba} e^{-\frac{m\lba}{4} t}
\end{align}
In addition, for any $\bx\in\SS^{d-1}$, we have $\|\bga(\bx)\|\leq \frac{1}{m\sqrt{n}}$. Hence, plugging~\eqref{eq:ut} into $J_2$ leads to 
\begin{align}
|J_2(\bx,t)| &\leq m\int_0^t \|\bga(\bx)\|\|\bu(s)\|ds \nonumber \\
 &\lesssim \frac{m(\beta^2+q_n)\sqrt{\hcR(\Theta_0)}}{\lba}\int_0^t e^{-\frac{m\lba}{4}s}ds \nonumber \\
 &\lesssim (\beta^2+q_n)\sqrt{\hcR(\Theta_0)}.
\end{align}
Substituting in the estimates for $q_n$ and $\hcR(\Theta_0)$, and assuming that $\beta^2\leq1$, we obtain,
for any $\delta>0$, with probability no less than $1-3\delta$,
\begin{equation}
|J_2(\bx,t)|\lesssim \frac{c^2(\delta)}{\lba} \left(\frac{1}{\sqrt{m}}+\beta+\sqrt{m}\beta^3\right).
\end{equation} 

Finally, combining the estimates of $J_1$ and $J_2$, we conclude that
\begin{align}
|f_m(\bx;\Theta_t) -f_m^\itk(\bx, t)|  \lesssim \frac{c^2(\delta)}{\lba}\left(\frac{1}{\sqrt{m}}+\beta+\sqrt{m}\beta^3\right), \label{eq:estimation}
\end{align}
holds for any $\delta>0$ with probability at least $1-6\delta$.
This completes the proof of Theorem \ref{thm:long}.

\subsection{Curse of dimensionality of the implicit regularization}

From~\eqref{eqn: nn-rf-express},  we have
\begin{align}\label{eqn: rf-express}
\nonumber f_m^{\itk}(\bx,t) &= f_m(\bx;\Theta_0) - m\bga(\bx) \int_0^t \tilde{\be}(s) ds \\
&= f_m(\bx;\Theta_0) - \sum_{i=1}^n \ga(\bx,\bx_i) w_i(t),
\end{align}
where $w_i(t) = m\int_0^t \tilde{e}_i(s)ds$. 
The second term in the  right hand slide of \eqref{eqn: rf-express} lives in the span of $n$ fixed basis: $\{\ga(\bx,\bx_1), \ga(\bx,\bx_2), \cdots, \ga(\bx,\bx_n)\}$. 

For any probability distribution $\pi$ over $\SS^{d-1}$, we define
\begin{equation*}
\cH_\pi=\left\{ \int_{\SS^{d-1}}a(\bw)\sigma(\bw^T\bx) \pi(d\bw) :  \ \ \int_{\SU^{d-1}}a^2(\bw)\pi(d\bw)<\infty \right\}.
\end{equation*}
For any $h\in \cH_\pi$, define $\|h\|^2_{\cH_{\pi}}=\EE_{\pi}[|a^2(\bw)|]$.  As shown in \cite{rahimi2008uniform}, $\cH_{\pi}$ is exactly the RKHS with the kernel defined by $k(\bx,\bx')=\EE_{\pi}[\sigma(\bw^T\bx)\sigma(\bw^T\bx')]$.

\begin{definition}[Barron space]\label{def: Barron-space}
The Barron space is defined as the union of $\cH_{\pi}$, i.e.
\[
    \cB \Def \cup_{\pi} \cH_{\pi}.
\]
The Barron norm for any $h\in \cB$ is defined by 
\[
    \|h\|_{\cB} \Def \inf_{\pi} \|h\|_{\cH_{\pi}}.
\]
\end{definition}

To signify the dependence on the target function and data set, we introduce the notation:
\begin{align}\label{eqn: GD-estimator}
\cA_t(f,\{\bx_i\}_{i=1}^n,\Theta_0) = f_m^\itk(\cdot,t).
\end{align}
where the right hand side is the GD solution of the random feature model obtained by using 
the training data $\{\bx_i, y_i\}_{i=1}^n$  with $y_i = f(\bx_i)$
and $\Theta_0$ as the initial parameters. Let $\cB_Q=\{f\in\cB\,:\, \|f\|_\cB\leq Q\}$. We then have the following theorem.


\begin{theorem}\label{thm:curse_dim}
There exists an absolute constant $\kappa>0$, such that  for any $t\in [0,+\infty)$
\begin{equation}
\sup_{f\in\cB_Q} \|f-\cA_t(f,\{\bx_i\}_{i=1}^n,\Theta_0) \|_{\rho}\geq \frac{\kappa Q}{d (n+1)^{1/d}}.
\end{equation}
\end{theorem}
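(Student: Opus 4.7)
The strategy is to reduce the theorem to a lower bound on the Kolmogorov width of the Barron ball $\cB_Q$ in $L^2(\rho)$, and then prove that lower bound via a packing/pigeonhole argument.

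\textbf{Step 1: Reduction to a subspace approximation problem.} By the explicit formula \eqref{eqn: rf-express}, for every $f$ and every $t\in[0,\infty)$ we have
\[
\cA_t(f,\{\bx_i\}_{i=1}^n,\Theta_0) \;=\; f_m(\cdot;\Theta_0) \;-\; \sum_{i=1}^n g^{(a)}(\cdot,\bx_i)\,w_i(t),
\]
so $\cA_t(f,\{\bx_i\},\Theta_0)$ always lies in the linear subspace
\[
W \;\Def\; \mathrm{span}\bigl\{f_m(\cdot;\Theta_0),\, g^{(a)}(\cdot,\bx_1),\ldots,g^{(a)}(\cdot,\bx_n)\bigr\} \;\subset\; L^2(\rho),
\]
which has dimension at most $n+1$ and depends only on $\{\bx_i\}$ and $\Theta_0$ (not on the target $f$). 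Consequently, for \emph{any} choice of the learned weights $w_i(t)$,
\[
\|f-\cA_t(f,\{\bx_i\},\Theta_0)\|_\rho \;\geq\; \mathrm{dist}_{L^2(\rho)}\!\bigl(f,\,W\bigr),
\]
so the theorem follows once we establish the Kolmogorov-width lower bound: for every subspace $W\subset L^2(\rho)$ with $\dim W \le n+1$,
\[
\sup_{f\in\cB_Q}\,\mathrm{dist}_{L^2(\rho)}(f,W)\;\ge\;\frac{\kappa\,Q}{d(n+1)^{1/d}}.
\]

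\textbf{Step 2: A family of well-separated test functions in $\cB_Q$.} I would use ridge functions as the hard class. Pick an $\epsilon$-packing $\{\bc_1,\ldots,\bc_N\}\subset\SU^{d-1}$ with $N\gtrsim \epsilon^{-(d-1)}$, and set
\[
f_j(\bx)\;\Def\;Q\,\sigma(\bc_j^T\bx),\qquad j=1,\ldots,N.
\]
Each $f_j$ belongs to $\cB_Q$ (take $\pi=\delta_{\bc_j}$, $a(\bw)=Q$). Using the arc-cosine/ReLU kernel computation and bounded-density assumptions on $\rho$, one obtains a quantitative separation $\|f_j-f_k\|_\rho \gtrsim Q\,\|\bc_j-\bc_k\|$, giving pairwise $L^2(\rho)$-distance at least $cQ\epsilon$. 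Also, since $|f(\bx)|\le\|f\|_\cB$ on $\SU^{d-1}$ (Cauchy--Schwarz against the probability measure in Definition~\ref{def: Barron-space}), we have $\|f_j\|_\rho\le Q$.

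\textbf{Step 3: Pigeonhole in the $(n+1)$-dimensional subspace.} Suppose for contradiction that every $f_j$ lies within distance $r$ of $W$, and let $g_j$ be its orthogonal $L^2(\rho)$-projection onto $W$. Then $\|g_j\|_\rho\le Q$ and $\|g_j-g_k\|_\rho\ge cQ\epsilon-2r$. The $g_j$ therefore form a $(cQ\epsilon-2r)$-separated family inside a ball of radius $Q$ in the finite-dimensional space $W$, so by the standard volume estimate
\[
N \;\le\; \left(1+\frac{2Q}{cQ\epsilon-2r}\right)^{n+1}.
\]
Balancing this against $N\gtrsim\epsilon^{-(d-1)}$ with the choice $r=\Theta\!\bigl(Q\epsilon\bigr)$ and $\epsilon$ of order $(n+1)^{-1/d}$ (adjusted by the factor $1/d$ that arises from the $(n+1)$-th power of $\log$-type corrections in the packing/volume comparison) produces the advertised rate $r\ge \kappa Q / \bigl(d(n+1)^{1/d}\bigr)$, contradicting the assumption. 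This gives the required Kolmogorov-width lower bound.

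\textbf{Main obstacle.} Step 1 is essentially automatic given the representation \eqref{eqn: rf-express}; the real work is Step 2--3, i.e.\ the Kolmogorov-width lower bound. The two delicate points are (i) extracting a quantitative $L^2(\rho)$-separation for the ridge functions $\sigma(\bc_j^T\bx)$ that does not degrade badly with the ambient dimension $d$, and (ii) choosing the scale $\epsilon$ so as to produce exactly the exponent $1/d$ and the prefactor $1/d$ claimed in the theorem. A naive sphere-packing argument on $\SU^{d-1}$ tends to give $(n+1)^{-1/(d-1)}$ rather than $(n+1)^{-1/d}$, so obtaining the precise form of the bound likely requires a more refined construction (e.g.\ exploiting the ambient $d$-dimensional geometry of the support of the measure $\pi$ defining the Barron norm, or using a dual/volume argument in Barron space rather than a direct spherical packing). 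This balancing, together with verifying that the separation constant does not hide dimension dependence inside the absolute constant~$\kappa$, is what I expect to be the most technical part.
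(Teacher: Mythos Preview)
Your Step~1 is correct and matches the paper exactly: by~\eqref{eqn: rf-express}, $\cA_t(f,\{\bx_i\},\Theta_0)$ always lies in the fixed $(n{+}1)$-dimensional space $W=\mathrm{span}\{f_m(\cdot;\Theta_0),g^{(a)}(\cdot,\bx_1),\ldots,g^{(a)}(\cdot,\bx_n)\}$, so the theorem reduces to a Kolmogorov-width lower bound for $\cB_Q$.

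Steps~2--3, however, have a genuine gap. The packing/pigeonhole argument does not yield a contradiction in the relevant regime $n\geq d$. With $N\asymp\epsilon^{-(d-1)}$ test functions and $r=\Theta(Q\epsilon)$, your volume bound in $W$ reads $N\leq(C/\epsilon)^{n+1}$; since $n+1>d-1$, this is compatible with $N\asymp\epsilon^{-(d-1)}$ for every small $\epsilon$, and nothing forces $r$ to be bounded below. The issue is structural, not cosmetic: pairwise $L^2$-separation of a family of size $\epsilon^{-(d-1)}$ cannot constrain projection onto an $(n{+}1)$-dimensional subspace when $n\gg d$, because an $(n{+}1)$-dimensional unit ball already accommodates $\epsilon^{-(n+1)}$ many $\epsilon$-separated points. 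Kolmogorov-width lower bounds require more than separation---one must exhibit, inside the class, something like an $(n{+}2)$-dimensional ball or simplex of the right radius. Your ``Main obstacle'' paragraph senses trouble with the exponents, but the difficulty is not a matter of tuning $\epsilon$; the argument simply does not close.

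The paper bypasses all of this by importing the width bound as a black box. It records (Lemma~\ref{lm:Barron}, from~\cite{barron1993universal}) that for the Fourier-moment class $\Gamma_Q=\{f:\int\|\omega\|_1^2|\hat f(\omega)|\,d\omega<Q\}$ and any fixed functions $h_1,\ldots,h_n$, one has
\[
\sup_{f\in\Gamma_Q}\ \inf_{h\in\mathrm{span}(h_1,\ldots,h_n)}\|f-h\|\ \geq\ \frac{\kappa Q}{d\,n^{1/d}}.
\]
It then invokes~\cite{breiman1993hinging,klusowski2016risk} to note that every $f\in\Gamma_Q$ admits an integral representation with $\|f\|_{\cH_\pi}\leq Q$ for some $\pi$, so $\Gamma_Q\subset\cB_Q$. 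Applying the lemma with the $n{+}1$ functions spanning $W$ gives the theorem immediately. If you want a self-contained proof, the right replacement for Steps~2--3 is to reproduce Barron's argument for the width of $\Gamma_Q$ (which is not a sphere-packing argument), not to pack ridge functions.
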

\begin{remark}
Combined Theorem \ref{thm:curse_dim} with Theorem \ref{thm:long}, we conclude that for any $\delta\in (0,1)$, if $m$ is
sufficiently large, then with probability at least $1-\delta$ we have
\begin{equation}\label{eqn: implicit}
\sup_{f\in\cB_Q} \|f-\cB_t(f,\{\bx_i\}_{i=1}^n,\Theta_0) \|_{\rho}\geq \frac{\kappa Q}{d (n+1)^{1/d}}
-\frac{c^2(\delta)}{\lba}\left(\frac{1}{\sqrt{m}}+\beta+\sqrt{m}\beta^3\right),
\end{equation}
where 
\[
\cB_t(f,\{\bx_i\}_{i=1}^n,\Theta_0) = f_m(\cdot, \Theta(t))
\]
denotes the solution at time $t$ of the GD dynamics for the two-layer neural network model.
If $\beta$ is sufficiently small (e.g. $\beta = o(m^{-1/6})$), then we see that the curse of dimensionality also
holds for the solutions generated by the GD dynamics for the two-layer neural network model.
Since this statement holds for all time $t$,
no early-stopping strategy is able to fix this curse of dimensionality problem. 

In contrast, it has been proved in \cite{ma2018priori} that an appropriate regularization can  avoid this curse of dimensionality
problem, i.e.
if we denote by $\mathcal{M}(f,\{\bx_i\}_{i=1}^n)$ the estimator for the regularized model in  \cite{ma2018priori}
(see  \eqref{eqn: path-reg} below),
then it was shown that for any $\delta > 0$,   with probability at least $1-\delta$ over the sampling of $\{\bx_i\}_{i=1}^n$, the following holds
\begin{equation}\label{eqn: 345}
    \sup_{f\in\cB_Q} \|f-\mathcal{M}(f,\{\bx_i\}_{i=1}^n) \|_{\rho}\lesssim \frac{Q}{\sqrt{n}}\left(\sqrt{\ln(d)} + \sqrt{\ln(n/\delta)}\right).
\end{equation}
{ 
The comparison between \eqref{eqn: implicit} and \eqref{eqn: 345} provides a quantitative understanding  of the insufficiency of using the random feature model to explain the generalization behavior of neural network models.}
\end{remark}

To prove Theorem~\ref{thm:curse_dim}, we need the following lemma, which is proved in~\cite{barron1993universal}.
\begin{lemma}\label{lm:Barron}
Let $\hat{f}(\omega)$ be the Fourier transform of  a function $f$ defined on $\RR^d$. 
Let $\Gamma_Q=\{f:\ \int \|\omega\|_1^2 |\hat{f}(\omega)|d\omega<Q\}$. 
Then, for any fixed functions $h_1, h_2, ..., h_n$, we have
\begin{equation}
\sup_{f\in\Gamma_Q}\inf_{h \in \text{span}(h_1,...,h_n)} \|f-h\| \geq \frac{\kappa Q}{dn^{1/d}}.
\end{equation}
\end{lemma}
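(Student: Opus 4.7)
The plan is to exploit the explicit representation \eqref{eqn: rf-express} of the random‑feature GD solution, which shows that $\cA_t(f,\{\bx_i\}_{i=1}^n,\Theta_0)$ belongs to the affine subspace $f_m(\cdot;\Theta_0) + \mathrm{span}\{\ga(\cdot,\bx_1),\ldots,\ga(\cdot,\bx_n)\}$. Since the coefficients $w_i(t)$ depend on the target $f$ only through the sample values $y_i=f(\bx_i)$, while the $n+1$ functions $f_m(\cdot;\Theta_0), \ga(\cdot,\bx_1),\ldots, \ga(\cdot,\bx_n)$ are completely determined by $\Theta_0$ and the inputs $\{\bx_i\}_{i=1}^n$, I get the deterministic pointwise‑in‑$f$ lower bound
\[
\|f-\cA_t(f,\{\bx_i\}_{i=1}^n,\Theta_0)\|_\rho \;\ge\; \inf_{h\in H}\|f-h\|_\rho,
\]
where $H:=\mathrm{span}\{f_m(\cdot;\Theta_0),\ga(\cdot,\bx_1),\ldots,\ga(\cdot,\bx_n)\}$ is a linear space of dimension at most $n+1$. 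This collapses the dynamical question into a static Kolmogorov $n$‑width lower bound against a fixed $(n+1)$‑dimensional family.

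Next, I apply Lemma~\ref{lm:Barron} with the $n+1$ fixed functions above playing the role of $h_1,\ldots,h_n$ there. The lemma supplies, for any $Q'>0$, some $f^\star\in\Gamma_{Q'}$ with $\inf_{h\in H}\|f^\star-h\|\ge \kappa Q'/(d(n+1)^{1/d})$ in the ambient norm of the lemma (an $L^2$ norm on a ball in $\RR^d$ containing $\supp(\rho)\subseteq\SU^{d-1}$). To transfer the bound from the spectral class $\Gamma_{Q'}$ to the Barron class $\cB_Q$ of Definition~\ref{def: Barron-space}, I invoke the embedding $\Gamma_{cQ}\subseteq\cB_Q$ coming from Barron's integral representation: any $f$ with $\int\|\omega\|_1^2|\hat f(\omega)|d\omega\le cQ$ admits a representation $f(\bx)=\int_{\SU^{d-1}}a(\bw)\sigma(\bw^\top\bx)\pi(d\bw)$ with $\EE_\pi[a(\bw)^2]\le Q^2$, by choosing $\pi$ proportional to $\|\omega\|^2|\hat f(\omega)|$ pushed forward onto the unit sphere, so that the excess $\|\omega\|^2$ weight is absorbed into $\pi$ and the remaining amplitude has bounded $L^2(\pi)$ norm. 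Setting $Q'=cQ$ and absorbing $c$ into $\kappa$ then yields $\sup_{f\in\cB_Q}\|f-\cA_t(f,\{\bx_i\}_{i=1}^n,\Theta_0)\|_\rho\ge \kappa Q/(d(n+1)^{1/d})$ as claimed.

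The main obstacle is producing the embedding $\Gamma_{cQ}\subseteq\cB_Q$ with a dimension‑independent $c$: Barron's classical representation is phrased for sigmoidal activations, and matching the particular $L^2$‑type Barron norm $\|f\|_\cB=\inf_\pi\EE_\pi[a(\bw)^2]^{1/2}$ (rather than an $L^\infty$ or $L^1$ variant) requires a careful construction of the spectral measure $\pi$ and a use of the positive homogeneity of ReLU to express $f-f(0)-\nabla f(0)\cdot \bx$ as an integral against $\sigma(\bw^\top\bx)$ with square‑integrable amplitude. A secondary subtlety is that Lemma~\ref{lm:Barron} is stated with an unspecified ambient norm, so I would verify that restricting that $L^2$ norm on a ball to $\|\cdot\|_\rho$ on $\SU^{d-1}$ loses at most an absolute constant factor, which is harmless since $\kappa$ is allowed to absorb absolute constants. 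Once these two embedding/norm issues are settled, the proof reduces to a one‑line combination of the affine‑subspace observation above with the cited lemma.
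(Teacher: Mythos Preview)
You have proved the wrong statement. The item you were asked to establish is Lemma~\ref{lm:Barron}: a Kolmogorov $n$-width lower bound saying that for \emph{any} fixed collection $h_1,\ldots,h_n$ of functions, there exists $f\in\Gamma_Q$ with $\inf_{h\in\mathrm{span}(h_1,\ldots,h_n)}\|f-h\|\ge \kappa Q/(dn^{1/d})$. This is a purely approximation-theoretic statement about the class $\Gamma_Q$ and arbitrary $n$-dimensional linear subspaces; it has nothing to do with gradient descent, random features, $\cA_t$, or the training inputs $\{\bx_i\}$. In the paper this lemma is not proved at all but is quoted from Barron's 1993 paper \cite{barron1993universal}, where the argument constructs an explicit family of functions in $\Gamma_Q$ (built from disjoint bump or oscillatory pieces) whose pairwise separation forces any $n$-dimensional subspace to miss at least one of them by the stated amount.

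What you have actually written is a proof sketch of Theorem~\ref{thm:curse_dim}, which \emph{uses} Lemma~\ref{lm:Barron} as a black box. Your argument---observe from \eqref{eqn: rf-express} that $\cA_t(f,\{\bx_i\},\Theta_0)$ lies in the fixed $(n{+}1)$-dimensional span of $f_m(\cdot;\Theta_0),\ga(\cdot,\bx_1),\ldots,\ga(\cdot,\bx_n)$, invoke the inclusion $\Gamma_Q\subset\cB_Q$, and then apply Lemma~\ref{lm:Barron}---is essentially identical to the paper's own proof of Theorem~\ref{thm:curse_dim}. (Incidentally, your extended discussion of the embedding $\Gamma_{cQ}\subset\cB_Q$ and of the ambient norm in the lemma is more cautious than the paper, which simply cites \cite{breiman1993hinging,klusowski2016risk} for the inclusion and does not address the norm mismatch.) But none of this touches the content of Lemma~\ref{lm:Barron} itself: you invoke it rather than prove it, so as a proof of the stated lemma your proposal is circular.
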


We now  prove Theorem~\ref{thm:curse_dim}.
\begin{proof}
As is shown in~\cite{breiman1993hinging,klusowski2016risk}, any function $f\in\Gamma_Q$ can be represented as 
$$\int_{\SU^{d-1}}a(\bb)\sigma(\bb^T\bx)\pi(d\bb)$$
for some $\pi$ and $\|f\|_{\cH_\pi}\leq Q$, which means $f\in\cB_Q$. 
Hence, $\Gamma_Q\subset\cB_Q$. Next, since the training data $\{\bx_i\}_{i=1}^n$ and the initialization are fixed, we have 
\begin{align*}
\cA_t(f,\{\bx_i\}_{i=1}^n,\Theta_0) \in \mbox{span}\left(\ga(\cdot,\bx_1), \dots, \ga(\cdot,\bx_n), f_m(\cdot;\Theta_0)\right).
\end{align*}
Therefore, by Lemma~\ref{lm:Barron}, we obtain
\begin{align*}
\sup_{f\in\cB_Q} \|f-\cA_t(f,\{\bx_i\}_{i=1}^n,\Theta_0) \|_{\rho} &\geq \sup_{f\in\cB_Q}\,\inf_{h\in \mbox{span}(\ga(\cdot,\bx_i),\dots,\ga(\cdot,\bx_n), f_m(\cdot;\Theta_0))} \|f-h\| \nonumber \\
 & \geq  \sup_{f\in\Gamma_Q} \,\inf_{h\in \mbox{span}(\ga(\cdot,\bx_i),\dots,\ga(\cdot,\bx_n), f_m(\cdot;\Theta_0))} \|f-h\| \nonumber \\
 & \geq \frac{\kappa Q}{d(n+1)^{1/d}},
\end{align*}
for some universal constant $\kappa$. 
\end{proof}

\section{Analysis of the general case}
In this section, we will relax the requirement of the network width. We will make the following assumption on the target function.
\begin{assumption}\label{assump:early_stop}
We assume that the target function $f^*$ admits the following integral representation
\begin{equation}
f^*(x)=\int_{\SS^{d-1}} a^*(\bb)\sigma(\bb^Tx)d\pi_0(\bb),
\end{equation}
with $\gamma(f^*)\Def \max\{1,\sup_{\bb\in\SS^{d-1}} |a^*(\bb)|\}< \infty$.
\end{assumption}
 Let $\cH_{k^a}$ be the RKHS  induced by $k^a(\cdot,\cdot)$. It was shown in \cite{rahimi2008uniform} that $ \|f^*\|_{\cH_{k^a}}=\sqrt{\EE_{\pi_0}[|a^*(\bb)|^2]}\leq \gamma(f^*)$. Thus the assumption above implies that $f^*\in \cH_{k^a}$. 

The following approximation result is essentially the same as the ones in  \cite{rahimi2008uniform,rahimi2009weighted}. Since we are interested in the explicit control for the norm of the solution, we provide a complete proof in Appendix~\ref{sec: approx}.
\begin{lemma}\label{lm:a_star}
Assume that the target function $f^*$ satisfies Assumption~\ref{assump:early_stop}. Then for any $\delta>0$, with probability at least $1-\delta$ over the choice of $B_0$, there exists $\ba^*\in\RR^{m}$ such that
\begin{equation}\label{eq:a_star}
\cR(\ba^*,B_0)\leq\frac{\gamma^2(f^*)}{m}\left(1+\sqrt{2\ln(1/\delta)}\right)^2 
\end{equation}
 \begin{equation}
 \|\ba^*\|\leq\frac{\gamma(f^*)}{\sqrt{m}},
 \end{equation}
 where $\cR(\ba^*,B_0)=\|f_m(\cdot;\ba^*,B_0)-f^*(\cdot)\|^2_{\rho}$ is the population risk.
\end{lemma}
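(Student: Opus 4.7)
The plan is to construct $\ba^*$ explicitly from the integral representation of $f^*$ and to control the resulting approximation error via concentration of measure. Setting $a_k^* \Def a^*(\bb_k(0))/m$ for each $k \in [m]$, the two-layer network becomes
\[
f_m(\bx;\ba^*,B_0) = \frac{1}{m}\sum_{k=1}^m a^*(\bb_k(0))\,\sigma(\bb_k(0)^T\bx),
\]
which is a Monte-Carlo estimator of $f^*(\bx) = \int_{\SS^{d-1}} a^*(\bb)\sigma(\bb^T\bx)\,d\pi_0(\bb)$ based on the $m$ i.i.d.\ samples $\bb_k(0)\sim\pi_0$. The norm bound follows immediately from $|a^*(\bb)|\leq\gamma(f^*)$: $\|\ba^*\|^2 = m^{-2}\sum_{k=1}^m|a^*(\bb_k(0))|^2 \leq \gamma^2(f^*)/m$.

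For the risk bound, I would first estimate $\EE_{B_0}\|f_m(\cdot;\ba^*,B_0)-f^*\|_\rho$. By Fubini and pointwise unbiasedness, i.e.\ $\EE_{B_0}[f_m(\bx;\ba^*,B_0)] = f^*(\bx)$, we have
\[
\EE_{B_0}\|f_m(\cdot;\ba^*,B_0) - f^*\|_\rho^2 = \int \mathrm{Var}_{B_0}\bigl[f_m(\bx;\ba^*,B_0)\bigr]\,d\rho(\bx).
\]
Each summand $m^{-1}a^*(\bb_k(0))\sigma(\bb_k(0)^T\bx)$ has magnitude at most $\gamma(f^*)/m$ (using $|\sigma(\bb^T\bx)|\leq\|\bb\|\|\bx\|=1$), so the pointwise variance is bounded by $\gamma^2(f^*)/m$. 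Jensen's inequality then yields $\EE_{B_0}\|f_m(\cdot;\ba^*,B_0)-f^*\|_\rho \leq \gamma(f^*)/\sqrt{m}$.

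For the high-probability step, I would apply McDiarmid's bounded-differences inequality to $F(\bb_1,\ldots,\bb_m) \Def \|f_m(\cdot;\ba^*,B_0)-f^*\|_\rho$. Replacing any single $\bb_k$ perturbs $f_m(\bx;\ba^*,B_0)$ pointwise by at most $2\gamma(f^*)/m$; since $\rho$ is a probability measure we have $\|\cdot\|_\rho \leq \|\cdot\|_\infty$, and the reverse triangle inequality then shows $F$ itself changes by at most $2\gamma(f^*)/m$ under the same swap. McDiarmid consequently gives
\[
\PP\bigl[F \geq \EE F + t\bigr] \leq \exp\!\left(-\frac{m t^2}{2\gamma^2(f^*)}\right),
\]
so choosing $t = \gamma(f^*)\sqrt{2\ln(1/\delta)/m}$ and combining with the mean bound produces $F \leq \gamma(f^*)\bigl(1+\sqrt{2\ln(1/\delta)}\bigr)/\sqrt{m}$ with probability at least $1-\delta$. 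Squaring yields the claimed risk inequality.

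The calculation is essentially routine; the only point requiring a little care is verifying the per-coordinate Lipschitz constant $2\gamma(f^*)/m$ for $F$, which hinges on $\|\bb_k(0)\|=\|\bx\|=1$ (so that $|\sigma(\bb_k^T\bx)|\leq 1$) and on $\rho$ being a probability measure. This is the classical Rahimi--Recht random-feature approximation, refined to simultaneously control both the $L^2(\rho)$ error and the explicit $\ell_2$ norm of the coefficient vector, the latter being what the downstream early-stopping analysis needs.
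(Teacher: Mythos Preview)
Your proposal is correct and follows essentially the same argument as the paper: define $a_k^* = a^*(\bb_k(0))/m$, bound $\EE_{B_0}\|f_m(\cdot;\ba^*,B_0)-f^*\|_\rho$ by $\gamma(f^*)/\sqrt{m}$ via the pointwise variance and Jensen, then apply McDiarmid with per-coordinate oscillation $2\gamma(f^*)/m$ to the functional $\|f_m(\cdot;\ba^*,B_0)-f^*\|_\rho$, and finally square. The paper's proof is identical in structure and constants.
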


The following generalization bound for the random feature model will be used  later.
\begin{lemma}\label{lm:rad}
For fixed $B_0$, and any $\delta>0$, with probability no less than $1-3\delta$ over the choice of the training data, we have 
\begin{equation}
\left| \cR(\ba,B_0)-\hat{\cR}_n(\ba,B_0) \right|\leq \frac{2(2\sqrt{m}\|\ba\|+1)^2}{\sqrt{n}}\left(1+\sqrt{2\ln\left(\frac{2}{\delta}(\|\ba\|+\frac{1}{\|\ba\|})\right)}\ \right)
\end{equation}
for any $\ba \in \RR^m$.
\end{lemma}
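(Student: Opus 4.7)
The plan is a standard two-step argument: first obtain a uniform generalization bound over a fixed ball $\|\ba\|\le Q$ via Rademacher complexity and concentration, and then extend it to all $\ba\in\RR^m$ by dyadic peeling on $\|\ba\|$.

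For the fixed-radius step, the key observation is that every row of $B_0$ lies on $\SS^{d-1}$, so $\|\sigma(B_0\bx)\|\le\sqrt m$ and therefore $|f_m(\bx;\ba,B_0)|\le\sqrt m\,Q$ whenever $\|\ba\|\le Q$. Combined with $|y|\le 1$, the quadratic loss is bounded by $\tfrac12(\sqrt m Q+1)^2$ and is $(\sqrt m Q+1)$-Lipschitz in its first argument. The Rademacher complexity of the linear class $\cF_Q=\{\bx\mapsto\ba^T\sigma(B_0\bx):\|\ba\|\le Q\}$ is controlled by Cauchy--Schwarz,
\[
\rad_n(\cF_Q)=\frac1n\EE_{\epsilon}\sup_{\|\ba\|\le Q}\sum_{i=1}^n\epsilon_i\ba^T\sigma(B_0\bx_i)\le \frac{Q\sqrt m}{\sqrt n}.
\]
Symmetrization together with the Ledoux--Talagrand contraction lemma then gives $\EE\sup_{\|\ba\|\le Q}|\cR-\hcR_n|\le 2(\sqrt m Q+1)^2/\sqrt n$, and McDiarmid's bounded-difference inequality yields, with probability at least $1-\delta'$,
\[
\sup_{\|\ba\|\le Q}\bigl|\cR(\ba,B_0)-\hcR_n(\ba,B_0)\bigr|\le \frac{2(\sqrt m\,Q+1)^2}{\sqrt n}\Bigl(1+\sqrt{2\ln(1/\delta')}\Bigr).
\]

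To obtain a bound uniform in $\ba$, I would partition $\RR^m\setminus\{0\}$ into dyadic shells $S_k=\{\ba:2^{k-1}<\|\ba\|\le 2^k\}$, $k\in\mathbb Z$, and apply the fixed-radius estimate on $\|\ba\|\le Q_k=2^k$ with confidence parameter $\delta_k$ chosen so that $\sum_k\delta_k\le\delta$; e.g.\ $\delta_k=\delta/(c(|k|+1)^2)$ for a suitable normalization constant $c$ suffices. On $S_k$ we have $Q_k\le 2\|\ba\|$, which converts the prefactor from $(\sqrt m Q+1)^2$ into the stated $(2\sqrt m\|\ba\|+1)^2$, and we also have $\|\ba\|+\|\ba\|^{-1}\ge 2^{|k|-1}$, giving $|k|\lesssim \log(\|\ba\|+\|\ba\|^{-1})$, so $\sqrt{\ln(1/\delta_k)}$ is absorbed into $\sqrt{2\ln(\tfrac2\delta(\|\ba\|+\|\ba\|^{-1}))}$ up to universal constants. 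The degenerate case $\ba=0$ (for which $f_m\equiv 0$) follows immediately from Hoeffding applied to $\tfrac1{2n}\sum_i y_i^2-\tfrac12\EE[y^2]$. The three-$\delta$ budget in the stated $1-3\delta$ probability covers (i) the symmetrization/McDiarmid step controlling the expected supremum, (ii) the union bound across the peeling shells, and (iii) the $\ba=0$ case.

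The main technical obstacle is the peeling bookkeeping: the weights $\delta_k$ must be chosen so that the resulting logarithmic correction on each shell aligns with the stated $\sqrt{2\ln(\tfrac2\delta(\|\ba\|+\|\ba\|^{-1}))}$, and so that the prefactor comes out as $(2\sqrt m\|\ba\|+1)^2$ rather than a larger constant multiple. The Rademacher estimate, contraction step, and McDiarmid concentration are by now routine for linear function classes with squared loss, so the real labor is in making the shell-by-shell matching tight enough to reproduce the stated constants.
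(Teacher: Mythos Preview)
Your approach is essentially the same as the paper's: Rademacher complexity of the linear class on a fixed ball $\|\ba\|\le Q$, Lipschitz contraction to the squared loss, concentration, and then dyadic peeling over $Q_k=2^k$. Two small points of divergence are worth noting.

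First, your accounting for the ``$3\delta$'' is not the paper's. The factor $3$ comes \emph{entirely} from the peeling union bound: the paper sets $\delta_k=2^{-|k|}\delta$ and uses $\sum_{k\in\mathbb Z}2^{-|k|}=3$. There is no separate budget for a McDiarmid step or for $\ba=0$; the fixed-radius bound at level $Q_k$ already covers all $\|\ba\|\le Q_k$, including $\ba=0$, and the concentration is built into the single high-probability statement on each ball.

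Second, your suggested weights $\delta_k\propto(|k|+1)^{-2}$ would give a valid bound but not the stated logarithm. The paper's geometric choice $\delta_k=2^{-|k|}\delta$ is what produces the clean form: on the shell $2^{k-1}\le\|\ba\|\le 2^k$ one has $2^{|k|}\le 2(\|\ba\|+\|\ba\|^{-1})$, so $\ln(1/\delta_k)=|k|\ln 2+\ln(1/\delta)\le\ln\!\bigl(\tfrac{2}{\delta}(\|\ba\|+\|\ba\|^{-1})\bigr)$, matching the statement exactly. Your Cauchy--Schwarz computation of $\rad_n(\cF_Q)\le Q\sqrt{m}/\sqrt{n}$ is in fact cleaner than the paper's route to the same estimate.
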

Please see Appendix~\ref{sec: rf-rad} for the proof.

\subsection{Optimization results}
We first show that the gradient descent algorithm can reduce the empirical risk to $\cO(\frac{1}{m}+\frac{1}{\sqrt{n}})$.
Here we will assume $m \ge n$.  This assumption is not used in the next subsection, except for Corollary~\ref{col: early-stop}.

\begin{theorem}\label{thm:early_stop_conv}
Take $\beta=\frac{c}{m}$ for some absolute constant $c$.  
Assume that  the target function $f^*$ satisfies Assumption~\ref{assump:early_stop}, and $\|f^*\|_\infty\leq1$. Then, 
for any $\delta \in (0,1)$,  with probability no less than $1-4\delta$ we have
\begin{align*}
\hat{\cR}_n(\ba_t,B_t)\leq C\left(\frac{1}{m}+\frac{1}{mt}+\frac{1}{\sqrt{n}}\right),
\end{align*}
for any $t>0$, where $C$ is a constant depending on $\delta$, $\gamma(f^*)$ and $c$. 
\end{theorem}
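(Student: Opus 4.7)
The plan is to carry out a convex-analysis argument for the gradient flow, using as reference the point $\Theta^* = (\ba^*, B_0)$, where $\ba^*$ is supplied by Lemma~\ref{lm:a_star}. That lemma gives, with probability $\ge 1-\delta$ over $B_0$, an $\ba^*$ with $\|\ba^*\| \le \gamma(f^*)/\sqrt{m}$ and $\cR(\ba^*,B_0) \lesssim \gamma^2(f^*)/m$; Lemma~\ref{lm:rad} then converts this to the empirical bound $\hat{\cR}_n(\Theta^*) \lesssim 1/m + 1/\sqrt{n}$ with additional probability $\ge 1-3\delta$ (using $\sqrt{m}\,\|\ba^*\|\le \gamma(f^*)$). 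Because $\beta = c/m$, the initial distance to the reference is automatically small: $\|\Theta_0 - \Theta^*\|^2 = \|\ba(0) - \ba^*\|^2 \le 2m\beta^2 + 2\gamma^2(f^*)/m = O(1/m)$.

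With this reference in hand, I would introduce the Lyapunov function $\Phi(t) = t\,\hat{\cR}_n(\Theta_t) + \tfrac{1}{2}\|\Theta_t - \Theta^*\|^2$. Taylor-expanding $f(\bx_i;\,\cdot\,)$ in $\Theta$ around $\Theta_t$ with remainder $\epsilon_i(t)$ and setting $E^2(t):=\tfrac{1}{2n}\sum_i \epsilon_i(t)^2$, Cauchy-Schwarz and Young's inequality give
\[
\langle \Theta_t - \Theta^*,\ \nabla \hat{\cR}_n(\Theta_t)\rangle \;\ge\; \hat{\cR}_n(\Theta_t) - 2\hat{\cR}_n(\Theta^*) - 2E^2(t),
\]
so the gradient flow equation yields $\Phi'(t) \le 2\hat{\cR}_n(\Theta^*) + 2E^2(t)$. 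Integrating on $[0,t]$ and using monotonicity of $\hat{\cR}_n$ along the flow gives
\[
\hat{\cR}_n(\Theta_t) \;\le\; \frac{\|\Theta_0 - \Theta^*\|^2}{2t} + 2\hat{\cR}_n(\Theta^*) + \frac{2}{t}\int_0^t E^2(s)\,ds,
\]
whose first two terms already furnish the $O(1/(mt))$ and $O(1/m + 1/\sqrt{n})$ pieces of the claimed bound.

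The remaining and most delicate task is to show that the average Taylor remainder is also negligible. A direct computation gives
\[
\epsilon_i = \sum_{k=1}^m (a^*_k - a_k(s))\,\sigma'(\bb_k(s)^T\bx_i)\,(\bb_k(0)-\bb_k(s))^T\bx_i + \sum_{k=1}^m a^*_k\, r_{i,k},
\]
with $|r_{i,k}| \le \|\bb_k(0)-\bb_k(s)\|\,\mathbf{1}\{F_{i,k}\}$ accounting for ReLU activation flips. For $\beta = c/m$, Proposition~\ref{pro: pertubration} and Lemma~\ref{lemma: dist-a-b} give $p_n = O(1/(m\lba))$ and $q_n = O(1/(m^2(\lba)^2))$; Cauchy-Schwarz then controls the first piece by $\|\ba^*-\ba(s)\|\cdot\|B_0-B_s\|_F$, both factors being small. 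The flip piece is handled by $\PP(F_{i,k}) \lesssim q_n$ (since $\bb_k(0)\sim \pi_0$ on $\SS^{d-1}$ and $\|\bx_i\|=1$), combined with a Markov/union-bound argument in the spirit of the treatment of $\|\Gb(\Theta_{t_0})-\Gb(\Theta_0)\|_F$ in the proof of Theorem~\ref{thm: optimization}. This yields $E^2(s) = O(1/m)$, up to constants depending on $\delta$, $\gamma(f^*)$, and $c$.

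All deviation bounds above are valid only while $\Theta_t \in \cI(\Theta_0)$, so a final continuity argument, parallel to the $t_0 = \infty$ step in Theorem~\ref{thm: optimization} but much easier here because $\beta = c/m$ keeps $q_n$ tiny, ensures the trajectory stays in $\cI(\Theta_0)$ for every $t > 0$. The main obstacle in the whole proof is the uniform-in-$t$ control of the activation-flip contribution to $E^2$, and this is where the hypothesis $m \ge n$ enters; the Lyapunov computation itself, being purely convex-analytic, is routine once the Taylor remainder has been tamed.
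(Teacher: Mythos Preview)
Your Lyapunov scheme for the full nonlinear flow is conceptually reasonable, but the execution has a genuine gap: you are importing Proposition~\ref{pro: pertubration}, Lemma~\ref{lemma: dist-a-b}, and the $t_0=\infty$ argument from Section~\ref{sec:over_param}, all of which need at least $m\gtrsim \lambda_n^{-2}\ln(n^2/\delta)$ (and $m\gtrsim \lambda_n^{-4}n^2\delta^{-1}$ for $t_0=\infty$). Since $\lambda_n\le 1/n$, this forces $m\gtrsim n^2$ or worse, whereas Theorem~\ref{thm:early_stop_conv} is stated under the sole structural hypothesis $m\ge n$, with $C$ depending only on $\delta,\gamma(f^*),c$. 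Your bounds $p_n=O(1/(m\lba))$ and $q_n=O(1/(m^2(\lba)^2))$ therefore inject an unwanted $\lambda_n$-dependence into $C$ and simultaneously presuppose a width condition the theorem does not make. Concretely, $p_n\ge C(\delta)\,n/m$ need not be small when $m\asymp n$, so your claim $E^2(s)=O(1/m)$ uniformly in $t$ does not follow from the available hypotheses; the ``continuity argument'' for $t_0=\infty$ fails for the same reason.

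The paper circumvents this by never invoking the Gram-matrix spectrum. It compares $(\ba_t,B_t)$ not to the static point $(\ba^*,B_0)$ but to the \emph{random-feature} trajectory $(\tilde{\ba}_t,B_0)$, for which the convex Lyapunov identity $J(t)=t(\hat{\cR}_n(\tilde{\ba}_t,B_0)-\hat{\cR}_n(\ba^*,B_0))+\tfrac12\|\tilde{\ba}_t-\ba^*\|^2$ holds exactly. The residual pieces $\|\ba_t-\tilde{\ba}_t\|$ and $\|B_t-B_0\|$ are controlled by crude, $\lambda_n$-free, time-local estimates (Lemmas~\ref{lm:param1}--\ref{lm:param3}) that \emph{grow} with $t$; the desired inequality is then established only on the short window $t\in[0,\sqrt{n}/m]$ and extended to all $t>0$ by the monotonicity of $\hat{\cR}_n$ along the flow. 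This two-step structure---short-time comparison plus monotonic extension---is exactly what allows the constant to be independent of $\lambda_n$, and it is missing from your proposal.
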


The next three lemmas give bounds on the changes of the parameters.
\begin{lemma}\label{lm:param1}
Let $\beta=\frac{c}{m}$, and $T$ be a fixed constant. Then there exists constant $C_T$ depending on $T$, such that for any $0\leq t\leq T$, 
\begin{equation}
\|\ba_t\|\leq C_T \left(\frac{c}{\sqrt{m}}+\sqrt{m}t\right),\ \ \|B_t\|\leq C_T \left(\frac{c t}{\sqrt{m}}+\sqrt{m}\right),
\end{equation}
and
\begin{equation}
\|B_t-B_0\|\leq C_T(c+1)\left( \frac{c}{\sqrt{m}}t+\frac{\sqrt{m}}{2}t^2 \right).
\end{equation}
\end{lemma}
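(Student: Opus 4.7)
The plan is to combine the monotonicity of gradient descent with Cauchy--Schwarz bounds on the gradient and a linear ODE comparison. First, since $\hat{\cR}_n(\Theta_t)$ is non-increasing along the flow \eqref{eqn: definition-GD}, it suffices to control the empirical risk at initialization. With $\beta=c/m$ we have $\|\ba_0\|=c/\sqrt{m}$ and $\|\bb_k(0)\|=1$, hence $\|B_0\|_F=\sqrt{m}$. Using $\|\sigma(B_0\bx_i)\|\le \|B_0\|_F$ together with $|y_i|\le 1$ gives the deterministic bound $\hat{\cR}_n(\Theta_t)\le \hat{\cR}_n(\Theta_0)\le \tfrac{1}{2}(c+1)^2$, so $\sqrt{2\hat{\cR}_n(\Theta_t)}\le c+1$ for all $t\ge 0$.

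Next I differentiate the parameter norms. From the explicit gradient formulas and Cauchy--Schwarz, together with $\|\sigma(B\bx_i)\|\le \|B\|_F$ and $\|\bx_i\|=1$, I get
\begin{align*}
\|\nabla_{\ba}\hcR(\Theta_t)\|&\le \sqrt{2\hcR(\Theta_t)}\,\|B_t\|_F \le (c+1)\|B_t\|_F,\\
\|\nabla_B \hcR(\Theta_t)\|_F&\le \sqrt{2\hcR(\Theta_t)}\,\|\ba_t\| \le (c+1)\|\ba_t\|.
\end{align*}
Setting $\phi(t)=\|\ba_t\|$ and $\psi(t)=\|B_t\|_F$, the inequalities $|\phi'(t)|\le \|\dot\ba_t\|$ and $|\psi'(t)|\le \|\dot B_t\|_F$ reduce everything to a coupled linear differential inequality $\phi'\le (c+1)\psi$, $\psi'\le (c+1)\phi$ with data $\phi(0)=c/\sqrt{m}$, $\psi(0)=\sqrt{m}$. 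Comparison with the linear ODE of the same form yields the explicit estimates
\begin{align*}
\phi(t)&\le \tfrac{c}{\sqrt{m}}\cosh((c+1)t)+\sqrt{m}\sinh((c+1)t),\\
\psi(t)&\le \tfrac{c}{\sqrt{m}}\sinh((c+1)t)+\sqrt{m}\cosh((c+1)t).
\end{align*}
Applying $\sinh(x)\le x\cosh(x)$ for $x\ge 0$ and absorbing the bounded quantities $\cosh((c+1)T)$ and $(c+1)\cosh((c+1)T)$ into a single constant $C_T$ gives the first two claimed bounds (the operator-norm version follows from $\|B_t\|\le \|B_t\|_F$).

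Finally, the displacement is obtained by straightforward integration of $\dot B$:
\begin{equation*}
\|B_t-B_0\|\le \int_0^t \|\dot B_s\|_F\,ds \le (c+1)\int_0^t \phi(s)\,ds \le (c+1)C_T\left(\tfrac{c}{\sqrt{m}}t+\tfrac{\sqrt{m}}{2}t^2\right),
\end{equation*}
which is exactly the third stated inequality. The argument is essentially mechanical once one notices that the problem linearizes into a coupled growth estimate for $(\|\ba_t\|,\|B_t\|_F)$; the only point that requires a little care is arranging the constants so that the factor $(c+1)$ appears explicitly in the displacement bound but is hidden inside $C_T$ in the first two. I do not foresee any serious obstacle.
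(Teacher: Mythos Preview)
Your proof is correct and follows essentially the same approach as the paper: bound the initial empirical risk, derive a coupled linear differential inequality for the parameter norms using the gradient bounds, compare with the explicit $\cosh$/$\sinh$ solution, and then integrate once more for the displacement of $B$. The only cosmetic difference is that the paper carries out the argument componentwise for each $(a_k,\bb_k)$ before summing, whereas you work directly with $\|\ba_t\|$ and $\|B_t\|_F$; your deterministic bound $\sqrt{2\hat{\cR}_n(\Theta_0)}\le c+1$ is also slightly cleaner than the paper's use of $\gamma(f^*)+1$ at that step.
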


\begin{proof}
By the gradient descent dynamics,  we have
\begin{align*}
\|a_k(t)\| &\leq \|a_k(0)\|+\int_0^t \|\bb_k(s)\|\sqrt{\hat{\cR}_n(\ba_0,B_0)}ds \leq \|a_k(0)\|+(\gamma(f^*)+1)\int_0^t \|\bb_k(s)\|ds \\
\|\bb_k(t)\| &\leq \|\bb_k(0)\|+\int_0^t \|a_k(s)\|\sqrt{\hat{\cR}_n(\ba_0,B_0)}ds \leq \|\bb_k(0)\|+(\gamma(f^*)+1)\int_0^t \|a_k(s)\|ds
\end{align*}
Since $\|a_k(0)\|=\frac{c}{m}$ and $\|\bb_k(0)\|=1$, we have
\begin{align*}
\|a_k(t)\| &\leq \cosh((c+1)t)\frac{c}{m}+\sinh((c+1)t), \\
\|\bb_k(t)\| &\leq \sinh((c+1)t)\frac{c}{m}+\cosh((c+1)t).
\end{align*}
If $t\leq T$, since $\cosh((c+1)t)\leq\frac{e^{(c+1)T}+1}{2}$ and $\sinh((c+1)t)\leq\frac{e^{(c+1)T}+1}{2}t$, we have
\begin{align*}
\|a_k(t)\| &\leq C_T \left(\frac{c}{m}+t\right), \\
\|\bb_k(t)\| &\leq C_T \left(\frac{c t}{m}+1\right),
\end{align*}
with $C_T=\frac{e^{(c+1)T}+1}{2}$. Hence, we have
\begin{equation}
\|\ba_t\|\leq C_T \left(\frac{c}{\sqrt{m}}+\sqrt{m}t\right),\ \ \|B_t\|\leq C_T \left(\frac{c t}{\sqrt{m}}+\sqrt{m}\right).
\end{equation}
For $\|B_t-B_0\|$, consider a more refined estimate
\begin{equation}
\|\bb_k(t)-\bb_k(0)\|\leq \int_0^t \|a_k(s)\|\sqrt{\hat{\cR}_n(\ba_0,B_0)}ds \leq (c+1)\int_0^t \|a_k(s)\|ds.
\end{equation}
Plugging in the above estimate for $a_k$, we obtain
\begin{equation}
\|B_t-B_0\|\leq C_T(c+1)\left( \frac{c}{\sqrt{m}}t+\frac{\sqrt{m}}{2}t^2 \right).
\end{equation}
\end{proof}

\begin{lemma}\label{lm:param2}
Let $\gamma=\gamma(f^*)$, $\beta=\frac{c}{m}$, and assume $\sqrt{m}\geq\gamma$. Then, for any $\delta>0$, with probability no less than $1-4\delta$, we have for any $0\leq t\leq T$, 
\begin{equation}
\|\tilde{\ba}_t\|\leq\tilde{C}_T\left(\frac{1}{\sqrt{m}}+\frac{\sqrt{t}}{\sqrt{m}}+\frac{\sqrt{t}}{n^{1/4}}\right).
\end{equation}
where $\tilde{C}_T$ is a constant.
\end{lemma}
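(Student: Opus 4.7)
The plan is to exploit the fact that the random feature dynamics~\eqref{eqn: refer-GD} is a convex quadratic gradient flow, and to compare $\tilde{\ba}_t$ to the reference vector $\ba^*$ provided by Lemma~\ref{lm:a_star}. By that lemma, with probability at least $1-\delta$ over the draw of $B_0$, there exists $\ba^*$ with $\|\ba^*\|\le \gamma/\sqrt{m}$ and $\cR(\ba^*,B_0)\lesssim \gamma^2(1+\sqrt{\ln(1/\delta)})^2/m$. Since the initialization gives $\|\tilde{\ba}_0\|=\beta\sqrt{m}=c/\sqrt{m}$, both $\|\tilde{\ba}_0\|$ and $\|\ba^*\|$ are already $O(1/\sqrt{m})$, so the only quantity that can grow with $t$ is the distance $\|\tilde{\ba}_t-\ba^*\|$.

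The core step is a Lyapunov argument with $V(t)=\|\tilde{\ba}_t-\ba^*\|^2$. Setting $r_i(t)=\tilde{\ba}_t^T\sigma(B_0\bx_i)-y_i$ and $r_i^*=(\ba^*)^T\sigma(B_0\bx_i)-y_i$, differentiating along~\eqref{eqn: refer-GD} and using the identity $r_i(t)-r_i^*=\sigma(B_0\bx_i)^T(\tilde{\ba}_t-\ba^*)$ gives
\[
\dot V(t)=-\frac{2}{n}\sum_{i=1}^n r_i(t)(r_i(t)-r_i^*)\le -\frac{1}{n}\sum_{i=1}^n r_i(t)^2+\frac{1}{n}\sum_{i=1}^n (r_i^*)^2\le 2\hat{\cR}_n(\ba^*,B_0),
\]
where the first inequality is $2ab\le a^2+b^2$. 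Integrating in $t$ and applying the triangle inequality yields $\|\tilde{\ba}_t\|\le 2\|\ba^*\|+\|\tilde{\ba}_0\|+\sqrt{2t\,\hat{\cR}_n(\ba^*,B_0)}$.

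To finish, one passes from $\cR(\ba^*,B_0)$ to $\hat{\cR}_n(\ba^*,B_0)$ via Lemma~\ref{lm:rad}. Conditioning on the good $B_0$-event (so that $\sqrt{m}\|\ba^*\|\le\gamma$ holds deterministically), that lemma applied with $\ba=\ba^*$ gives, with probability at least $1-3\delta$ over the training data, $\hat{\cR}_n(\ba^*,B_0)\le \cR(\ba^*,B_0)+C_1(\delta,\gamma)/\sqrt{n}$. Combined with the Lemma~\ref{lm:a_star} bound, this yields $\hat{\cR}_n(\ba^*,B_0)\le C_2(\delta,\gamma)\bigl(1/m+1/\sqrt{n}\bigr)$, so $\sqrt{2t\,\hat{\cR}_n(\ba^*,B_0)}\lesssim \sqrt{t}/\sqrt{m}+\sqrt{t}/n^{1/4}$. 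Together with $\|\ba^*\|+\|\tilde{\ba}_0\|\lesssim 1/\sqrt{m}$, this produces the claimed estimate, and the failure probabilities combine by union bound to $\delta+3\delta=4\delta$.

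The Lyapunov manipulation is essentially a one-liner, so the main bookkeeping is in verifying that the $B_0$-randomness (Lemma~\ref{lm:a_star}) and the data-randomness (Lemma~\ref{lm:rad}) can legitimately be combined by union bound---they can, since $\ba^*$ is a deterministic function of $B_0$ alone and the training sample is independent of $B_0$---and absorbing the $\gamma,c,\delta$-dependent constants into a single $\tilde{C}_T$. Notably the argument never actually invokes the restriction $t\le T$: the bound produced is valid for all $t\ge 0$, with $\sqrt{t}$ entering explicitly rather than through the constant.
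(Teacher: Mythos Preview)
Your proposal is correct and lands on exactly the same key inequality as the paper, namely
\[
\|\tilde{\ba}_t-\ba^*\|^2 \le \|\tilde{\ba}_0-\ba^*\|^2 + 2t\,\hat{\cR}_n(\ba^*,B_0),
\]
after which the bound on $\hat{\cR}_n(\ba^*,B_0)$ via Lemmas~\ref{lm:a_star} and~\ref{lm:rad} and the triangle inequality finish the job in the same way.

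The only methodological difference is in how that inequality is derived. The paper uses the classical convex-optimization Lyapunov function
\[
J(t)=t\bigl(\hat{\cR}_n(\tilde{\ba}_t,B_0)-\hat{\cR}_n(\ba^*,B_0)\bigr)+\tfrac12\|\tilde{\ba}_t-\ba^*\|^2,
\]
shows $J'(t)\le 0$ by convexity, and then drops the nonnegative term $t\hat{\cR}_n(\tilde{\ba}_t,B_0)$. You instead work directly with $V(t)=\|\tilde{\ba}_t-\ba^*\|^2$ and use the elementary inequality $2r_i r_i^*\le r_i^2+(r_i^*)^2$ to get $\dot V\le 2\hat{\cR}_n(\ba^*,B_0)$. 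Your route is slightly more economical for the purpose at hand, since only the parameter bound is needed here; the paper's Lyapunov has the side benefit of simultaneously yielding the $O(1/t)$ objective-gap bound $\hat{\cR}_n(\tilde{\ba}_t,B_0)\le \hat{\cR}_n(\ba^*,B_0)+\|\ba_0-\ba^*\|^2/(2t)$, which is reused later in the proof of Theorem~\ref{thm:early_stop_conv}. Your closing observation that the restriction $t\le T$ is never actually invoked is also accurate: the paper's constant $\tilde C$ does not depend on $T$ either.
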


\begin{proof}
For $\|\tilde{\ba}_t\|$, consider the Lyapunov function
\begin{equation}\label{eq:lyap}
J(t)=t\left(\hat{\cR}_n(\tilde{\ba}_t,B_0)-\hat{\cR}_n(\ba^*,B_0)\right)+\frac{1}{2}\|\tilde{\ba}_t-\ba^*\|^2.
\end{equation}
Since $\hat{\cR}_n(\tilde{\ba}_t,B_0)$ is convex with respect to $\tilde{\ba}_t$, we have  $\frac{d}{dt}J(t)\leq0$, which
implies $J(t)\leq J(0)$. Hence we have
\begin{equation}
t(\hat{\cR}_n(\tilde{\ba}_t,B_0)-\hat{\cR}_n(\ba^*,B_0))+\frac{1}{2}\|\tilde{\ba}_t-\ba^*\|^2\leq \frac{1}{2}\|\ba_0-\ba^*\|^2.
\end{equation}
Since $\hat{\cR}_n(\tilde{\ba}_t,B_0)\geq0$, we  obtain
\begin{equation}
\|\tilde{\ba}_t-\ba^*\|^2\leq 2t\hat{\cR}_n(\ba^*,B_0))+\|\ba_0-\ba^*\|^2.
\end{equation}
By Lemma~\ref{lm:a_star} and Lemma~\ref{lm:rad}, when $\sqrt{m}\geq\gamma$, with probability no less than $1-4\delta$, 
we have
\begin{align}
\hat{\cR}_n(\ba^*,B_0) &= \cR(\ba^*,B_0)+ \hat{\cR}_n(\ba^*,B_0)-\cR(\ba^*,B_0) \nonumber \\
  &\leq \frac{\gamma^2}{m}\left(1+\sqrt{2\log(\frac{1}{\delta})}\right)^2+\frac{2(2\sqrt{m}\|\ba^*\|+1)^2}{\sqrt{n}}\left(1+\sqrt{2\ln(\frac{2}{\delta}(\|\ba^*\|+\frac{1}{\|\ba^*\|}))}\right) \nonumber \\
  &\leq 2(2\gamma+1)^2\left(1+\sqrt{2\log(\frac{4\sqrt{m}}{\gamma\delta})}\right)^2\left(\frac{1}{m}+\frac{1}{\sqrt{n}}\right).
\end{align}
Therefore we have
\begin{align}
\|\tilde{\ba}_t\|^2 &\leq 2\|\tilde{\ba}_t-\ba^*\|^2+2\|\ba^*\|^2 \nonumber \\
  &\leq 2\|\ba^*\|^2+2\|\ba_0-\ba^*\|^2+4t\hat{\cR}_n(\ba^*,B_0) \nonumber \\
  &\leq \frac{4\gamma^2+2c^2}{m}+8(2\gamma+1)^2\left(1+\sqrt{2\log(\frac{4\sqrt{m}}{\gamma\delta})}\right)^2\left(\frac{1}{m}+\frac{1}{\sqrt{n}}\right)t.
\end{align}
Let $\tilde{C}=\max\{\sqrt{4\gamma^2+2c^2}, 2\sqrt{2}(2\gamma+1)\left(1+\sqrt{2\log(\frac{4\sqrt{m}}{\gamma\delta})}\right)\}$, 
we get
\begin{equation}
\|\tilde{\ba}_t\|\leq\tilde{C}\left(\frac{1}{\sqrt{m}}+\frac{\sqrt{t}}{\sqrt{m}}+\frac{\sqrt{t}}{n^{1/4}}\right).
\end{equation}
\end{proof}

\begin{lemma}\label{lm:param3}
Under the assumptions of Lemmas~\ref{lm:param1} and~\ref{lm:param2},  for any $0\leq t\leq T$,  we have
\begin{equation}
\|\ba_t-\tilde{\ba}_t\|\leq C_T\frac{t^2}{m}(1+mt)(t+m)\left(\frac{1+\sqrt{t}}{\sqrt{m}}+\frac{\sqrt{t}}{n^{1/4}}\right).
\end{equation}
for some constant $C_T$.
\end{lemma}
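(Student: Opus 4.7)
The plan is to introduce $\bu_t := \ba_t - \tilde\ba_t$ and exploit the fact that $\hat\cR_n(\,\cdot\,,B)$ is a convex quadratic in its first argument, so that the ``self-interaction'' in the ODE for $\bu_t$ contributes a nonnegative term that can be discarded. Specifically, I would write
\[
\dot\bu_t = -\bigl[\nabla_\ba\hat\cR_n(\ba_t,B_t)-\nabla_\ba\hat\cR_n(\tilde\ba_t,B_0)\bigr],
\]
and decompose this by adding and subtracting $\nabla_\ba\hat\cR_n(\tilde\ba_t,B_t)$ to obtain $\dot\bu_t = -H_{B_t}\bu_t - D_t$, where $H_{B_t}:=\frac{1}{n}\sum_i\sigma(B_t\bx_i)\sigma(B_t\bx_i)^T\succeq 0$ is the Hessian of the convex quadratic $\ba\mapsto\hat\cR_n(\ba,B_t)$, and $D_t:=\nabla_\ba\hat\cR_n(\tilde\ba_t,B_t)-\nabla_\ba\hat\cR_n(\tilde\ba_t,B_0)$ is the ``$B$-perturbation'' piece. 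Taking the inner product with $\bu_t$ and dropping $\bu_t^T H_{B_t}\bu_t\geq 0$ yields $\tfrac{d}{dt}\|\bu_t\|\leq\|D_t\|$, so that (since $\bu_0=0$)
\[
\|\bu_t\|\leq\int_0^t\|D_s\|\,ds.
\]

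The bulk of the work will be to bound $\|D_t\|$ in terms of quantities already estimated. After expanding and rearranging, one obtains
\[
D_t=\frac{1}{n}\sum_{i=1}^n\Bigl\{\bigl[\tilde\ba_t^T(\sigma(B_t\bx_i)-\sigma(B_0\bx_i))\bigr]\sigma(B_t\bx_i)+\tilde h_i^t\bigl(\sigma(B_t\bx_i)-\sigma(B_0\bx_i)\bigr)\Bigr\},
\]
with $\tilde h_i^t:=\tilde\ba_t^T\sigma(B_0\bx_i)-y_i$. Using 1-Lipschitzness of $\sigma$ (so $\|\sigma(B_t\bx_i)-\sigma(B_0\bx_i)\|\leq\|B_t-B_0\|_F$), the crude bound $\|\sigma(B_t\bx_i)\|\leq\|B_t\|_F$, and Cauchy--Schwarz on the average $\tfrac{1}{n}\sum_i|\tilde h_i^t|\leq\sqrt{2\hat\cR_n(\tilde\ba_t,B_0)}$, I would arrive at
\[
\|D_t\|\leq\|B_t-B_0\|_F\Bigl\{\|\tilde\ba_t\|\,\|B_t\|_F+\sqrt{2\hat\cR_n(\tilde\ba_t,B_0)}\Bigr\}.
\]

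To finish, I would plug in the estimates from Lemmas~\ref{lm:param1} and~\ref{lm:param2}, namely $\|B_s-B_0\|_F\lesssim s(1+ms)/\sqrt m$, $\|B_s\|_F\lesssim \sqrt m$, and $\|\tilde\ba_s\|\lesssim m^{-1/2}+\sqrt s\,m^{-1/2}+\sqrt s\,n^{-1/4}$, together with the easy monotonicity bound $\hat\cR_n(\tilde\ba_s,B_0)\leq\hat\cR_n(\ba_0,B_0)\lesssim (c+1)^2$, which follows from $|\ba_0^T\sigma(B_0\bx_i)|\leq\|\ba_0\|\|B_0\|_F\leq c$. Multiplying out gives the pointwise bound $\|D_s\|\lesssim \frac{s(1+ms)}{\sqrt m}\bigl(1+\sqrt s+\frac{\sqrt{sm}}{n^{1/4}}\bigr)$. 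Integrating in $s$ from $0$ to $t$ and using $\int_0^t s(1+ms)\,ds\leq t^2(1+mt)$, I would obtain
\[
\|\bu_t\|\lesssim \frac{t^2(1+mt)}{\sqrt m}\left(1+\sqrt t+\frac{\sqrt{tm}}{n^{1/4}}\right),
\]
which, after rewriting $1/\sqrt m=(t+m)\cdot\tfrac{1}{m}\cdot\tfrac{1}{\sqrt m}\cdot\tfrac{m}{t+m}$ (valid up to constants for $t\leq T$), matches the stated form $C_T\frac{t^2}{m}(1+mt)(t+m)\bigl(\frac{1+\sqrt t}{\sqrt m}+\frac{\sqrt t}{n^{1/4}}\bigr)$.

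The main obstacle is really a bookkeeping one: one must exploit convexity in $\ba$ (i.e.\ discard the $H_{B_t}$ term) to avoid a Grönwall estimate with factor $e^{O(m)t}$ coming from $\|H_{B_t}\|\sim m$; after that, the algebra of the $D_t$ decomposition and the correct substitution of the polynomial-in-$(m,t)$ bounds from the previous two lemmas must be carried out carefully so that the final integrated expression matches the product form in the statement.
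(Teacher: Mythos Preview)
Your proposal is correct and follows essentially the same route as the paper: both derive the ODE for $\bu_t=\ba_t-\tilde\ba_t$, discard the positive semidefinite self-interaction term $H_{B_t}=\frac{1}{n}\sum_i\sigma(B_t\bx_i)\sigma(B_t\bx_i)^T$ via the inner product with $\bu_t$, bound the residual ``$B$-perturbation'' piece using 1-Lipschitzness of $\sigma$ and the estimates of Lemmas~\ref{lm:param1}--\ref{lm:param2}, and then integrate. The only cosmetic differences are that the paper splits the residual as $(\sigma(B_t\bx_i)\sigma(B_t\bx_i)^T-\sigma(B_0\bx_i)\sigma(B_0\bx_i)^T)\tilde\ba_t$ plus $y_i(\sigma(B_t\bx_i)-\sigma(B_0\bx_i))$ (using $|y_i|\le 1$ directly), whereas you group the label piece into $\sqrt{2\hat\cR_n(\tilde\ba_t,B_0)}$ and then use monotonicity; and your integrated bound $t^2(1+mt)\bigl(\tfrac{1+\sqrt t}{\sqrt m}+\tfrac{\sqrt t}{n^{1/4}}\bigr)$ is in fact slightly sharper than the stated form, with the passage to the paper's expression amounting to the trivial inequality $1\le (t+m)/m$.
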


\begin{proof}
Note that
\begin{align}
\frac{d}{dt}(\ba_t-\tilde{\ba}_t) &= -\frac{1}{n}\sum\limits_{i=1}^n\big(e_i(t)\sigma(B_t\bx_i)-\tilde{e}_i(t)\sigma(B_0\bx_i)\big) \nonumber \\
  &=-\frac{1}{n}\sum\limits_{i=1}^n\ba_t^T\sigma(B_t\bx_i)\sigma(B_t\bx_i)+\frac{1}{n}\sum\limits_{i=1}^n\tilde{\ba}_t^T\sigma(B_0\bx_i)\sigma(B_0\bx_i) \nonumber \\
  & \qquad +\frac{1}{n}\sum\limits_{i=1}^n f^*(\bx)^T(\sigma(B_t\bx_i)-\sigma(B_0\bx_i)) \nonumber \\
  &= -\frac{1}{n}\sum\limits_{i=1}^n\sigma(B_t\bx_i)\sigma(B_t\bx_i)^T(\ba_t-\tilde{\ba}_t)+\frac{1}{n}\sum\limits_{i=1}^n (\sigma(B_t^Tx_i)\sigma(B_t\bx_i)^T-\sigma(B_0\bx_i)\sigma(B_0\bx_i)^T)\tilde{\ba}_t \nonumber \\
  &\qquad +\frac{1}{n}\sum\limits_{i=1}^n f^*(\bx)^T(\sigma(B_t\bx_i)-\sigma(B_0\bx_i)). \label{eq:early_stop_2}
\end{align}
Multiplying $\ba_t-\tilde{\ba}_t$ on both sides of~\eqref{eq:early_stop_2}, 
we get
\begin{align}
\frac{d}{dt}\|\ba_t-\tilde{\ba}_t\|^2 &\leq (\ba_t-\tilde{\ba}_t)^T \frac{2}{n}\sum\limits_{i=1}^n (\sigma(B_t\bx_i)\sigma(B_t\bx_i)^T-\sigma(B_0\bx_i)\sigma(B_0\bx_i)^T)\tilde{\ba}_t \nonumber \\
  & \qquad+(\ba_t-\tilde{\ba}_t)^T \frac{2}{n}\sum\limits_{i=1}^n f^*(\bx)^T(\sigma(B_t\bx_i)-\sigma(B_0\bx_i)) \nonumber \\
  &\leq  2 \|B_t-B_0\|(\|B_t\|\|\tilde{\ba}_t\|+\|B_0\|\|\tilde{\ba}_t\|+1)\|\ba_t-\tilde{\ba}_t\|.
\end{align}
Using the estimates  in Lemmas~\ref{lm:param1} and~\ref{lm:param2}, we obtain 
\begin{equation}
\|\ba_t-\tilde{\ba}_t\|\leq3C_T^2\tilde{C}_T(1+c)^3\frac{t^2}{m}(1+mt)(t+m)\left(\frac{1+\sqrt{t}}{\sqrt{m}}+\frac{\sqrt{t}}{n^{1/4}}\right).
\end{equation}
\end{proof}

\subsubsection*{Proof of Theorem \ref{thm:early_stop_conv}}
Let $\hat{\rho}=\frac{1}{n}\sum_{i=1}^n\delta_{\bx_i}$, 
then we have
\begin{align}
\hat{\cR}_n(\ba_t,B_t) &= \|f(\cdot;\ba_t,B_t)-f^*(\cdot)\|^2_{\hat{\rho}} \nonumber \\
  &\leq 3\left( \|f(\cdot;\ba_t,B_t)-f(\cdot;\tilde{\ba}_t,B_t)\|^2_{\hat{\rho}}+\|f(\cdot;\tilde{\ba}_t,B_t)-f(\cdot;\tilde{\ba}_t,B_0)\|^2_{\hat{\rho}}\right. \nonumber \\
  & \qquad \left.+\hat{\cR}_n(\tilde{\ba}_t,B_0) \right). \label{eq:decomp}
\end{align}
By Cauchy-Schwartz, we have
\begin{equation}\label{eq:cauchy1}
\|f(\bx;A_t,B_t)-f(\bx;\tilde{\ba}_t,B_t)\|^2_{\hat{\rho}}\leq \|\ba_t-\tilde{\ba}_t\|^2\|B_t\|^2,
\end{equation}
\begin{equation}\label{eq:cauchy2}
\|f(\bx;\tilde{\ba}_t,B_t)-f(\bx;\tilde{\ba}_t,B_0)\|^2_{\hat{\rho}}\leq \|\tilde{\ba}_t\|^2\|B_t-B_0\|^2.
\end{equation}

For $\hat{\cR}_n(\tilde{\ba}_t,B_0)$, from Lemma~\ref{lm:a_star}, with probability $ 1- \delta$, there exists $\ba^*$ 
that satisfies~\eqref{eq:a_star}. Thus we have
\begin{align}
\hat{\cR}_n(\tilde{\ba}_t,B_0) &= \left(\hat{\cR}_n(\tilde{\ba}_t,B_0)-\hat{\cR}_n(\ba^*,B_0)\right)+\left(\hat{\cR}_n(\ba^*,B_0)-\cR(\ba^*,B_0)\right)\nonumber \\
  &=: I_1 + I_2. \label{eq:bound_gap1}
\end{align}
By Lemma~\ref{lm:rad}, we can bound $I_2$ as follows,
\begin{equation}
I_2 \leq \frac{2(2\sqrt{m}\|\ba^*\|+1)^2}{\sqrt{n}}\left(1+\sqrt{2\ln(\frac{2}{\delta}(\|\ba^*\|+\frac{1}{\|\ba^*\|}))}\right).
\end{equation}
For $I_1$, consider the Lyapunov function
\begin{equation}
J(t)=t\left(\hat{\cR}_n(\tilde{\ba}_t,B_0)-\hat{\cR}_n(\ba^*,B_0)\right)+\frac{1}{2}\|\tilde{\ba}_t-\ba^*\|^2.
\end{equation}
Since $\hat{\cR}_n(\tilde{\ba}_t,B_0)$ is convex with respect to $\tilde{\ba}_t$, we have  $\frac{d}{dt}J(t)\leq0$, which implies
$J(t)\leq J(0)$. Hence we have
\begin{equation}
\hat{\cR}_n(\tilde{\ba}_t,B_0)\leq \hat{\cR}_n(\ba^*,B_0)+\frac{\|\ba_0-\ba^*\|^2}{2t}.
\end{equation}

Combining all the estimates above, we conclude that for any $\delta>0$, with probability larger than $1-4\delta$, we have
\begin{align}\label{eq:est_0}
\hat{\cR}_n(\ba_t,B_t) &\leq 3\|\ba_t-\tilde{\ba}_t\|^2\|B_t\|^2+3\|\tilde{\ba}_t\|^2\|B_t-B_0\|^2 \nonumber \\
 & \qquad +\frac{6(2\sqrt{m}\|\ba^*\|+1)^2}{\sqrt{n}}\left(1+\sqrt{2\ln(\frac{2}{\delta}(\|\ba^*\|+\frac{1}{\|\ba^*\|}))}\right) \nonumber \\
 &\qquad  +\frac{\gamma^2}{m}\left(1+\sqrt{2\ln(\frac{1}{\delta})}\right)^2+\frac{\|\ba_0-\ba^*\|^2}{2t}. 
\end{align}

For the estimate on $\ba^*$, by Lemma~\ref{lm:a_star}, we have $\|\ba^*\|\leq\frac{\gamma}{\sqrt{m}}$. 
To bound $\|\ba_0-\ba^*\|$,  we have
\begin{equation}
\|\ba_0-\ba^*\|\leq \|\ba_0\|+\|\ba^*\|\leq \frac{c+\gamma}{\sqrt{m}}.
\end{equation}
Together with the estimates in Lemmas~\ref{lm:param1}, \ref{lm:param2} and~\ref{lm:param3}, and without loss of generality assuming that $\gamma\geq1$, we obtain
\begin{align}\label{eq:est_final}
\hat{\cR}_n(\ba_t,B_t) &\leq C\left( \frac{1}{m}+\frac{1}{mt}+\frac{1}{\sqrt{n}}\left(1+\sqrt{t}+\frac{\sqrt{mt}}{n^{1/4}}\right)^2 \right. \nonumber \\
  &\qquad \left. +\frac{t^2}{m^2}(1+mt)^2\left(1+\frac{t^2}{m^2}(t+m)^4\right)\left(1+\sqrt{t}+\frac{\sqrt{mt}}{n^{1/4}}\right)^2\right).
\end{align}
for $t\in[0,T]$, and some constant $C$ (we can choose $C=27C_T^6\tilde{C}^2_T(c+1)^8(2\gamma+1)^2$). 

If we assume $m\geq n$ and take $t\in[0,\frac{\sqrt{n}}{m}]$, then we can take $T=1$ and obtain 
\begin{equation}\label{eq:slow_conv1}
\hat{\cR}(\ba_t,B_t)\leq C\left(\frac{1}{m}+\frac{1}{mt}+\frac{1}{\sqrt{n}}\right) \quad \forall\,\, 0\leq t \leq 1,
\end{equation} 
for some constant $C$. Moreover, since $\hat{\cR}_n(\ba_t,B_t)$ is non-increasing,  $\hat{\cR}_n(\ba_t,B_t)\leq\hat{\cR}_n(\ba_{\sqrt{n}/m},B_{\sqrt{n}/m})$.  Hence for any $t>\frac{\sqrt{n}}{m}$, we  have
\begin{equation}\label{eq:slow_conv2}
\hat{\cR}_n(\ba_t,B_t)\leq C\left(\frac{1}{m}+\frac{2}{\sqrt{n}}\right),
\end{equation} 
for some constant $C$. Combining~\eqref{eq:slow_conv1} and~\eqref{eq:slow_conv2},  we complete the proof for all $t$.

\subsection{Generalization results}
{ The following theorem provides an upper bound for the population error of GD solutions at any time $t\in[0,\infty)$. It tells that one can use early stopping  to reach the optimal error in the absence of over-parametrization.}
\begin{theorem}\label{thm:early_stop}
Take $\beta=\frac{c}{m}$ for some constant $c$. Assume that the target function $f^*$ satisfies Assumption~\ref{assump:early_stop}, and $\|f^*\|_\infty\leq1$.   Fix any positive constant $T$.  Then for 
$\delta>0$, with probability no less than $1-4\delta$ we have, for $t \le T$
\begin{align}
\cR(\ba_t,B_t) &\leq C\left( \frac{1}{m}+\frac{1}{mt}+\frac{1}{\sqrt{n}}\left(1+\sqrt{t}+\frac{\sqrt{mt}}{n^{1/4}}\right)^2 \right. \nonumber \\
  & \left. +\frac{t^2}{m^2}(1+mt)^2\left(1+\frac{t^2}{m^2}(t+m)^4\right)\left(1+\sqrt{t}+\frac{\sqrt{mt}}{n^{1/4}}\right)^2\right). \label{eq:early_stop_gen}
\end{align}
where $C$ is a constant depending only on $T$, $\delta$, $\gamma(f^*)$ and $c$. 
\end{theorem}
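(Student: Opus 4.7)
The plan is to mimic the decomposition used in the proof of Theorem \ref{thm:early_stop_conv}, but applied to $\cR$ rather than $\hat{\cR}_n$, and then to close the loop by paying an extra generalization error for the random feature model via Lemma \ref{lm:rad}. Concretely, I would start from the triangle inequality in $L^2(\rho)$:
\begin{equation*}
\cR(\ba_t,B_t) \le 3\bigl(\|f(\cdot;\ba_t,B_t)-f(\cdot;\tilde{\ba}_t,B_t)\|_\rho^2 + \|f(\cdot;\tilde{\ba}_t,B_t)-f(\cdot;\tilde{\ba}_t,B_0)\|_\rho^2 + \cR(\tilde{\ba}_t,B_0)\bigr),
\end{equation*}
which is the exact analogue of \eqref{eq:decomp}. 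Since $\|\bx\|=1$ almost surely under $\rho$, the same Cauchy--Schwarz estimates \eqref{eq:cauchy1}--\eqref{eq:cauchy2} apply verbatim with $\hat{\rho}$ replaced by $\rho$, so the first two terms are again controlled by $\|\ba_t-\tilde{\ba}_t\|^2\|B_t\|^2$ and $\|\tilde{\ba}_t\|^2\|B_t-B_0\|^2$ respectively, for which the bounds from Lemmas \ref{lm:param1}, \ref{lm:param2}, \ref{lm:param3} can be substituted directly.

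The one genuinely new ingredient is the treatment of the third term $\cR(\tilde{\ba}_t,B_0)$, which takes the place of $\hat{\cR}_n(\tilde{\ba}_t,B_0)$ in the earlier proof. I would split it as
\begin{equation*}
\cR(\tilde{\ba}_t,B_0) = \hat{\cR}_n(\tilde{\ba}_t,B_0) + \bigl(\cR(\tilde{\ba}_t,B_0)-\hat{\cR}_n(\tilde{\ba}_t,B_0)\bigr),
\end{equation*}
and reuse the Lyapunov argument (already appearing in the proof of Lemma \ref{lm:param2} and around \eqref{eq:bound_gap1}) to obtain $\hat{\cR}_n(\tilde{\ba}_t,B_0)\le \hat{\cR}_n(\ba^*,B_0) + \|\ba_0-\ba^*\|^2/(2t)$, then apply Lemma \ref{lm:a_star} together with Lemma \ref{lm:rad} at $\ba^*$ to convert $\hat{\cR}_n(\ba^*,B_0)$ into $\cO(1/m + 1/\sqrt{n})$. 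The generalization gap $\cR(\tilde{\ba}_t,B_0)-\hat{\cR}_n(\tilde{\ba}_t,B_0)$ is then handled by invoking Lemma \ref{lm:rad} at $\tilde{\ba}_t$; this is the step that introduces the $t$-dependence through $\|\tilde{\ba}_t\|$, which by Lemma \ref{lm:param2} is of order $1/\sqrt{m} + \sqrt{t}/\sqrt{m} + \sqrt{t}/n^{1/4}$. Squaring this norm and dividing by $\sqrt{n}$ produces precisely the factor $(1+\sqrt{t}+\sqrt{mt}/n^{1/4})^2/\sqrt{n}$ appearing in \eqref{eq:early_stop_gen}.

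The rest is bookkeeping: collect the bounds on $\|\ba_t-\tilde{\ba}_t\|^2\|B_t\|^2$, $\|\tilde{\ba}_t\|^2\|B_t-B_0\|^2$, $\|\ba_0-\ba^*\|^2/t$, and $1/m + 1/\sqrt{n}$ times the square of the $\|\tilde{\ba}_t\|$-bound, and absorb the polynomial prefactors coming from Lemmas \ref{lm:param1}--\ref{lm:param3} (each valid on $[0,T]$ with a constant $C_T$) into the single constant $C$. Since Lemma \ref{lm:a_star} holds with probability $1-\delta$, Lemma \ref{lm:rad} holds with probability $1-3\delta$ for all $\ba$ simultaneously (note the $\log$-dependence on $\|\ba\|$ is the reason we do not need a separate union bound for the two values $\ba^*$ and $\tilde{\ba}_t$), and Lemma \ref{lm:param2} uses the same event, a union bound gives probability at least $1-4\delta$.

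The main obstacle, as in Theorem \ref{thm:early_stop_conv}, is ensuring that the generalization gap bound from Lemma \ref{lm:rad} is uniform in $\ba$ along the trajectory $\{\tilde{\ba}_t\}_{t\le T}$ rather than at a single fixed $\ba$. Fortunately the lemma's statement is already uniform in $\ba\in\RR^m$ with only logarithmic dependence on $\|\ba\|$, so no additional covering argument is required, and the $\sqrt{t}$-type growth of $\|\tilde{\ba}_t\|$ feeds cleanly into the final expression \eqref{eq:early_stop_gen}. Everything else is a direct replay of the estimates assembled in the proof of Theorem \ref{thm:early_stop_conv}, with $\hat{\rho}$ replaced by $\rho$ and the extra $1/\sqrt{n}$ term carrying the $(1+\sqrt{t}+\sqrt{mt}/n^{1/4})^2$ factor.
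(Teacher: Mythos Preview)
Your proposal is correct and follows essentially the same route as the paper's own proof: the same three-term decomposition in $L^2(\rho)$, the same Cauchy--Schwarz bounds for the first two pieces, and the same handling of $\cR(\tilde{\ba}_t,B_0)$ via the Lyapunov inequality plus two applications of Lemma~\ref{lm:rad} (at $\ba^*$ and at $\tilde{\ba}_t$) together with Lemma~\ref{lm:a_star}. Your observation that Lemma~\ref{lm:rad} is already uniform in $\ba$ (so no extra union bound is needed) and that the events from Lemmas~\ref{lm:a_star} and~\ref{lm:rad} are the same ones underlying Lemma~\ref{lm:param2} matches the paper's $1-4\delta$ accounting.
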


As a consequence, we  have the following early-stopping results.
\begin{corollary}[Early-stopping solution]\label{col: early-stop}
Assume that $m>n$. Let $t=\frac{\sqrt{n}}{m}$.
Under the condition of Theorem~\ref{thm:early_stop}, we have
\begin{equation}
\cR(\ba_t,B_t)\lesssim \frac{1}{m}+\frac{1}{\sqrt{n}}.
\end{equation}
\end{corollary}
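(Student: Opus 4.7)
The plan is to apply Theorem~\ref{thm:early_stop} directly at the chosen time $t = \sqrt{n}/m$ and simplify the bound \eqref{eq:early_stop_gen} using the hypothesis $m>n$. First I would fix $T=1$ in Theorem~\ref{thm:early_stop}; since $m>n$ forces $t = \sqrt{n}/m \le 1/\sqrt{n} \le 1 = T$, the theorem's estimate is in force with a constant $C$ depending only on $\delta$, $\gamma(f^*)$ and $c$. The remainder of the argument is to substitute $t=\sqrt{n}/m$ and check that each of the three groups of terms in \eqref{eq:early_stop_gen} is dominated by $\tfrac{1}{m}+\tfrac{1}{\sqrt{n}}$.

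For the ``base'' contribution, $\tfrac{1}{m}+\tfrac{1}{mt}$ collapses immediately to $\tfrac{1}{m}+\tfrac{1}{\sqrt{n}}$, which is exactly the desired form. For the Rademacher-type term $\tfrac{1}{\sqrt{n}}\bigl(1+\sqrt{t}+\tfrac{\sqrt{mt}}{n^{1/4}}\bigr)^2$, observe that $\sqrt{mt}/n^{1/4}=1$ and $\sqrt{t} = n^{1/4}/\sqrt{m}\le 1$ (because $m>n$ gives $\sqrt{m}>\sqrt{n}\ge n^{1/4}$), so the parenthesized factor is at most an absolute constant and this piece contributes $O(1/\sqrt{n})$.

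The only slightly delicate piece is the ``correction'' term $\tfrac{t^2}{m^2}(1+mt)^2\bigl(1+\tfrac{t^2}{m^2}(t+m)^4\bigr)\bigl(1+\sqrt{t}+\tfrac{\sqrt{mt}}{n^{1/4}}\bigr)^2$. Here $t\le 1\le m$, so $(t+m)^4\lesssim m^4$ and therefore $\tfrac{t^2}{m^2}(t+m)^4 \lesssim t^2 m^2 = n$; similarly $(1+mt)^2 = (1+\sqrt{n})^2 \lesssim n$ and $\tfrac{t^2}{m^2} = n/m^4$, while the last factor was already shown to be $O(1)$. Multiplying yields a bound $\lesssim n^3/m^4$, which under $m>n$ is at most $1/n\le 1/\sqrt{n}$. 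Summing the three contributions produces $\cR(\ba_t,B_t)\lesssim \tfrac{1}{m}+\tfrac{1}{\sqrt{n}}$, as claimed.

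Since all substantive analytic work was already carried out in Theorem~\ref{thm:early_stop} and its supporting lemmas, there is no genuine obstacle here: the proof is pure bookkeeping, and the main thing to be careful about is keeping track of which factors stay bounded once $t=\sqrt{n}/m$ and $m>n$ are imposed.
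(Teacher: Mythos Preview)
Your proposal is correct and matches the paper's intended approach: the corollary is meant to follow immediately from Theorem~\ref{thm:early_stop} by substituting $t=\sqrt{n}/m$ with $T=1$, exactly the same simplification the paper performs for the empirical risk at the end of the proof of Theorem~\ref{thm:early_stop_conv}. Your term-by-term bookkeeping (in particular $\sqrt{mt}/n^{1/4}=1$, $\sqrt{t}\le 1$, and the $n^3/m^4\le 1/n$ bound on the correction term) is accurate.
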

\begin{remark}
From these results we conclude that for target functions in a certain RKHS, with high probability the gradient descent dynamics can find a solution with good generalization properties in a short time. Compared to the long-term analysis in the last section, this theorem does not require $m$ to be very large. It  works in the ``{mildly} over-parameterized'' regime. 
\end{remark}
The following Corollary provides a more detailed study of the balance between $m$, $n$ and $t$ to achieve best rates for $\cR(\ba_t,B_t)$.
\begin{corollary}
Assume $m=n^p$ for some $p\geq0$. Then, if $p\leq\frac{7}{8}$, take $t=n^{-\frac{3p}{7}}$, we have
\begin{equation}
\cR(\ba_t,B_t)\lesssim n^{-\frac{4}{7}p}.
\end{equation}
If $p>\frac{7}{8}$, take $t=n^{-p+\frac{1}{2}}$, we have
\begin{equation}
\cR(\ba_t,B_t)\lesssim n^{-\frac{1}{2}}.
\end{equation}
\end{corollary}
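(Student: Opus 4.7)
The plan is to apply Theorem~\ref{thm:early_stop} directly with $m=n^p$ and the prescribed choice of $t$, then verify by exponent-matching that every summand on the right-hand side of \eqref{eq:early_stop_gen} is dominated by the target rate. Since $m$, $t$, and every factor in \eqref{eq:early_stop_gen} are polynomial in $n$, the argument reduces to bookkeeping on exponents. Two preliminary simplifications apply in both cases: first, at the chosen $t$ we will have $m\ge t$ (so $t+m\asymp m$) and $mt\ge 1$, which lets me collapse the compound last factor of \eqref{eq:early_stop_gen} to
\begin{equation*}
\frac{t^2}{m^2}(1+mt)^2\left(1+\frac{t^2(t+m)^4}{m^2}\right) \;\asymp\; \frac{t^2}{m^2}\cdot(mt)^2\cdot(mt)^2 \;=\; m^2 t^6;
\end{equation*}
second, I will check in each regime that $(1+\sqrt{t}+\sqrt{mt}/n^{1/4})^2=O(1)$, which then folds into the implicit constant.

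For the regime $0\le p\le 7/8$ with $t=n^{-3p/7}$, the key computation is $mt=n^{4p/7}$; together with $p\le 7/8$ this gives $\sqrt{mt}/n^{1/4}=n^{2p/7-1/4}\le 1$ and $\sqrt{t}\le 1$, so the bracket factor is $O(1)$. The three linear terms $1/m$, $1/(mt)$, $1/\sqrt{n}$ evaluate to $n^{-p}$, $n^{-4p/7}$, $n^{-1/2}$, each bounded by $n^{-4p/7}$ — the binding constraint $1/2\ge 4p/7$ is exactly $p\le 7/8$. The compound term gives $m^2 t^6 = n^{2p-18p/7}=n^{-4p/7}$, matching the target.

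For the regime $p>7/8$ with $t=n^{1/2-p}$, the balance is $mt=n^{1/2}$ exactly, so $\sqrt{mt}/n^{1/4}=1$ and $\sqrt{t}=n^{1/4-p/2}\le 1$ (since $p>1/2$); the bracket is again $O(1)$. The linear terms become $n^{-p}$, $n^{-1/2}$, $n^{-1/2}$, all $\le n^{-1/2}$. The compound term gives $m^2 t^6 = n^{3-4p}\le n^{-1/2}$ precisely when $p\ge 7/8$, exactly at the regime boundary.

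The main obstacle is, frankly, careful bookkeeping rather than anything conceptual, but the content is the observation that both regime boundaries land at $p=7/8$: the threshold is simultaneously forced by the linear $1/\sqrt{n}$ floor (which cannot be improved) and by the compound $m^2 t^6$ contribution at the balanced $t$, which is why the two prescribed values of $t$ agree at $p=7/8$ (both evaluating to $n^{-3/8}$). One minor technicality is the hypothesis $t\le T$ in Theorem~\ref{thm:early_stop}: this is automatic for large $n$ whenever $p>0$ since $t\to 0$, while the edge case $p=0$ reduces to the trivial bound $\cR\lesssim 1$ consistent with $n^{-4p/7}=1$.
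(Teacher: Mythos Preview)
Your proof is correct and is essentially a verification argument: you plug in the prescribed $t$, simplify aggressively, and check that each surviving term lands at or below the target rate. The two key simplifications---bounding the bracket $(1+\sqrt{t}+\sqrt{mt}/n^{1/4})^2$ by $O(1)$ in the relevant regime, and collapsing the compound factor to $m^2t^6$ via $mt\ge 1$ and $m\ge t$---are clean and valid.

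The paper's own proof takes a somewhat different route. It parametrizes $t=n^r$ with $r\le 0$, uses the weaker simplification $(1+\sqrt{t}+\sqrt{mt}/n^{1/4})^2\lesssim 1+mt/\sqrt{n}$ (which does \emph{not} require $mt\le\sqrt{n}$), then fully expands \eqref{eq:early_stop_gen} into ten monomials $n^{-p}$, $n^{-r-p}$, $n^{-1/2}$, $n^{r+p-1}$, $n^{2r-2p}$, $n^{3r-p-1/2}$, $n^{4r}$, $n^{5r+p-1/2}$, $n^{6r+2p}$, $n^{7r+3p-1/2}$, and optimizes over $r$ by balancing the decreasing term $-r-p$ against the three increasing ones $r+p-1$, $6r+2p$, $7r+3p-\tfrac12$. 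This explains \emph{where} the choices $r=-3p/7$ and $r=1/2-p$ come from (they are the balancing points), whereas your argument simply verifies that the stated choices work. Your route is shorter because at the optimal $t$ the bracket happens to be $O(1)$, which lets you discard six of the paper's ten terms up front; the paper's route is more explanatory but heavier on bookkeeping. Both reach the same conclusion, and the threshold $p=7/8$ emerges from the same balance in either argument.
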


\begin{proof}
Let $m=n^p$ and $t=n^r$. We assume $r\leq0$, then 
\begin{equation}
\left(1+\sqrt{t}+\frac{\sqrt{mt}}{n^{1/4}}\right)^2 \lesssim 1+\frac{mt}{\sqrt{n}}. 
\end{equation}
Expand the right hand side of~\eqref{eq:early_stop_gen}, we obtain
\begin{eqnarray}
\cR(\ba_t,B_t) &\lesssim& n^{-p}+n^{-r-p}+n^{-\frac{1}{2}}+n^{r+p-1}+n^{2r-2p}+n^{3r-p-\frac{1}{2}} \nonumber \\
  && +n^{4r}+n^{5r+p-\frac{1}{2}}+n^{6r+2p}+n^{7r+3p-\frac{1}{2}}. \label{eq:expand}
\end{eqnarray}
For each $p\geq0$, we are going to find the corresponding $r$ 
for which the maximum value among all the terms at the right hand side of~\eqref{eq:expand} is minimized. When $r=-p$, we have $-r-p=0$.
Thus the second  term is larger than any other terms. Hence, we only have to consider the case when $-p\leq r\leq0$. In this interval, we only
need to compare the terms with powers $-r-p$, $r+p-1$, $6r+2p$ and $7r+3p-\frac{1}{2}$ and neglect all other terms. 
The desired results are then obtained  by comparing the second term with the other three terms. 
\end{proof}

Now we prove Theorem~\ref{thm:early_stop}.
\begin{proof}
Similar to~\eqref{eq:decomp}, we have
\begin{align}
\cR(\ba_t,B_t) &= \|f(\bx;\ba_t,B_t)-f^*(\bx)\|^2_{\rho} \nonumber \\
  &\leq 3\left( \|f(\bx;\ba_t,B_t)-f(\bx;\tilde{\ba}_t,B_t)\|^2_{\rho}+\|f(\bx;\tilde{\ba}_t,B_t)-f(\bx;\tilde{\ba}_t,B_0)\|^2_{\rho}\right. \nonumber \\
  & \left.+\mathcal{R}(\tilde{\ba}_t,B_0) \right). \label{eq:decomp2}
\end{align}
Here $\rho$ is the distribution of input data $\bx$. 
For the first two terms in~\eqref{eq:decomp2}, we have the same estimates as in~\eqref{eq:cauchy1} and~\eqref{eq:cauchy2}.
For $\cR(\tilde{\ba}_t,B_0)$, we have
\begin{align}
\cR(\tilde{\ba}_t,B_0) &= \left(\cR(\tilde{\ba}_t,B_0)-\hat{\cR}_n(\tilde{\ba}_t,B_0)\right)+\left(\hat{\cR}_n(\tilde{\ba}_t,B_0)-\hat{\cR}_n(\ba^*,B_0)\right) \nonumber \\
  &\qquad\quad +\left(\hat{\cR}_n(\ba^*,B_0)-\cR(\ba^*,B_0)\right). \label{eq:bound_tilde2}
\end{align}
The right hand side of~\eqref{eq:bound_tilde2} has one more term than~\eqref{eq:bound_gap1}, and 
additional term can be bounded as 
\begin{equation}
\cR(\tilde{\ba}_t,B_0)-\hat{\cR}_n(\tilde{\ba}_t,B_0)\leq \frac{2(2\sqrt{m}\|\tilde{\ba}_t\|+1)^2}{\sqrt{n}}\left(1+\sqrt{2\ln(\frac{2}{\delta}(\|\tilde{\ba}_t\|+\frac{1}{\|\tilde{\ba}_t\|}))}\right).
\end{equation}
Hence, for any $\delta>0$, with probability larger than $1-4\delta$, we have
\begin{align}\label{eq:est_1}
\cR(\ba_t,B_t) &\leq 3\|\ba_t-\tilde{\ba}_t\|^2\|B_t\|^2+3\|\tilde{\ba}_t\|^2\|B_t-B_0\|^2 \nonumber \\
 &\qquad +\frac{6(2\sqrt{m}\|\tilde{\ba}_t\|+1)^2}{\sqrt{n}}\left(1+\sqrt{2\ln(\frac{2}{\delta}(\|\tilde{\ba}_t\|+\frac{1}{\|\tilde{\ba}_t\|}))}\right) \nonumber \\
 &\qquad +\frac{6(2\sqrt{m}\|\ba^*\|+1)^2}{\sqrt{n}}\left(1+\sqrt{2\ln(\frac{2}{\delta}(\|\ba^*\|+\frac{1}{\|\ba^*\|}))}\right) \nonumber \\
 &\qquad +\frac{\gamma^2}{m}\left(1+\sqrt{2\ln(\frac{1}{\delta})}\right)^2+\frac{\|\ba_0-\ba^*\|^2}{2t}. 
\end{align}

Using the estimates of $\|\ba_t-\tilde{\ba}_t\|$, $\|B_t\|$, $\|\tilde{\ba}_t\|$, $\|B_t-B_0\|$, $\|\ba^*\|$ and $\|\ba_0-\ba^*\|$  derived in the previous lemmas, and assuming that $\frac{1+\sqrt{t}}{\sqrt{m}}+\frac{\sqrt{t}}{n^{1/4}}\leq1$, we obtain
\begin{align}\label{eq:est_final2}
\cR(\ba_t,B_t) &\leq C\left( \frac{1}{m}+\frac{1}{mt}+\frac{1}{\sqrt{n}}\left(1+\sqrt{t}+\frac{\sqrt{mt}}{n^{1/4}}\right)^2 \right. \nonumber \\
  &\qquad \left. +\frac{t^2}{m^2}(1+mt)^2\left(1+\frac{t^2}{m^2}(t+m)^4\right)\left(1+\sqrt{t}+\frac{\sqrt{mt}}{n^{1/4}}\right)^2\right).
\end{align}
In~\eqref{eq:est_final2}, the constant $C$ can be chosen as $C=27C_T^6\tilde{C}^2(c+1)^8(2\gamma+1)^2$.

\end{proof}

\section{Numerical experiments}

In this section, we present some numerical results to illustrate our theoretical analysis.

\subsection{Fitting random labels}
The first experiment studies the convergence of GD dynamics for over-parametrized two-layer neural networks with different initializations. We uniformly sample $\{\bx_i\}_{i=1}^{n}$ from $\SS^{d-1}$, and for each $\bx_i$ we specify a label $y_i$, which is uniformly drawn from $[-1,1]$. In the experiments, we choose $n=50,d=50$, and network width $m=10,000\gg n$.  Six initializations of different magnitudes are tested.  Figure~\ref{fig:conv} shows the training curves.
 
 We see that the GD algorithm for the neural network models converges exponentially fast for  all  initializations considered, even for the case when $\beta=m$.  This is consistent with the results of Theorem~\ref{thm: optimization}. 

 \begin{figure}[!h]
\centering
\includegraphics[width=0.4\textwidth]{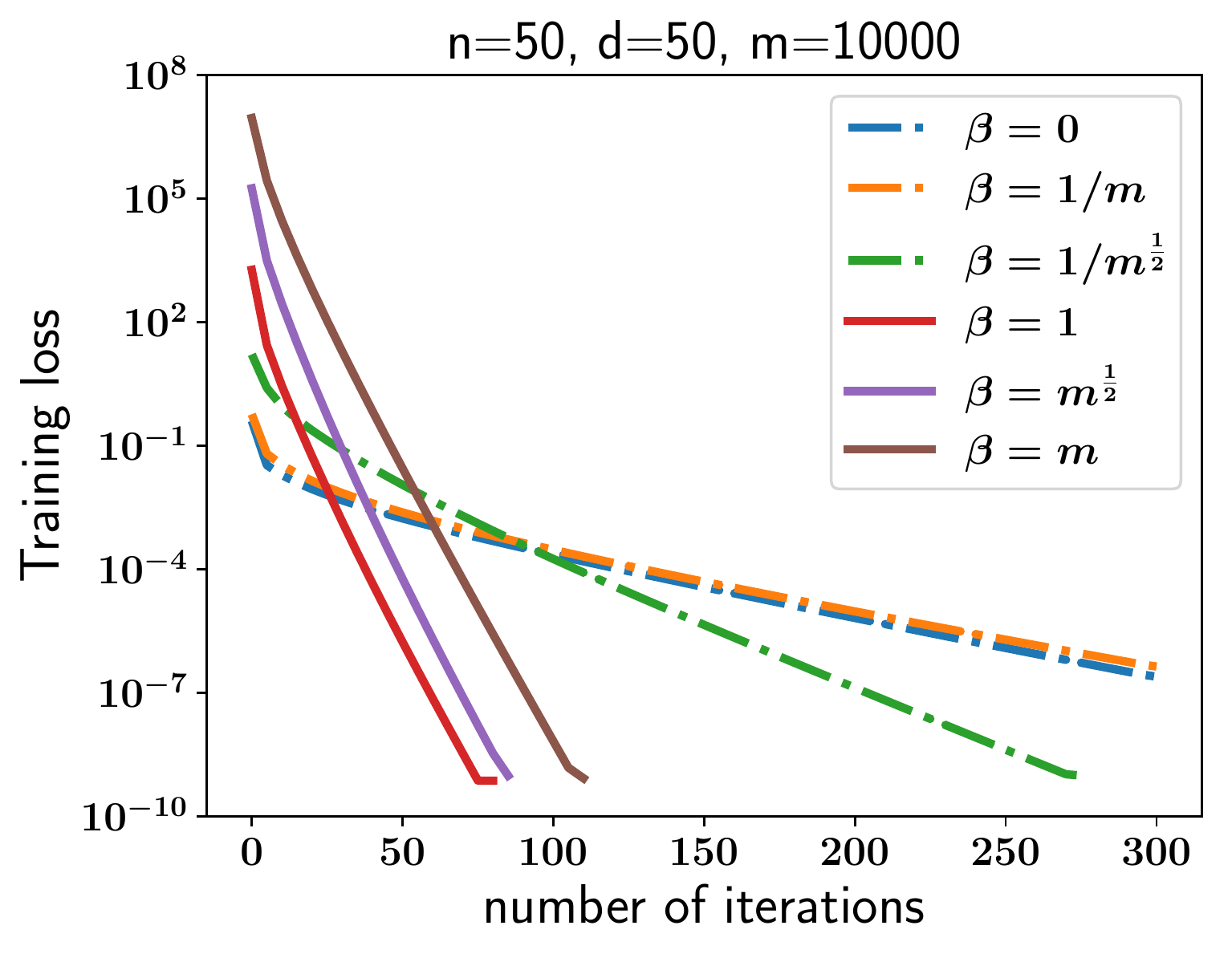}
\caption{Convergence of the GD algorithm for over-parameterized two-layer neural network models on randomly labeled data.  Here $\beta$ denotes the magnitude of the initialization of $\ba$. Different curves correspond to different initializations.  \label{fig:conv}
}
\end{figure}

\subsection{Learning the one-neuron function}
The next experiment compares the GD dynamics of two-layer neural networks and random feature models. We consider the target function $f^*(\bx):= \sigma(\be_1^T\bx)$ with $\be_1=(1,0,\cdots,0)^T\in\RR^{d}$. The training set is given by $\{(\bx_i,f^*(\bx_i))\}_{i=1}^n$, with $\{\bx_i\}_{i=1}^n$ independently drawn from $\SS^{d-1}$.

We first choose $n=50,d=10$ to build the training set, and then use the gradient descent algorithm
with learning rate $\eta=0.01$ to train two-layer neural network and random feature models. We initialize the models  using $\beta=0$. In addition, $10^4$ new samples are drawn to evaluate the test error. Figure~\ref{fig:rf_compare} shows the training and test error curves of the two models of three widths: $m=4, 50, 1000$. We  see that, when the width is very small, the GD algorithm for the random feature model  does not converge, while it does converge for the neural network model and the resulting model does generalize. This is likely due to the special target function we have chosen here.
For the intermediate width ($m=50$), the GD algorithm for both models converges, and it converges faster for the neural network model  than  for the random feature model.
The test accuracy  is slightly better for the resulting neural network model (but  not as good as for the case when $m=4$).
 When $m=1000$, the behavior of the GD algorithm for two models is  almost the same.
\begin{figure}
\centering
\includegraphics[width=0.31\textwidth]{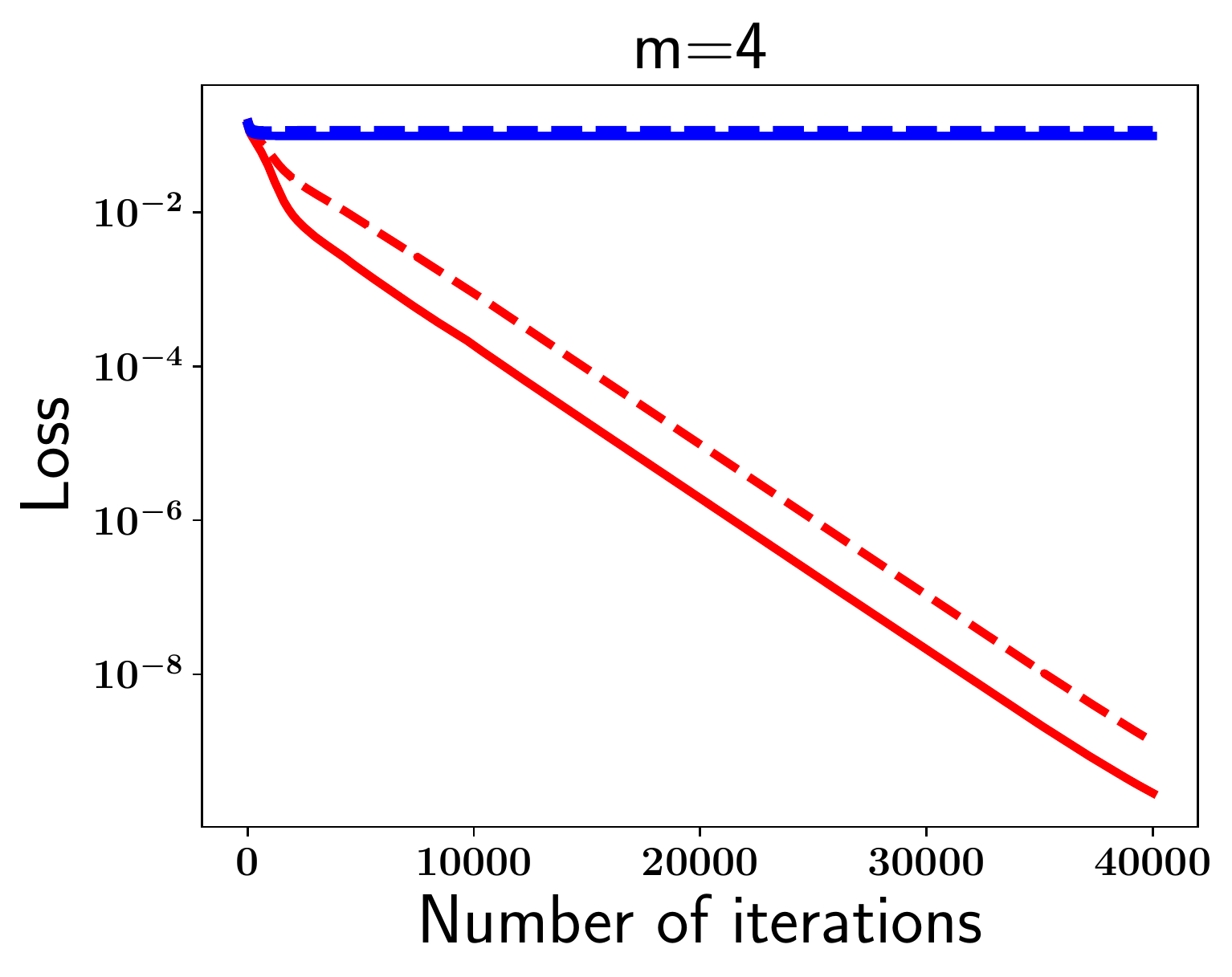}
\includegraphics[width=0.31\textwidth]{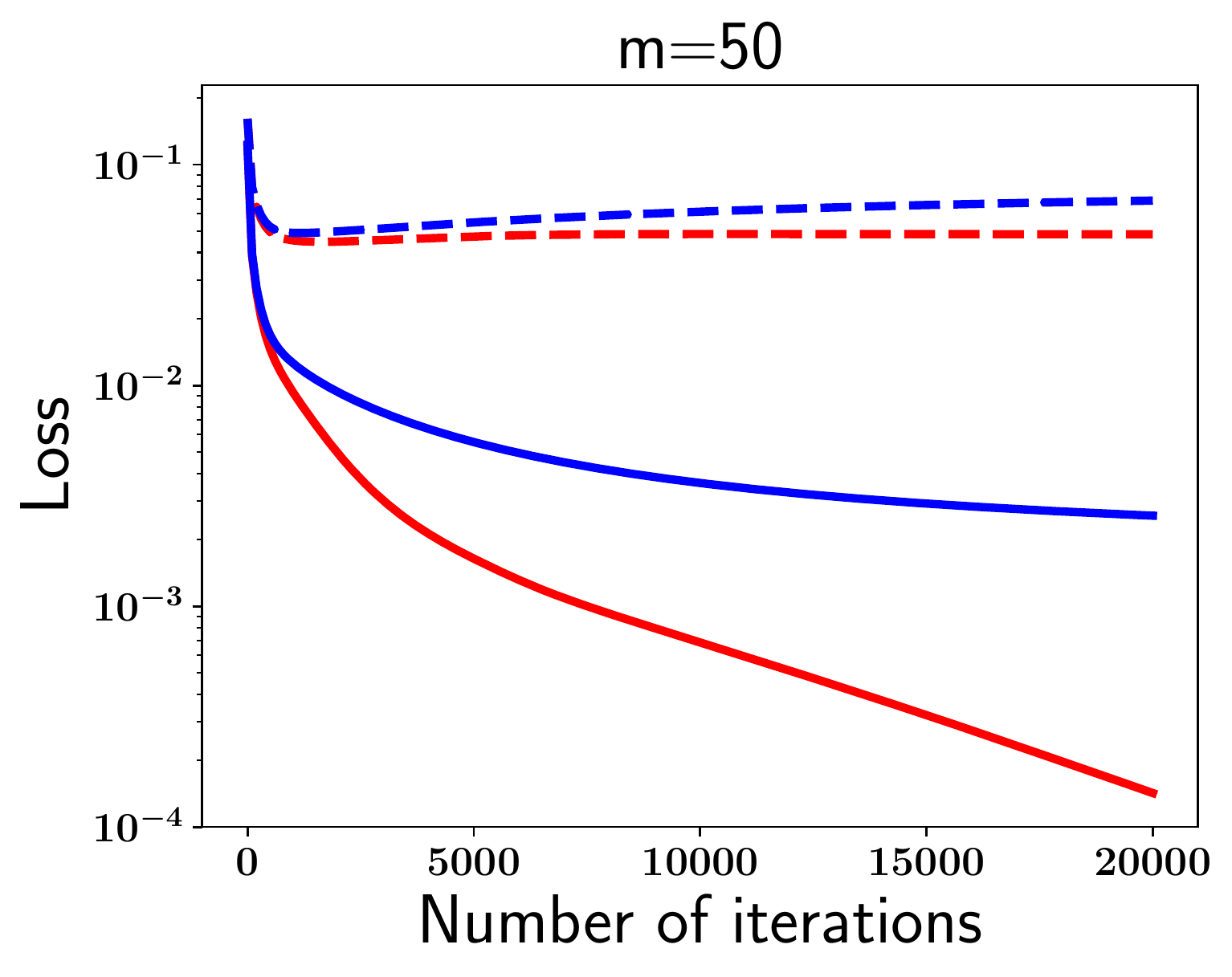}
\includegraphics[width=0.31\textwidth]{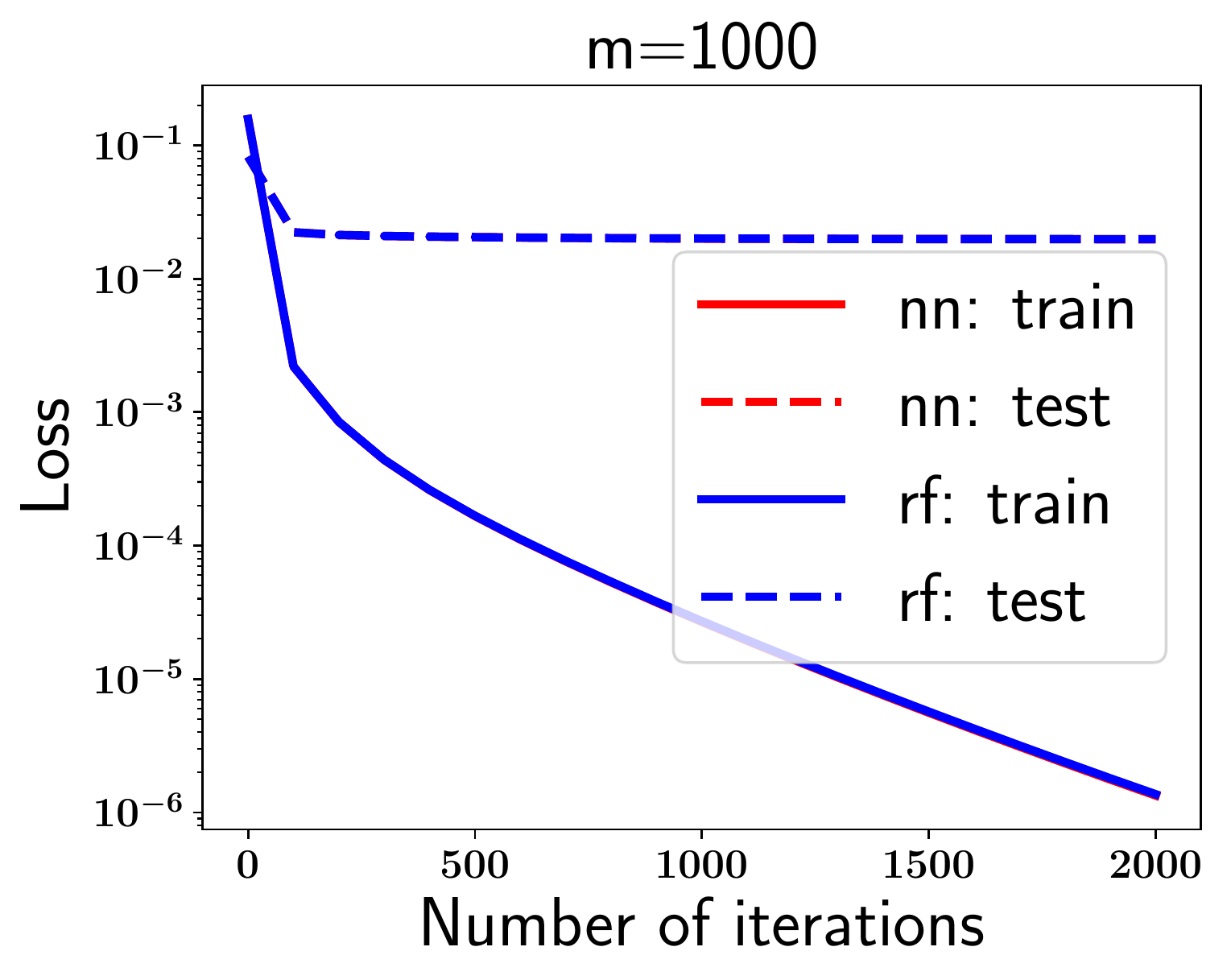}
\caption{Training and testing losses for the neural network and random feature models using the GD algorithm, starting from zero initialization of $\ba$. Left: $m=4$; Middle: $m=50$; Right: $m=1000$.
\label{fig:rf_compare}
}
\end{figure}

Finally, we study  the generalization properties of neural network models of different width. 
We train two-layer neural networks of different width until the training error is below $10^{-5}$. Then we measure the test error. We compare the test error with that of the regularized model proposed in~\cite{ma2018priori}:
 \begin{equation}\label{eqn: path-reg}
\text{minimize}_{\Theta}\,\, \hat{\cR}_n(\Theta) +  \lambda\sqrt{\frac{\ln(d)}{n}} \|\Theta\|_{\cP},
\end{equation}
where 
\[
 \|\Theta\|_{\cP}= \sum_{k=1}^m |a_k|\|\bb_k\|.
\]
 The results are showed in Figure~\ref{fig:reg_compare}.
 One sees that  when the width is small, the test error is small for both methods. 
 However, when the width becomes very large, the un-regularized neural network model does not generalized well. 
 In other words,  implicit regularization fails.

 {
The above results  are consistent with the theoretical lower bound \eqref{eqn: implicit}, which states that learning with GD suffers from the curse of dimensionality for functions in Barron space. Here the one-neuron function serves as a specific example. Intuitively, the one-neuron target function $f^*(x) = \sigma((\bw^*)^T\bx)$ only relies on the specific direction $\bw^*$.  However the basis $\{\sigma(\bw^T\bx)\}_{j=1}^m$ are uniformly drawn from $\SS^{d-1}$. In high dimension, we know $\langle \bw_j,\bw^*\rangle \approx 0$ for any $\bw_j$ uniformly drawn from $\SS^{d-1}$. Therefore, it is not surprising to see that learning with uniform features suffers from the curse of dimensionality. 
}
 
 \begin{figure}
\centering
\includegraphics[width=0.4\textwidth]{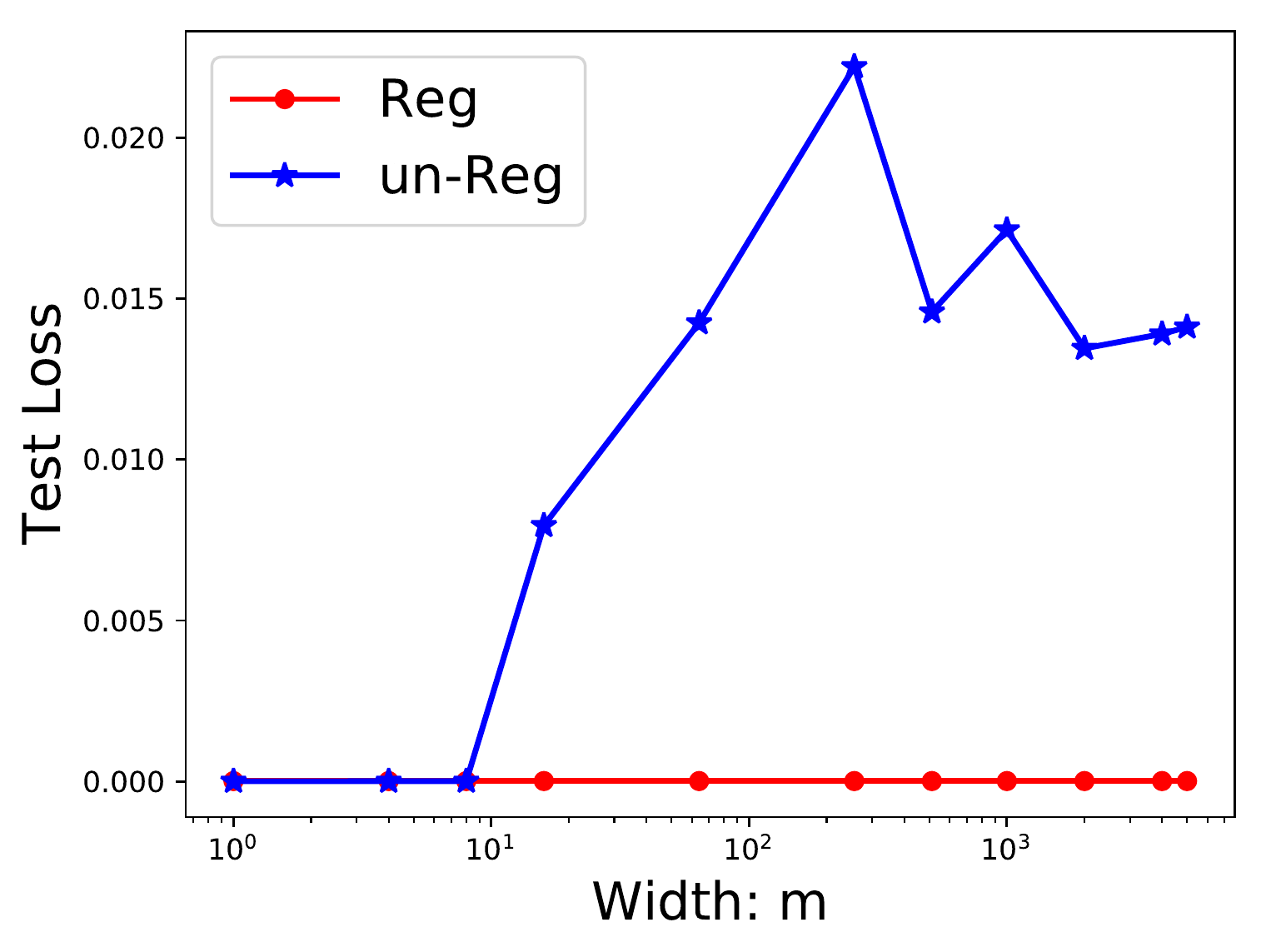}
\caption{Testing errors of two-layer neural network models with different widths, compared with the regularized neural network model. For the regularized model, we choose $\lambda=0.01$. }\label{fig:reg_compare}
\end{figure}

\section{Conclusion}

To put things into perspective, let us first recall some results from \cite{ma2018priori}.
\begin{enumerate}
\item One can define  a space of functions called the Barron space.
The Barron space is the union of all RKHS with kernels defined by
$$k(\bx, \bx') = \EE_{\bb \sim \pi} [\sigma(\bb^T \bx) \sigma(\bb^T \bx')]
$$
with respect to all probability distributions $\pi$.

\item For regularized models with a suitably crafted regularization term, optimal generalization error estimates (i.e. rates that are comparable to the Monte Carlo rates) can be established for  all  target functions in the Barron space.
\end{enumerate}

In the present paper, we have shown that for over-parametrized
two-layer neural networks without explicit regularization,  the gradient descent algorithm is sufficient for the purpose of
optimization. But to obtain dimension-independent error rates for generalization, one has to require
that the target function be in the RKHS with a kernel defined by the initialization.
In other words, given a target function in the Barron space, in order for implicit regularization to work,
one has to know beforehand the kernel function  for that target function and use that kernel function
to initialize the GD algorithm. This requirement is certainly impractical.
In the absence of such a knowledge, one should expect to encounter the curse of dimensionality for
general target functions in Barron space, as is proved in this paper.

We have also studied the case with general network width. Our results point to the same direction as 
for the over-parametrized regime although in the general case, one has to rely on early stopping to 
obtain good generalization error bounds.
Our analysis does not rule out completely the possibility that in some scaling regimes of $n, m, t$, the GD algorithm for two-layer neural network models may have better generalization properties than that of the related kernel method.

Our analysis was carried out  under special circumstances, e.g. with a particular choice of $\pi_0$ and 
a very special  domain $\SU^{d-1}$ for the input.  While it is certainly possible to extend this analysis to more general settings, we feel
that the value of such an analysis is limited since our main message is a negative one: Without explicit
regularization, the generalization properties of two-layer neural networks are likely to be no better
than that of the kernel method.

From a technical viewpoint, our analysis was facilitated greatly by the fact that 
the dynamics of the $\bb$'s is much slower than that of the $\ba$'s,
as  a consequence of the smallness of $\beta$.
As a result, the $\bb$'s are effectively frozen in the GD dynamics. 
While this is the same setup as the ones used in practice, one can also 
imagine putting out an explicit scaling factor to account for the smallness of $\beta$, e.g.
\begin{equation}
f_m(\bx, \Theta) = \frac{1}{m} \sum_{k=1}^m a_k \sigma(\bb_k^T \bx)
\end{equation}
as in  \cite{song2018mean,rotskoff2018parameters,sirignano2018mean}.
In this case, the separation of time scales is no longer valid and one can potentially obtain
a very different picture. While this is certainly an interesting avenue to pursue, so far there are
no results concerning the effect of implicit regularization in such a setting.

{\bf Acknowledgement:}
The work presented here is supported in part by a gift to Princeton University from iFlytek
and the ONR grant N00014-13-1-0338.

\bibliographystyle{plain}
\bibliography{dl_ref}

\begin{appendix}
\section{Proof of Lemma~\ref{lm:a_star}}
\label{sec: approx}
\begin{proof}
For any $B_0$, let $\ba^*(B_0)=\{a^*(\bb_k^0)/m\}_{k=1}^m$, where $a^*$ is the function defined in Assumption~\ref{assump:early_stop}. Let 
\begin{equation}
f(\bx;A^*(B_0),B_0)=\sum\limits_{k=1}^m a^*(\bb_k^0)\sigma(\bx^T\bb_k^0).
\end{equation}
Then we have $\EE_{B_0}f(\bx;A^*(B_0),B_0)=f^*(\bx)$. Now, consider 
\begin{equation}
Z(B_0)=\sqrt{\EE_{\bx} (f(\bx;A^*(B_0),B_0)-f^*(\bx))^2},
\end{equation}
then if $\tilde{B}_0$ is different from $B_0$ at only one $\bb_k^0$, we have
\begin{equation}
|Z(B_0)-Z(\tilde{B}_0)|\leq\frac{2\gamma}{m}.
\end{equation}
Hence, by McDiarmid's inequality, for any $\delta>0$, with probability no less than $1-\delta$, we have
\begin{equation}
Z(B_0)\leq\EE Z(B_0)+\gamma\sqrt{\frac{2\ln(1/\delta)}{m}}.
\end{equation}
On the other hand,
\begin{equation}
\EE Z(B_0)\leq\sqrt{\EE Z^2(B_0)}=\sqrt{\EE_{\bx} Var(f(\bx;A^*(B_0),B_0))}\leq\frac{\gamma}{\sqrt{m}}.
\end{equation}
Therefore, we have
\begin{equation}
\cR(\ba^*,B_0)=Z^2(B_0)\leq\frac{\gamma^2}{m}\left(1+\sqrt{2\ln(\frac{1}{\delta})}\right)^2.
\end{equation}
Finally, by Assumption~\ref{assump:early_stop}, $\|\ba^*\|\leq\frac{\gamma}{\sqrt{m}}$.
\end{proof}

\section{Proof of Lemma~\ref{lm:rad}}
\label{sec: rf-rad}
For any $Q>0$, let $\cF_Q=\{f( \cdot;\ba,B_0):\ \|\ba\|\leq Q\}$. 
We can bound the Rademacher complexity of $\cF_Q$ as follows.
\begin{align}
\rad(\cF_Q) &= \frac{1}{n}\EE_\xi [\sup_{\|\ba\|\leq Q}\sum\limits_{i=1}^n\xi_i \sum\limits_{k=1}^m a_k\sigma(\bx_i^T\bb_k^0)] \nonumber \\
  &\leq \frac{1}{n}\EE_\xi [\sup_{\|\ba\|\leq Q, \|\bb_k^0\|\leq1} \sum\limits_{k=1}^m a_k \sum\limits_{i=1}^n \xi_i\sigma(\bx_i^T\bb_k^0)] \nonumber \\
  &= \sup_{\|\ba\|\leq Q} \sum\limits_{k=1}^m a_k \frac{1}{n}\EE_\xi[\sup_{\|\bb_k^0\|\leq1} \sum\limits_{i=1}^n \xi_i\sigma(\bx_i^T\bb_k^0)] \nonumber \\
  &\leq  Q \sqrt{\frac{m}{n}\EE_\xi\sup_{\|\bb_k^0\|\leq1} \sum\limits_{i=1}^n \xi_i\sigma(\bx_i^T\bb_k^0)} \nonumber \\
  &= Q\sqrt{m \,\rad(\{\sigma(\bb^T\bx):\ \|\bb\|\leq1\})} \nonumber \\
  &\leq \sqrt{m}Q.
\end{align}
Next, let $\cH_Q=\{(f(\cdot;\ba,B_0)-f^*)^2:\ \|\ba\|\leq Q\}$. Since $|f^*(\bx)|\leq1$  for any $\bx$,  by the Cauchy-Schwartz 
inequality, $|f(\bx;\ba,B_0)|\leq\sqrt{m}Q$.  Hence we can bound the Rademacher complexity of $\cH_Q$ by
\begin{equation}
\rad(\cH_Q)\leq 2(\sqrt{m}Q+1)\rad(\cF_Q)\leq2mQ^2+2\sqrt{m}Q,
\end{equation}
using that $(f(\cdot;\ba,B_0)-f^*)^2$ is Lipschitz continuous with Lipschitz constant bounded by $2\sqrt{m}Q+1$. Therefore, for any $\delta>0$, with probability larger than $1-\delta$, we have
\begin{equation}\label{eq:rad}
\left|\cR(\ba,B_0)-\hat{\cR}_n(\ba,B_0)\right|\leq \frac{2mQ^2+2\sqrt{m}Q}{\sqrt{n}}+(\sqrt{m}Q+1)^2\sqrt{\frac{2\ln(1/\delta)}{n}},
\end{equation}
for any $\ba$ with $\|\ba\|\leq Q$.

Finally, for any integer $k$, let $Q_k=2^{k}$ and $\delta_k=2^{-|k|}\delta$. Then, with probability larger than 
\begin{equation}
1-\sum\limits_{k=-\infty}^\infty \delta_k \geq 1-3\delta, 
\end{equation}
we have that~\eqref{eq:rad} holds for all $Q=Q_k$. 
Given  any $\ba \in \RR^m$, we can find a $Q_k$ such that $Q_k\leq2\|\ba\|$, which means
\begin{align*}
\left|\cR(\ba,B_0)-\hat{\cR}_n(\ba,B_0)\right| &\leq \frac{8m\|\ba\|^2+4\sqrt{m}\|\ba\|}{\sqrt{n}}+(2\sqrt{m}\|\ba\|+1)^2\sqrt{\frac{2\ln(1/\delta_k)}{n}} \nonumber \\
 &\leq \frac{2(2\sqrt{m}\|\ba\|+1)^2}{\sqrt{n}}\left(1+\sqrt{2\ln(\frac{2}{\delta}(\|\ba\|+\frac{1}{\|\ba\|}))}\right).
\end{align*}
This completes the proof.

\section{Proof of Lemma~\ref{lem: init-risk}}
\label{sec: init-risk}
\begin{proof}
Define $\cF = \{h(a,\bb)=a\sigma(\bb^T\bx)\,:\, \|\bx\| \leq 1\}$. By the standard Rademacher complexity bound (see Theorem 26.5 of \cite{shalev2014understanding}), we have, with probability at least $1-\delta$,
\begin{align*}
\sup_{\|\bx\|\leq 1} |\frac{1}{m}\sum_{k=1}^m a_k\sigma(\bb_k^T\bx)-0|\leq 2\rad_m(\cF) + \beta \sqrt{\frac{\ln(1/\delta)}{m}}.
\end{align*}
Moreover, since $\phi_k(\cdot)\Def a_k \sigma(\cdot)$ is $\beta-$Lipschitz continuous, by applying the contraction property of  Rademacher complexity (see Lemma 26.9 of \cite{shalev2014understanding}) we have 
\begin{align*}
\rad_m(\cF)&=\frac{1}{m}\EE_{\varepsilon}[\sup_{\|\bx\|\leq 1}\sum_{k=1}^m \varepsilon_k a_k \sigma(\bb_k^T\bx)]\\
&\leq \frac{\beta}{m}\EE_{\varepsilon}[\sup_{\|\bx\|\leq 1}\sum_{k=1}^m \varepsilon_k \bb_k^T\bx] \\
&\leq \frac{\beta}{\sqrt{m}},
\end{align*}
where the last inequality follows from the Lemma 26.10 of \cite{shalev2014understanding}.
Thus with probability $1-\delta$, we have that for any $\|\bx\|=1$, 
\[
    |f(\bx;\Theta_0)| = m |\frac{1}{m}\sum_{k=1}^m a_k\sigma(\bb_k^T\bx)| \leq \sqrt{m}\beta(2+\sqrt{\ln(1/\delta)}).
\]
Thus $\hcR(\Theta_0)\leq \frac{1}{2n}\sum_{i=1}^n(1+|f(\bx_i;\Theta_0)|)^2\leq \frac{1}{2}(1+\sqrt{m}\beta(2+\sqrt{\ln(1/\delta)}))^2$.
\end{proof}

\section{Proof of Lemma~\ref{lem: gram-init}}
\label{sec: gram-init}
\begin{proof}
For a given $\varepsilon\geq 0$, define events
\begin{align*}
S^a_{i,j} &=\{\Theta_0: |G^a_{i,j}(\Theta_0)-\frac{1}{n}k^a(\bx_i,\bx_j)|\leq \varepsilon/n\}\\
S^b_{i,j} &=\{\Theta_0: |G^b_{i,j}(\Theta_0)-\frac{1}{n}k^b(\bx_i,\bx_j)|\leq \varepsilon/n\}.
\end{align*}
 Hoeffding's inequality gives us that 
\[
\PP[S^a_{i,j}]\geq 1- e^{-2m\varepsilon^2}, \quad \PP[S^b_{i,j}]\geq 1- e^{-2m\varepsilon^2}.
\]
Thus with probability at least $(1-e^{-2m\varepsilon^2})^{2n^2}\geq 1-2n^2e^{-2m\varepsilon^2}$, we have 
\[
    \max\{\|G^a-K^a\|_F, \|G^b-K^b\|_F\} \leq \varepsilon.
\]
Using Weyl's Theorem, we have
\begin{align*}
\lambda_{\min}(G(\Theta_0)) &\geq \lambda_{\min}(G^a) + \beta^2\lambda_{\min}(G^b)\\
&\geq \lambda^a_n - \|G^a-K^a\|_F + \beta^2\left( \lambda^b_n - \|G^b-K^b\|_F\right) \\
&\geq \lambda^a_n +\beta^2 \lambda^b_n - (1+\beta^2)\varepsilon.
\end{align*}
Taking $\varepsilon=\lambda_n/4$, we complete the proof.
\end{proof} 

\end{appendix}
\end{document}